\documentclass{article}

\usepackage{algorithm}
\usepackage{algorithmic}

\usepackage[preprint]{neurips_2026} %

\PassOptionsToPackage{capitalize,noabbrev}{cleveref}
\PassOptionsToPackage{inline}{enumitem}
\usepackage{ushio}
\DeclareMathOperator{\probability}{\mathbb{P}}
\renewcommand{\P}[2][]{\probability_{#1} \left( {#2} \right)}
\DeclareMathOperator{\expectation}{\mathbb{E}}
\newcommand{\E}[2][]{\expectation_{#1} \left[ {#2} \right]}
\DeclareMathOperator{\variance}{\mathrm{Var}}
\newcommand{\Var}[2][]{\variance_{#1} \left[ {#2} \right]}
\DeclareMathOperator{\covariance}{\mathrm{Cov}}
\newcommand{\Cov}[2][]{\covariance_{#1} \left[ {#2} \right]}
\DeclareMathOperator{\indicator}{\mathds{1}}
\newcommand{\I}[2][]{\indicator_{#1} \left[ {#2} \right]}

\newcommand{\fpr}{\mathrm{FPR}}
\newcommand{\fnr}{\mathrm{FNR}}
\newcommand{\err}{\mathrm{Err}}

\newcommand{\ber}{\mathrm{BER}}
\newcommand{\auc}{\mathrm{AUC}}
\newcommand{\besterr}{\err^*}
\newcommand{\bestber}{\ber^*}
\newcommand{\bestauc}{\auc^*}

\newcommand{\bestberhat}{\widehat{\bestber}}
\newcommand{\bestauchat}{\widehat{\bestauc}}
\newcommand{\phimin}{\phi^{(1)}}
\newcommand{\phimax}{\phi^{(2)}}
\DeclareMathOperator{\clip}{\mathrm{clip}}
\newcommand{\thetahat}[1][\tau_{n}]{\widehat{\theta}_{#1}}
\newcommand{\thetatilde}[1][\tau_{n}]{\tilde{\theta}_{#1}}

\crefname{equation}{}{}
\Crefname{equation}{}{}

\crefname{enumi}{}{}
\Crefname{enumi}{}{}

\crefname{appsec}{Appendix}{Appendices}
\Crefname{appsec}{Appendix}{Appendices}

\mathtoolsset{showonlyrefs}
\usepackage{makecell}

\usepackage{pifont}

\let\oldFootnote\footnote
\newcommand\nextToken\relax
\renewcommand\footnote[1]{%
    \oldFootnote{#1}\futurelet\nextToken\isFootnote}
\newcommand\isFootnote{%
    \ifx\footnote\nextToken\textsuperscript{,}\fi}

\usepackage{import}

\usepackage[utf8]{inputenc} %
\usepackage[T1]{fontenc}    %
\usepackage{hyperref}       %
\usepackage{url}            %
\usepackage{booktabs}       %
\usepackage{amsfonts}       %
\usepackage{nicefrac}       %
\usepackage{microtype}      %
\usepackage{xcolor}         %
\usepackage{wrapfig}        %
\usepackage{caption}        %

\title{Bayes-Optimal BER and AUC: \\Estimation and Evaluation of Estimators}

\author{%
  Ryota Ushio$^{1,2}$ \\
  \And
  Takashi Ishida$^{2,1}$ \\
  \And
  Masashi Sugiyama$^{2,1}$ \\
  \AND
  \vspace{-2.4em}
  ~\\
  $^1$ Graduate School of Frontier Sciences, The University of Tokyo, Tokyo, Japan \\
  $^2$ RIKEN AIP, Tokyo, Japan \\
  \texttt{\{ushio@ms.,ishida@,sugi@\}k.u-tokyo.ac.jp}
}

\begin{document}

\maketitle

\begin{abstract}
  A fundamental quantity in machine learning is
  the \emph{optimal performance}
  achievable by any model on a given task.
  Estimating this quantity 
  allows us to 
  distinguish 
  the irreducible part of the error
  from a deficiency of the model,
  telling us how much room for improvement remains.
  Recent work has shown that the Bayes error, or equivalently
  the optimal accuracy, can be estimated from soft labels in
  binary classification.
  However, accuracy is often a poor summary of performance
  in settings with severe class imbalance or noisy annotations,
  where metrics such as the balanced error rate (BER)
  and the area under the ROC curve (AUC) are more appropriate.
  We address this gap with two complementary contributions.
  \textbf{(i) Estimation.}
  We propose soft-label-based estimators for the optimal BER and AUC.
  We first consider the clean setting in which true soft labels
  and the class prior are known,
  and then extend the estimators to 
  a more realistic setting in which 
  the class prior is unknown 
  and 
  the observed soft labels are \emph{corrupted} 
  by an unknown order-preserving transformation,
  possibly followed by additive noise.
  In the latter setting, 
  we approximately recover the clean soft labels
  via isotonic regression
  with auxiliary hard labels,
  estimate the class prior with a clipped mean of the hard labels,
  and derive finite-sample error bounds for the resulting plug-in estimators.
  \textbf{(ii) Evaluation.}
  Since the optimum is unobservable on real datasets,
  evaluating any such estimator is itself nontrivial.
  We extend the FeeBee framework,
  originally proposed for evaluating Bayes-error estimators,
  to the optimal BER and AUC.
  The resulting procedure provides practical evaluation scores
  without requiring knowledge of the optimum,
  and applies to any estimator of the optimal BER or AUC,
  not only our proposed ones.
  Experiments on synthetic and real-world datasets validate
  both the estimators and the evaluation procedure.
\end{abstract}

\section{Introduction}
\label{sec:introduction}

An integral part of machine learning workflows is
evaluating models using performance metrics.
Of natural interest, then, is
the \emph{Bayes-optimal performance}, i.e., the best value achievable
by any model on a given task.
This quantity represents the irreducible component of the error
and helps us
separate a deficiency of the current model
from the inherent difficulty of the task.
In practice,
when the gap between current and optimal performance is small,
it tells us that not much room is left for improvement
and that
further training of the current model or
training of larger models
is unlikely to help.
This is particularly important
in the era of large-scale machine learning models,
which often require a huge amount of computational resources to train
and have a significant environmental impact
\citep{strubell2020energy,luccioni2023estimating}.
Moreover, recent work suggests that estimation 
of the Bayes-optimal performance can be useful for
detecting data contamination \citep{ishida2026capbencher}.

We focus on binary classification in this paper.
Given a binary classifier
$h: \mathcal{X} \to \{0, 1\}$,
the most commonly used metric
to measure its performance
is the \emph{error rate}
$\err(h) \coloneqq \P{h(x) \neq y}$
or the \emph{accuracy} $1 - \err(h)$,
where $(x, y)$ is drawn from an unknown joint distribution
$\mathbb{P}$
over 
the space $\mathcal{X}$
of input features
and the set $\set{0, 1}$ of class labels.
When the class posterior,
described by
$\eta(x) \coloneqq \P{y = 1 \mid x}$,
is genuinely uncertain,
even a perfect predictor cannot
avoid misclassification
due to the stochasticity of the labels.
The lowest achievable error rate
$\besterr \coloneqq \inf_{h: \mathcal{X} \to \{0, 1\}} \err(h)$,
also known as the \emph{Bayes error},
therefore quantifies the inherent label uncertainty
and is determined by 
the underlying data distribution.

Estimation of the Bayes error 
has a long history in the literature.
While most existing methods use input-label pairs
$\set{(x_i, y_i)}_{i=1}^n$
\citep{fukunaga1975k,devijver1985multiclass,berisha2014empirically,moon2018ensemble,noshad2019learning,theisen2021evaluating},
a recent line of work suggests another approach
that estimates the Bayes error from \emph{soft labels}
$\eta_i = \eta(x_i)$, $i = 1, \dots, n$
\citep{ishida2023is,jeong2023demystifying,ushio2025practical}.
A soft label $\eta_{i}$ represents class membership
as a probability value in $[0, 1]$
unlike a binary \emph{hard label} $y_i \in \{0, 1\}$.
This soft-label-based approach
does not need access to the input features $x_i$,
which makes it 
suitable for privacy-sensitive applications
and also allows it to bypass the curse of dimensionality.
Also, it does not make the distributional assumptions
often imposed by the methods based on input-label pairs,
such as 
lower/upper bounds on the probability density function
or Lipschitzness of the density.
Early work typically assumes that the true soft labels $\eta_i$
or their high-quality estimates are available,
e.g., by averaging many independent annotations per instance
\citep{ishida2023is,jeong2024data},
which might be unrealistic in practice.
More recent work \citep{ushio2025practical}
relaxed this assumption
by allowing the soft labels to be corrupted
by unknown order-preserving transformations.

However, the error rate is
a poor summary of performance
under class imbalance or label noise, 
which are both common in practice,
and other metrics such as
the \emph{balanced error rate (BER)}
and the \emph{area under the ROC curve (AUC)}
are often preferred in such settings
\citep{ling1998data,gu2009evaluation,menon2013statistical,menon2015learning,van2015average,menon2018learning,charoenphakdee2019symmetric}.
Concretely,
the BER of a classifier
$h: \mathcal{X} \to \set{0, 1}$
is the average of the in-class error rates,
i.e.,
$
  \ber(h) \coloneqq (\fpr(h) + \fnr(h)) / 2
$,
where
$\fpr(h) \coloneqq \P{h(x) = 1 \mid y = 0}$
and
$\fnr(h) \coloneqq \P{h(x) = 0 \mid y = 1}$
are the false positive and false negative rates
\citep{menon2015learning}.
The AUC of a score-based classifier
$f: \mathcal{X} \to \R$
that predicts $h(x) = \I{f(x) \geq 0}$
is the frequency that 
a random positive instance
$x_+ \sim \P{x \mid y = 1}$
is ranked
above a random negative instance
$x_- \sim \P{x \mid y = 0}$,
with ties counted at $1/2$:
$
  \auc(f)
  \coloneqq 
  \E{
    \I{f(x_+) > f(x_-)} 
    + 
    \frac{1}{2} \I{f(x_+) = f(x_-)}
  }
$,
where $\I{\cdot}$ denotes the indicator function
\citep{clemenccon2008ranking,menon2015learning}.
Under severe class imbalance,
a trivial classifier that always predicts the majority class
can achieve a near-zero error rate
while learning nothing from data,
and hence the error rate is not helpful.
BER remains informative even under class imbalance
by averaging the per-class error rates \citep{menon2013statistical}.
AUC is also a preferred metric under class imbalance, as
it summarizes the trade-off between true- and false-positive rates over all decision thresholds
\citep{ling1998data,fawcett2006introduction,gu2009evaluation}.
Moreover, BER and AUC are popular metrics
under label noise
due to their robustness;
e.g., their optimizers remain unchanged under noisy labels
\citep{menon2015learning,van2015average}.

\paragraph{Our contribution}

Despite the practical importance of BER and AUC,
the estimation of their Bayes-optimal values,
i.e.,
$\bestber \coloneqq \inf_{h: \mathcal{X} \to \set{0, 1}} \ber(h)$
and
$\bestauc \coloneqq \sup_{f: \mathcal{X} \to \R} \auc(f)$,
remains underexplored.
The first part of our contribution 
fills this gap by proposing soft-label-based estimators
of $\bestber$ and $\bestauc$.
We start with the clean setting
similar to the work by \citet{ishida2023is} for the Bayes error,
in which the true soft labels $\eta_i$
are available (\cref{sec:clean}).
Then, inspired by \citet{ushio2025practical},
we extend the estimators to a more realistic setting
where the soft labels are only observed through
some unknown order-preserving corruption,
possibly followed by additive noise (\cref{sec:corrupted}).
Unlike the Bayes error, the optimal BER and AUC depend on
the class prior
$\theta \coloneqq \P{y = 1}$ ($0 < \theta < 1$).
We first assume that $\theta$ is known for simplicity and then handle the unknown-prior case.

A separate question is how to evaluate
an estimator of $\bestber$ or $\bestauc$
on \emph{real datasets}.
This is challenging because,
for real-world datasets,
the underlying data distribution is unknown and
does not reveal the ground-truth optimal values.
For the Bayes error,
\citet{renggli2021evaluating} proposed \emph{FeeBee},
which sidesteps this issue
with a simple idea of injecting label noise.
In the second part of our contribution,
we extend this idea to the optimal BER and AUC
(\cref{sec:evaluation}).
Specifically, we first derive 
a simple closed-form relationship
between the original optima
and their noise-injected counterparts.
Then, we propose scores for evaluating estimators of $\bestber$ and $\bestauc$
based on the derived relationship.
The procedure can be applied to any
estimator of $\bestber$ or $\bestauc$,
not only the ones we propose.
Our proposed method allows us to 
evaluate Bayes-optimal BER/AUC estimators
on real-world datasets
without knowledge of the true optima.

\section{Estimation from clean soft labels}
\label{sec:clean}

In this section,
we first show that 
the optimal BER and AUC
can be expressed as expectations of 
certain functions of the class posterior $\eta(x)$
(\cref{lem:bestber,lem:bestauc}).
This reveals that, for each of BER and AUC,
there are two different natural choices of
unbiased estimators of the optimum.
We discuss data-dependent criteria (or \emph{discriminants})
 for choosing between them
in \cref{sec:min-vs-max}.
Next, in \cref{sec:alg-bestauc},
we present an efficient $O(n \log n)$ time algorithm 
for computing the proposed optimal AUC estimator and
AUC discriminant, which would otherwise take $\Theta(n^{2})$ time.
Finally in \cref{sec:unknown-class-prior},
we discuss the case where the class prior $\theta$ is unknown.

\begin{lemma}[store=lem:bestber]
  \label{lem:bestber}
  The optimal BER can be expressed as
  $\bestber = \E{ \phimin_{\ber}(\eta(x)) } = \E{ \phimax_{\ber}(\eta(x)) }$, where
  $\phimin_{\ber}(z) = \frac{1}{2} \min \set{\frac{z}{\theta}, \frac{1 - z}{1 - \theta}}$
  and
  $\phimax_{\ber}(z) = 1 - \frac{1}{2} \max \set{\frac{z}{\theta}, \frac{1 - z}{1 - \theta}}$.
\end{lemma}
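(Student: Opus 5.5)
We will write the BER of an arbitrary classifier $h$ as a single expectation over $x$ and then minimize it pointwise. By Bayes' rule, $\P{h(x)=1 \mid y=0} = \E{\I{h(x)=1}(1-\eta(x))}/(1-\theta)$ and $\P{h(x)=0 \mid y=1} = \E{\I{h(x)=0}\eta(x)}/\theta$, since $\P{y=1}=\theta$ and $\P{y=1\mid x}=\eta(x)$. Hence
\begin{equation}
  \ber(h) = \frac{1}{2}\E{ \I{h(x)=1}\frac{1-\eta(x)}{1-\theta} + \I{h(x)=0}\frac{\eta(x)}{\theta} }.
\end{equation}

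For each fixed $x$, the integrand equals one of the two nonnegative numbers $\frac{\eta(x)}{\theta}$ and $\frac{1-\eta(x)}{1-\theta}$, and which one is selected by $h$. It is therefore bounded below by $\min\set{\frac{\eta(x)}{\theta},\frac{1-\eta(x)}{1-\theta}}$. Taking expectations gives $\ber(h)\ge \E{\phimin_{\ber}(\eta(x))}$ for every $h$.

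The bound is attained by the measurable classifier $h^*(x)=\I{\eta(x)/\theta \ge (1-\eta(x))/(1-\theta)}$, which is equivalent to $h^*(x)=\I{\eta(x)\ge\theta}$. Measurability is inherited from $\eta$, and at ties either choice gives the same value. Thus the infimum is achieved and equals $\E{\phimin_{\ber}(\eta(x))}$, which is the first identity.

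For the second identity we use $\min\set{a,b}=a+b-\max\set{a,b}$ with $a=\eta(x)/\theta$ and $b=(1-\eta(x))/(1-\theta)$. Since $\E{\eta(x)}=\theta$, we get $\E{a}=\E{b}=1$. Therefore
\begin{equation}
  \E{\tfrac{1}{2}\min\set{a,b}} = \tfrac{1}{2}\left(2 - \E{\max\set{a,b}}\right) = \E{\phimax_{\ber}(\eta(x))}.
\end{equation}

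No step here is a serious obstacle. The only point needing care is the rewriting of the conditional probabilities via the tower property, which uses $0<\theta<1$ so the divisions are valid. Beyond that, the pointwise-minimization argument only requires $h^*$ to be measurable, which holds as noted above.
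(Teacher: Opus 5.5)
Your proposal is correct and follows essentially the same route as the paper. Both arguments rewrite $\ber(h)$ via Bayes' rule as a single expectation, minimize the integrand pointwise with $h(x)=\I{\eta(x)\geq\theta}$, and obtain the max form from $\min\{a,b\}+\max\{a,b\}=a+b$ together with $\E{a}=\E{b}=1$; your remarks on measurability and ties only make explicit what the paper leaves implicit.
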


\begin{lemma}[store=lem:bestauc]
  \label{lem:bestauc}
  Let $x'$ be an i.i.d.\ copy of $x$.
  Then, 
  the optimal
  AUC is given by
  $\bestauc = \E{\phimin_{\auc}(\eta(x), \eta(x'))} = \E{\phimax_{\auc}(\eta(x), \eta(x'))}$,
  where
  $
    \phimin_{\auc}(z_1, z_2) = 
    1 - \frac{1}{2 \theta (1 - \theta)} \min \set{
      z_1 (1 - z_2), z_2 (1 - z_1)
    }
  $ and $
    \phimax_{\auc}(z_1, z_2) = 
    \frac{1}{2 \theta (1 - \theta)} 
    \max \set{z_1 (1 - z_2), z_2 (1 - z_1)}
  $.
\end{lemma}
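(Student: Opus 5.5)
The plan is to write $\auc(f)$ as an expectation over two i.i.d.\ unconditional draws, bound the integrand pointwise, and show the bound is attained by ranking with $\eta$ itself.

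\textbf{Step 1 (reduce to unconditional draws).} Let $(x,y)$ and $(x',y')$ be i.i.d.\ from $\mathbb{P}$. Conditioning on $y=1,y'=0$ gives
\[
\auc(f) = \frac{1}{\theta(1-\theta)} \E{ \I{y=1,y'=0}\bigl(\I{f(x)>f(x')}+\tfrac12\I{f(x)=f(x')}\bigr) }.
\]
Taking the conditional expectation of $\I{y=1,y'=0}$ given $(x,x')$ yields $\eta(x)(1-\eta(x'))$. Hence
\[
\auc(f)=\frac{1}{\theta(1-\theta)}\E{\eta(x)(1-\eta(x'))\bigl(\I{f(x)>f(x')}+\tfrac12\I{f(x)=f(x')}\bigr)}.
\]

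\textbf{Step 2 (symmetrize and bound pointwise).} Swap the roles of $x$ and $x'$ and average the two expressions. Write $a=\eta(x)(1-\eta(x'))$ and $b=\eta(x')(1-\eta(x))$. The integrand becomes
\[
\tfrac12\bigl(a\,\I{f(x)>f(x')}+b\,\I{f(x')>f(x)}+\tfrac{a+b}{2}\I{f(x)=f(x')}\bigr).
\]
For each pair, exactly one of the three indicators equals $1$. The weight multiplying it is therefore $a$, $b$, or $(a+b)/2$, and each of these is at most $\max\{a,b\}$. So the integrand is bounded by $\frac12\max\{a,b\}$, and
\[
\auc(f)\le \E{\phimax_{\auc}(\eta(x),\eta(x'))}.
\]

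\textbf{Step 3 (attainment).} Take $f=\eta$. Note that $a-b=\eta(x)-\eta(x')$. So $f(x)>f(x')$ exactly when $a>b$, and ties in $f$ occur exactly when $a=b$, in which case $(a+b)/2=\max\{a,b\}$. The bound of Step 2 therefore holds with equality pointwise. This gives $\bestauc=\E{\phimax_{\auc}(\eta(x),\eta(x'))}$, with the supremum attained.

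\textbf{Step 4 (the $\phimin$ form).} Since $\max\{a,b\}=a+b-\min\{a,b\}$, it suffices to show $\E{a+b}=2\theta(1-\theta)$. This holds because
\[
\E{a}=\E{\eta(x)}\,\E{1-\eta(x')}=\theta(1-\theta),
\]
and similarly $\E{b}=\theta(1-\theta)$. Substituting into the $\phimax$ form gives
\[
\E{\phimax_{\auc}}=1-\frac{1}{2\theta(1-\theta)}\E{\min\{a,b\}}=\E{\phimin_{\auc}}.
\]

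The main obstacle is the careful tie accounting in Steps 2 and 3. In particular, one must justify that the tie weight $(a+b)/2$ never exceeds $\max\{a,b\}$, which is immediate. One must also check that ties in $\eta$ are harmless, since they coincide exactly with $a=b$. Measurability of $f$ can be assumed throughout.
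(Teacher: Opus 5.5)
Your proof is correct and follows essentially the same route as the paper: rewrite $\auc(f)$ via Bayes' theorem as an expectation over i.i.d.\ draws weighted by $\eta(x)(1-\eta(x'))$, symmetrize by swapping $x$ and $x'$, note that $f=\eta$ maximizes the integrand pointwise, and then derive the $\phimin_{\auc}$ form from $\E{a+b}=2\theta(1-\theta)$. Your explicit pointwise bound by $\max\{a,b\}$ and your tie check via $a-b=\eta(x)-\eta(x')$ spell out the maximization step that the paper states in a single line.
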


\begin{remark}
  \citet[Example 1]{clemenccon2008ranking}
  mentioned 
  the former expression of
  $\bestauc = \expectation [\phimin_\auc(\eta(x), \eta(x'))]$
  but without a proof.
  We state the full proof for the sake of completeness
  (\cref{sec:proof-clean:bestber-bestauc}). 
\end{remark}

\subsection{Discriminants for BER and AUC}
\label{sec:proof-clean:discriminants}

First, we recall the definitions of the discriminants for BER and AUC.
Let $\zeta \coloneqq \eta(x) - \theta$ 
be the centered version of the class posterior.
The discriminant for BER is defined as
$\Delta_{\ber} \coloneqq (1 - 2 \theta) \E{ \zeta \abs{\zeta}}$.
For AUC, we define
$\Delta_{\auc} \coloneqq (1 - 2 \theta) \E{ \zeta \abs{\zeta - \zeta'}}$,
where
$\zeta' \coloneqq \eta(x') - \theta$
is an i.i.d.\ copy of $\zeta$.

Now we prove \cref{thm:min-vs-max:necessary-and-sufficient},
which we restate below:

\getkeytheorem{thm:min-vs-max:necessary-and-sufficient}

We first prove the BER part:

\begin{proposition}
  \label{prop:ber-min-vs-max:necessary-and-sufficient}
  For all $n \in \N$,
  we have
  $\Var{\widehat{\bestber_1}} \leq \Var{\widehat{\bestber_2}}$
  if and only if $\Delta_\ber \geq 0$.
\end{proposition}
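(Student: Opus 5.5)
The estimators here are presumably the sample means $\widehat{\bestber_1} = \frac{1}{n}\sum_{i=1}^n \phimin_\ber(\eta_i)$ and $\widehat{\bestber_2} = \frac{1}{n}\sum_{i=1}^n \phimax_\ber(\eta_i)$ with i.i.d.\ $\eta_i = \eta(x_i)$. Both are unbiased for $\bestber$ by \cref{lem:bestber}, so
\[
\Var{\widehat{\bestber_k}} = \frac{1}{n}\Var{\phi^{(k)}_\ber(\eta(x))} = \frac{1}{n}\Bigl(\E{\phi^{(k)}_\ber(\eta(x))^2} - (\bestber)^2\Bigr).
\]
Because the factor $1/n$ and the squared mean are shared, the claim for every $n$ reduces to a single comparison of second moments:
\[
\E{(\phimax_\ber)^2} - \E{(\phimin_\ber)^2} \;=\; c\,\Delta_\ber \quad\text{for some constant } c>0 .
\]

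To establish this, write $a(z) = z/\theta$ and $b(z) = (1-z)/(1-\theta)$. Then $\phimin_\ber = \tfrac12\min\{a,b\}$ and $\phimax_\ber = 1 - \tfrac12\max\{a,b\}$. Using $\min = \tfrac{a+b}{2} - \tfrac{|a-b|}{2}$ and $\max = \tfrac{a+b}{2} + \tfrac{|a-b|}{2}$, set $s = (a+b)/4$ and $d = |a-b|/4$. This gives $\phimin_\ber = s - d$ and $\phimax_\ber = 1 - s - d$. Expanding the squares, the $d^2$ terms cancel, and
\[
(\phimax_\ber)^2 - (\phimin_\ber)^2 = (1-2s)(1-2d).
\]
Now I express everything through $\zeta = \eta - \theta$:
\[
a - b = \frac{z}{\theta} - \frac{1-z}{1-\theta} = \frac{\zeta}{\theta(1-\theta)}, \qquad a + b = 2 + \frac{(1-2\theta)\,\zeta}{\theta(1-\theta)}.
\]
Hence $1 - 2s = -\frac{(1-2\theta)\zeta}{2\theta(1-\theta)}$ and $1 - 2d = 1 - \frac{|\zeta|}{2\theta(1-\theta)}$. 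Taking expectations and using $\E{\zeta} = 0$, the linear term drops out, leaving
\[
\E{(\phimax_\ber)^2 - (\phimin_\ber)^2} = \frac{(1-2\theta)\,\E{\zeta|\zeta|}}{4\theta^2(1-\theta)^2} = \frac{\Delta_\ber}{4\theta^2(1-\theta)^2}.
\]
The constant is strictly positive since $0<\theta<1$. Therefore $\Var{\widehat{\bestber_1}} \le \Var{\widehat{\bestber_2}}$ if and only if $\Delta_\ber \ge 0$, for every $n$.

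The only delicate point is the algebra of the min/max decomposition and checking the sign conventions, in particular that $\E{\zeta} = \E{\eta(x)} - \theta = 0$ removes the linear term. I would verify the final identity on a two-point distribution as a sanity check. No other obstacle is expected; the argument is exact and involves no asymptotics.
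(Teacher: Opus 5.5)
Your proof is correct and follows essentially the same route as the paper. Both arguments split $\min$ and $\max$ into a half-sum, which is linear in $\zeta$, plus or minus a half absolute difference, which is proportional to $|\zeta|$; the variance gap then reduces to the cross term $(1-2\theta)\E{\zeta|\zeta|}$ and equals $\Delta_\ber/(4\theta^2(1-\theta)^2 n)$ exactly for every $n$. The only cosmetic difference is that the paper obtains the cross term directly as $\Var{M}-\Var{m} = \Cov{\eta(1-\theta)+\theta(1-\eta), |\eta-\theta|}$, whereas you use the common mean $\bestber$ to compare second moments via a difference of squares.
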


\cref{prop:ber-min-vs-max:necessary-and-sufficient}
follows immediately from the next lemma.

\begin{lemma}
  \label{lem:ber-min-vs-max:variance-difference}
  The variance gap between
  the two minimum BER estimators
  $\widehat{\bestber_1}$ and $\widehat{\bestber_2}$
  is given by
  \begin{equation}
    \Var{ \widehat{\bestber_2} }
    -
    \Var{ \widehat{\bestber_1} }
    =
    \frac{1 - 2 \theta}{4 \theta^2 (1 - \theta)^2 n}
    \E{\zeta |\zeta|}
    \text{.}
  \end{equation}
\end{lemma}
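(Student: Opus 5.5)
The estimators are presumably the sample means $\widehat{\bestber_k} = \frac{1}{n}\sum_{i=1}^n \phi^{(k)}_{\ber}(\eta_i)$ for $k=1,2$, with $\eta_1,\dots,\eta_n$ i.i.d. Then $\Var{\widehat{\bestber_k}} = \frac{1}{n}\Var{\phi^{(k)}_{\ber}(\eta(x))}$. Both functions have the same mean $\bestber$ by \cref{lem:bestber}, so the variance gap reduces to a difference of second moments:
\begin{equation}
  n\left(\Var{\widehat{\bestber_2}} - \Var{\widehat{\bestber_1}}\right)
  = \E{\phimax_{\ber}(\eta)^2 - \phimin_{\ber}(\eta)^2}
  = \E{(\phimax_{\ber} - \phimin_{\ber})(\phimax_{\ber} + \phimin_{\ber})} .
\end{equation}
The plan is to simplify the difference and the sum pointwise, writing $a = z/\theta$ and $b = (1-z)/(1-\theta)$.

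Since $\min\{a,b\}+\max\{a,b\} = a+b$, the difference is $\phimax_{\ber}-\phimin_{\ber} = 1 - \frac{1}{2}(a+b)$. Since $\max\{a,b\}-\min\{a,b\} = \abs{a-b}$, the sum is $\phimax_{\ber}+\phimin_{\ber} = 1 - \frac{1}{2}\abs{a-b}$. Now express everything through $\zeta = z-\theta$. We have
\begin{equation}
  a - b = \frac{z(1-\theta) - (1-z)\theta}{\theta(1-\theta)} = \frac{\zeta}{\theta(1-\theta)},
  \qquad
  a + b = 2 + \zeta\left(\frac{1}{\theta} - \frac{1}{1-\theta}\right) = 2 + \frac{(1-2\theta)\zeta}{\theta(1-\theta)} .
\end{equation}
This gives
\begin{equation}
  \phimax_{\ber}-\phimin_{\ber} = -\frac{(1-2\theta)\zeta}{2\theta(1-\theta)},
  \qquad
  \phimax_{\ber}+\phimin_{\ber} = 1 - \frac{\abs{\zeta}}{2\theta(1-\theta)} .
\end{equation}

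Multiplying these and taking expectations yields
\begin{equation}
  -\frac{(1-2\theta)}{2\theta(1-\theta)}\E{\zeta} + \frac{(1-2\theta)}{4\theta^2(1-\theta)^2}\E{\zeta\abs{\zeta}} .
\end{equation}
The key step is that $\E{\zeta} = \E{\eta(x)} - \theta = 0$, because $\E{\eta(x)} = \P{y=1} = \theta$. The first term therefore vanishes, and dividing by $n$ gives the claimed formula. \cref{prop:ber-min-vs-max:necessary-and-sufficient} then follows, since $\Delta_\ber$ carries the same sign as this gap.

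There is no real obstacle; the main thing to get right is the algebraic bookkeeping. Using $\E{\zeta}=0$ is what removes the linear term. One must also confirm the estimator definitions as plain sample means, since the earlier section is not shown. Any independence structure across samples is unnecessary beyond i.i.d., because the estimators are simple averages.
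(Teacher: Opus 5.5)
Your proof is correct and follows essentially the same route as the paper: both write the min and max as half-sum $\pm$ half-absolute-difference, reduce the gap to a multiple of $\Cov{\zeta, |\zeta|}$, and use $\E{\zeta}=0$ to finish. The only cosmetic difference is that you use the equal means from \cref{lem:bestber} to write the gap as $\E{(\phimax_{\ber}-\phimin_{\ber})(\phimax_{\ber}+\phimin_{\ber})}$, whereas the paper applies $\Var{M}-\Var{m}=\Cov{M+m, M-m}$ directly to $M$ and $m$.
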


\begin{proof}
  Throughout the proof, we abuse the notation
  and write $\eta = \eta(x)$ for brevity.
  Let 
  \begin{equation}
    M \coloneqq \max \set{\eta (1 - \theta), \theta (1 - \eta)}
    \text{,}
    \quad
    m \coloneqq \min \set{\eta (1 - \theta), \theta (1 - \eta)}
    \text{.}
  \end{equation}
  Noting that
  \begin{equation}
    \begin{aligned}
      M
      &=
      \frac{1}{2}
      \Big[
        \set{\eta (1 - \theta) + \theta (1 - \eta)}
        +
        |\eta - \theta|
      \Big]
      \text{,}
      \\ 
      m
      &=
      \frac{1}{2}
      \Big[
        \set{\eta (1 - \theta) + \theta (1 - \eta)}
        -
        |\eta - \theta|
      \Big]
      \text{,}
    \end{aligned}
  \end{equation}
  it holds that
  \begin{equation}
    \begin{aligned}
      \Var{M} - \Var{m}
      &=
      \Cov{
        \eta (1 - \theta) + \theta (1 - \eta),
        |\eta - \theta|
      }
      \\ &=
      \Cov{
        (1 - 2 \theta) \eta + \theta,
        |\eta - \theta|
      }
      \\ &=
      (1 - 2 \theta) 
      \Cov{
        \eta - \theta,
        |\eta - \theta|
      }
      \\ &=
      (1 - 2 \theta) 
      \Cov{
        \zeta,
        |\zeta|
      }
      \text{.}
    \end{aligned}
  \end{equation}
  Since $\zeta$ is a zero-mean random variable, we have
  $\Cov{\zeta, |\zeta|} = \E{\zeta |\zeta|}$.
  Also observe that 
  $\phimin_{\ber}(\eta) = \frac{1}{2 \theta (1 - \theta)} m$ 
  and
  $\phimax_{\ber}(\eta) = 1 - \frac{1}{2 \theta (1 - \theta)} M$.
  Hence, we obtain
  \begin{equation}
    \begin{aligned}
      \Var{ \widehat{\bestber_2} }
      -
      \Var{ \widehat{\bestber_1} }
      &=
      \frac{1}{n} \paren{
        \Var{\phimax_{\ber}(\eta)}
        -
        \Var{\phimin_{\ber}(\eta)}
      }
      \\ &=
      \frac{1}{4 \theta^2 (1 - \theta)^2 n}
      \paren{
        \Var{M} - \Var{m}
      }
      \\ &=
      \frac{1 - 2 \theta}{4 \theta^2 (1 - \theta)^2 n}
      \E{\zeta |\zeta|}
      \text{.}
    \end{aligned}
  \end{equation}
\end{proof}

Next, we prove the AUC part of \cref{thm:min-vs-max:necessary-and-sufficient}
(restated as \cref{prop:auc-min-vs-max:necessary-and-sufficient}).
Recall that
$\zeta \coloneqq \eta(x) - \theta$,
$\zeta' \coloneqq \eta(x') - \theta$.
$\zeta$ and $\zeta'$ are i.i.d.\ random variables with mean $0$.

\begin{lemma}
  \label{lem:auc-min-vs-max:variance-difference}
  The difference between the variances of
  the two maximum AUC estimators
  $\widehat{\bestauc_1}$ and $\widehat{\bestauc_2}$
  is given by
  \begin{equation}
  \Var{\widehat{\bestauc_2}} - \Var{\widehat{\bestauc_1}}
  =
  \frac{1 - 2 \theta}{\theta^2 (1 - \theta)^2 n} 
  \E{\zeta |\zeta - \zeta'|}
  -
  \frac{1}{\theta^2 (1 - \theta)^2 n (n - 1)} 
  \E{\zeta \zeta' |\zeta - \zeta'|}
  \text{.}
  \end{equation}
\end{lemma}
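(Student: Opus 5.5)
The plan is to treat both estimators as order-two U-statistics and use the standard Hoeffding variance formula. I assume, as the notation suggests, that $\widehat{\bestauc_k} = \frac{1}{n(n-1)} \sum_{i \neq j} \phi^{(k)}_{\auc}(\eta_i, \eta_j)$ with $k=1,2$. Both kernels are symmetric in their two arguments. For a symmetric kernel $h$ of order two,
\begin{equation}
  \Var{U_h} = \frac{2}{n(n-1)} \Big[ 2(n-2)\, \xi_1(h) + \xi_2(h) \Big],
\end{equation}
where $\xi_1(h) = \Var{\E{h(x,x') \mid x}}$ and $\xi_2(h) = \Var{h(x,x')}$. This formula is linear in $(\xi_1, \xi_2)$. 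So it suffices to compute $\xi_1(\phimax_\auc) - \xi_1(\phimin_\auc)$ and $\xi_2(\phimax_\auc) - \xi_2(\phimin_\auc)$ and substitute.

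Next I reduce the kernels exactly as in \cref{lem:ber-min-vs-max:variance-difference}. Write $a = z_1(1-z_2)$, $b = z_2(1-z_1)$ and $c = \frac{1}{2\theta(1-\theta)}$. Then $a - b = z_1 - z_2$, so
\begin{equation}
  M := \max\{a,b\} = \tfrac12(a+b+|z_1-z_2|), \qquad m := \min\{a,b\} = \tfrac12(a+b-|z_1-z_2|).
\end{equation}
We have $\phimax_\auc = cM$ and $\phimin_\auc = 1 - cm$, and additive constants and signs do not affect variances. Applying $\Var{X+Y} - \Var{X-Y} = 4\Cov{X,Y}$ gives
\begin{equation}
  \xi_2(\phimax_\auc) - \xi_2(\phimin_\auc) = c^2 \Cov{a+b, |z_1 - z_2|}.
\end{equation}
The same argument applied after conditioning on $x$ gives
\begin{equation}
  \xi_1(\phimax_\auc) - \xi_1(\phimin_\auc) = c^2 \Cov{\E{a+b \mid x}, \E{|z_1-z_2| \mid x}}.
\end{equation}

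Now I substitute $z_1 = \theta + \zeta$ and $z_2 = \theta + \zeta'$. This gives $a + b = 2\theta(1-\theta) + (1-2\theta)(\zeta + \zeta') - 2\zeta\zeta'$ and $|z_1 - z_2| = |\zeta - \zeta'|$. Since $\zeta'$ has mean zero and is independent of $x$, we get $\E{a+b \mid x} = \mathrm{const} + (1-2\theta)\zeta$. Because $\zeta$ is centered, this yields
\begin{equation}
  \xi_1\text{-gap} = c^2(1-2\theta)\E{\zeta |\zeta-\zeta'|}.
\end{equation}
For the $\xi_2$-gap I use the symmetry $\E{\zeta'|\zeta-\zeta'|} = \E{\zeta|\zeta-\zeta'|}$ and the identity $\E{\zeta\zeta'} = 0$. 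This gives
\begin{equation}
  \xi_2\text{-gap} = c^2\big[2(1-2\theta)\E{\zeta|\zeta-\zeta'|} - 2\E{\zeta\zeta'|\zeta-\zeta'|}\big].
\end{equation}

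Plugging both gaps into the Hoeffding formula gives
\begin{equation}
  \frac{2c^2}{n(n-1)}\Big[ 2(n-1)(1-2\theta)\E{\zeta|\zeta-\zeta'|} - 2\E{\zeta\zeta'|\zeta-\zeta'|} \Big].
\end{equation}
Since $4c^2 = \frac{1}{\theta^2(1-\theta)^2}$, this is exactly the claimed expression. The only point needing care is the bookkeeping of the Hoeffding decomposition. In particular, the $2(n-2)\xi_1$ and $\xi_2$ contributions must combine into the clean $\frac{1}{n}$ and $\frac{1}{n(n-1)}$ terms; the check above shows they do. I do not expect any genuine obstacle beyond this.
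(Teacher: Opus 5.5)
Your proposal is correct and follows essentially the same route as the paper. Both use Hoeffding's order-two U-statistic variance formula and write $\max/\min$ as half-sum $\pm$ half-absolute-difference, so each variance gap ($\xi_1$ and $\xi_2$ levels) becomes a covariance, and both then substitute $\eta=\theta+\zeta$ with $\mathbb{E}[\zeta\zeta']=0$ to reduce to $\mathbb{E}[\zeta|\zeta-\zeta'|]$ and $\mathbb{E}[\zeta\zeta'|\zeta-\zeta'|]$. Your final bookkeeping ($2(n-2)+2=2(n-1)$ and $4c^2=1/(\theta^2(1-\theta)^2)$) matches the paper's combination step.
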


\begin{proof}
  Throughout the proof, we abuse the notation
  and write $\eta = \eta(x)$ and $\eta' = \eta(x')$ for brevity.
  Let 
  \begin{equation}
    M(z, w) \coloneqq \max \set{z (1 - w), w (1 - z)}
    \text{,}
    \quad
    m(z, w) \coloneqq \min \set{z (1 - w), w (1 - z)}
    \text{,}
  \end{equation}
  and define
  \begin{equation}
    M_1 \coloneqq \E{M(\eta, \eta') \mid \eta}
    \text{,}
    \quad
    m_1 \coloneqq \E{m(\eta, \eta') \mid \eta}
    \text{.}
  \end{equation}
  By the general theory of U-statistics \citep{hoeffding1948class}, we have
  \begin{equation}
    \begin{gathered}
      \Var{
        \frac{2}{n (n - 1)} \sum_{i < j} M(\eta_{i}, \eta_{j})
      } 
      =
      \frac{2}{n (n - 1)} 
      \paren{
        2 (n - 2) \Var{M_1} 
        + 
        \Var{M(\eta, \eta')}
      }
      \text{,}
      \\
      \Var{
        \frac{2}{n (n - 1)} \sum_{i < j} m(\eta_{i}, \eta_{j})
      } 
      =
      \frac{2}{n (n - 1)} 
      \paren{
        2 (n - 2) \Var{m_1} 
        + 
        \Var{m(\eta, \eta')}
      }
      \text{.}
    \end{gathered}
  \end{equation}

  Noting that
  \begin{equation}
    \begin{aligned}
      M(\eta, \eta')
      &=
      \frac{1}{2}
      \Big[
        \set{\eta (1 - \eta') + \eta' (1 - \eta)}
        +
        |\eta - \eta'|
      \Big]
      \text{,}
      \\ 
      m(\eta, \eta')
      &=
      \frac{1}{2}
      \Big[
        \set{\eta (1 - \eta') + \eta' (1 - \eta)}
        -
        |\eta - \eta'|
      \Big]
      \text{,}
    \end{aligned}
  \end{equation}
  we have 
  \begin{equation}
    \begin{aligned}
      M_1
      &=
      \frac{1}{2}
      \Big[
        \set{\eta (1 - \theta) + \theta (1 - \eta)}
        +
        \E{|\eta - \eta'| \mid \eta}
      \Big]
      \text{,}
      \\ 
      m_1
      &=
      \frac{1}{2}
      \Big[
        \set{\eta (1 - \theta) + \theta (1 - \eta)}
        -
        \E{|\eta - \eta'| \mid \eta}
      \Big]
      \text{.}
    \end{aligned}
  \end{equation}
  Therefore, it holds that
  \begin{equation}
    \begin{aligned}
      \Var{M_1} - \Var{m_1}
      &=
      \Cov{
        \eta (1 - \theta) + \theta (1 - \eta),
        \E{|\eta - \eta'| \mid \eta}
      }
      \\ &=
      \Cov{
        (1 - 2 \theta) \eta + \theta,
        \E{|\eta - \eta'| \mid \eta}
      }
      \\ &=
      (1 - 2 \theta) 
      \Cov{
        \eta,
        \E{|\eta - \eta'| \mid \eta}
      }
      \\ &=
      (1 - 2 \theta) 
      \Cov{
        \eta,
        |\eta - \eta'|
      }
      \\ &=
      (1 - 2 \theta) 
      \Cov{
        \zeta,
        |\zeta - \zeta'|
      }
    \end{aligned}
  \end{equation}
  and
  \begin{equation}
    \begin{aligned}
      \Var{M(\eta, \eta')} - \Var{m(\eta, \eta')}
      &=
      \Cov{
        \eta (1 - \eta') + \eta' (1 - \eta),
        |\eta - \eta'|
      }
      \\ &=
      \Cov{
        (1 - 2\theta) (\zeta + \zeta')
        - 2 \zeta \zeta'
        + 2 \theta (1 - \theta),
        |\zeta - \zeta'|
      }
      \\ &=
      (1 - 2\theta) 
      \Cov{
        (\zeta + \zeta'),
        |\zeta - \zeta'|
      }
      - 2
      \Cov{
        \zeta \zeta',
        |\zeta - \zeta'|
      }
      \\ &=
      2 (1 - 2\theta) 
      \Cov{
        \zeta,
        |\zeta - \zeta'|
      }
      - 2
      \Cov{
        \zeta \zeta',
        |\zeta - \zeta'|
      }
      \text{.}
    \end{aligned}  
  \end{equation}

  Combining these results, we obtain
  \begin{equation}
    \begin{aligned}
      &
      \Var{\widehat{\bestauc_2}} - \Var{\widehat{\bestauc_1}}
      \\ ={} &
      \frac{1}{4 \theta^2 (1 - \theta)^2}
      \paren{
        \Var{
          \frac{2}{n (n - 1)} \sum_{i < j} M(\eta_{i}, \eta_{j})
        } 
        - 
        \Var{
          \frac{2}{n (n - 1)} \sum_{i < j} m(\eta_{i}, \eta_{j})
        }
      }
      \\ ={} &
      \frac{1}{4 \theta^2 (1 - \theta)^2}
      \frac{2}{n (n - 1)} 
      \Big [
        2 (n - 2) 
        \cdot
        (1 - 2 \theta) 
        \Cov{
          \zeta,
          |\zeta - \zeta'|
        }
        + 
        2 (1 - 2\theta) 
        \Cov{
          \zeta,
          |\zeta - \zeta'|
        }
        - 2
        \Cov{
          \zeta \zeta',
          |\zeta - \zeta'|
        }
      \Big ]
      \\ ={} &
      \frac{1}{\theta^2 (1 - \theta)^2}
      \frac{1}{n (n - 1)} 
      \Big [
        (n - 1) 
        \cdot
        (1 - 2 \theta) 
        \Cov{
          \zeta,
          |\zeta - \zeta'|
        }
        -
        \Cov{
          \zeta \zeta',
          |\zeta - \zeta'|
        }
      \Big ]
      \text{.}
    \end{aligned}
  \end{equation}
  Finally, 
  since both $\zeta$ and $\zeta \zeta'$ have zero means,
  we have
  $\Cov{\zeta, |\zeta - \zeta'|} = \E{\zeta |\zeta - \zeta'|}$
  and
  $\Cov{\zeta \zeta', |\zeta - \zeta'|} = \E{\zeta \zeta' |\zeta - \zeta'|}$.
  Now the lemma has been proved.
\end{proof}

\begin{remark}[Asymptotic variance difference]
  Let
  $\sigma_{i, n}^2 \coloneqq \Var{\sqrt{n} \paren{\widehat{\bestauc_i} - \bestauc}}$.
  \cref{lem:auc-min-vs-max:variance-difference} implies that\footnote{
    Also, it can be shown that 
    $\sqrt{n} \paren{\widehat{\bestauc_i} - \bestauc}
    \xrightarrow{d} \mathcal{N}(0, \lim_{n \to \infty} \sigma_{i, n}^2)$
    for each $i = 1, 2$
    \citep{hoeffding1948class}.
  }
  \begin{equation}
    \sigma_{2, n}^2 - \sigma_{1, n}^2
    \xrightarrow{n \to \infty}
    \frac{1 - 2 \theta}{\theta^2 (1 - \theta)^2} 
    \E{\zeta |\zeta - \zeta'|}
    \text{.}
  \end{equation}
\end{remark}

\begin{lemma}
  \label{lem:auc-min-vs-max:covariance2} 
  We have
  \begin{equation}
  \E{\zeta \zeta' |\zeta - \zeta'|} < 0
  \end{equation}
  unless $\zeta$ is a.s.\ constant.
\end{lemma}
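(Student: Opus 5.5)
A clean route is to write $|\zeta-\zeta'|$ through a layer-cake (Fubini) identity and reduce the expectation to an integral of squares. Since $\zeta,\zeta'$ take values in $[-1,1]$, for any reals $a,b$ in that range we have
\begin{equation}
  |a-b| = \int_{-1}^{1} \big( \I{a \le t < b} + \I{b \le t < a} \big) \, dt
  = \int_{-1}^{1} \big( \I{a\le t} - \I{b \le t} \big)^2 \, dt \text{.}
\end{equation}
Expanding the square gives $|a-b| = \int_{-1}^{1} \big( \I{a \le t} + \I{b\le t} - 2\I{a\le t}\I{b\le t} \big)\,dt$.

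Substitute $a=\zeta$, $b=\zeta'$, multiply by $\zeta\zeta'$, take expectations, and exchange expectation and integral (everything is bounded). Independence and $\E{\zeta}=0$ kill the terms that involve only one indicator. For example, $\E{\zeta\zeta' \I{\zeta \le t}} = \E{\zeta \I{\zeta\le t}}\E{\zeta'} = 0$. Writing $g(t) \coloneqq \E{\zeta \I{\zeta \le t}}$, the cross term factorizes as $\E{\zeta\zeta'\I{\zeta\le t}\I{\zeta'\le t}} = g(t)^2$. Hence
\begin{equation}
  \E{\zeta\zeta'|\zeta-\zeta'|} = -2\int_{-1}^{1} g(t)^2 \, dt \le 0 \text{.}
\end{equation}

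It remains to show strictness when $\zeta$ is not a.s.\ constant; this is the only delicate step. Since $\E{\zeta}=0$, non-constancy means $\P{\zeta<0}>0$. Then $g(t)<0$ for every $t \in [t_0, 0)$ for suitable $t_0<0$: pick $t_0<0$ with $\P{\zeta \le t_0}>0$, which exists by continuity of measure. For $t \in [t_0,0)$ we then have $g(t) \le \E{\zeta\I{\zeta\le t_0}} < 0$, because $\zeta\I{t_0<\zeta\le t}\le 0$ and $\zeta\I{\zeta\le t_0}\le t_0\I{\zeta\le t_0}$. Thus $g^2$ is strictly positive on an interval of positive length, and the integral is strictly positive.

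Conversely, if $\zeta$ is a.s.\ constant it equals $0$ and the expectation vanishes, which is consistent with the statement. I expect the main care to lie only in justifying the Fubini exchange, which is immediate from boundedness, and in the strictness argument above.
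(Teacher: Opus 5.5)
Your proof is correct, but it takes a different route from the paper.

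\textbf{The paper's route.} The paper argues pointwise. It uses $|\zeta-\zeta'| \le |\zeta|+|\zeta'|$, with equality exactly when $\zeta\zeta' \le 0$. Hence $\zeta\zeta'|\zeta-\zeta'| \le \zeta\zeta'(|\zeta|+|\zeta'|)$, with strict inequality on $\{\zeta\zeta'>0\}$. The right-hand side has expectation $\E{\zeta'}\E{\zeta|\zeta|}+\E{\zeta}\E{\zeta'|\zeta'|}=0$. Strictness then follows at once, because $\P{\zeta\zeta'>0}=\P{\zeta>0}^2+\P{\zeta<0}^2>0$ for non-constant $\zeta$.

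\textbf{Your route.} Your layer-cake representation gives the exact identity $\E{\zeta\zeta'|\zeta-\zeta'|}=-2\int_{-1}^{1} g(t)^2\,dt$. This writes the quantity as minus a squared $L^2$ norm, which is the negative-definiteness of the kernel $|a-b|$ on mean-zero laws, familiar from energy distance. The sign is then structurally evident, and you also get a quantitative expression in terms of the truncated first moments $g(t)$.

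\textbf{Trade-offs.} Your approach needs an extra step for strictness: locating an interval on which $g\neq 0$. You carry this out correctly via $g(t)\le t_0\,\P{\zeta\le t_0}<0$ for $t\in[t_0,0)$. In the paper, strictness comes for free from the positive probability of $\{\zeta\zeta'>0\}$.

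The paper's pointwise comparison also implicitly yields an exact formula, $-2\,\E{|\zeta\zeta'|\min\{|\zeta|,|\zeta'|\}\,\I{\zeta\zeta'>0}}$, though it is not stated. So the paper's argument is shorter and fully elementary, while yours is more structural and shows why the sign must be negative.
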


\begin{proof}
  If $\zeta \zeta' > 0$, we have
  $|\zeta - \zeta'| < |\zeta| + |\zeta'|$
  and hence
  $\zeta \zeta' |\zeta - \zeta'| < \zeta \zeta' (|\zeta| + |\zeta'|)$.
  On the other hand, if $\zeta \zeta' \leq 0$,
  it holds that
  $|\zeta - \zeta'| = |\zeta| + |\zeta'|$
  and thus
  $\zeta \zeta' |\zeta - \zeta'| = \zeta \zeta' (|\zeta| + |\zeta'|)$.
  Therefore, we always have
  $\zeta \zeta' |\zeta - \zeta'| \leq \zeta \zeta' (|\zeta| + |\zeta'|)$
  and the equality holds only when $\zeta \zeta' \leq 0$.

  Taking the expectation on both sides, we obtain
  \begin{equation}
    \begin{aligned}
    \E{\zeta \zeta' |\zeta - \zeta'|} 
    &\leq 
    \E{\zeta \zeta' (|\zeta| + |\zeta'|)}
    \\ &=
    \E{\zeta'} \E{\zeta |\zeta|} + \E{\zeta} \E{\zeta' |\zeta'|}
    \\ &=
    0
    \text{.}
    \end{aligned}
  \end{equation}
  Here, we used the independence of $\zeta$ and $\zeta'$ as well as their zero means.

  To see that the strict inequality holds, note that
  \begin{equation}
    \begin{aligned}
      \P{
        \zeta \zeta' |\zeta - \zeta'|
        \neq
        \zeta \zeta' (|\zeta| + |\zeta'|)
      }
      &=
      \P{\zeta \zeta' > 0}
      \\ &=
      \P{
        (\zeta > 0 \text{ and } \zeta' > 0)
        \text{ or }
        (\zeta < 0 \text{ and } \zeta' < 0)
      }
      \\ &=
      \P{\zeta > 0}^2
      +
      \P{\zeta < 0}^2
      \text{.}
    \end{aligned}
  \end{equation}
  We must have 
  $\P{\zeta > 0}^2 + \P{\zeta < 0}^2 > 0$,
  since otherwise $\zeta$ would be a.s.\ constant.
  Therefore, we have a non-zero probability that
  $\zeta \zeta' |\zeta - \zeta'|$ disagrees with
  $\zeta \zeta' \paren{|\zeta| + |\zeta'|}$,
  which implies the strict inequality.
\end{proof}

We can now prove the desired result,
i.e., the AUC part of 
\cref{thm:min-vs-max:necessary-and-sufficient},
by combining
\cref{lem:auc-min-vs-max:variance-difference}
and
\cref{lem:auc-min-vs-max:covariance2}.

\begin{proposition}
  \label{prop:auc-min-vs-max:necessary-and-sufficient}
  Assume that the random variable $\eta(x)$ is not a.s.\ constant,
  and define 
  $\Delta_\auc \coloneqq (1 - 2 \theta) \E{\zeta |\zeta - \zeta'|}$.
  Then,
  $\Var{\widehat{\bestauc_1}} < \Var{\widehat{\bestauc_2}}$
  holds for all $n \geq 2$
  if and only if $\Delta_\auc \geq 0$.
\end{proposition}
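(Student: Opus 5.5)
Combining \cref{lem:auc-min-vs-max:variance-difference} and \cref{lem:auc-min-vs-max:covariance2} settles both directions. By \cref{lem:auc-min-vs-max:variance-difference}, for every $n \geq 2$,
\begin{equation}
  \Var{\widehat{\bestauc_2}} - \Var{\widehat{\bestauc_1}}
  =
  \frac{1}{\theta^2 (1 - \theta)^2 n}
  \paren{
    \Delta_\auc
    -
    \frac{1}{n - 1} \E{\zeta \zeta' |\zeta - \zeta'|}
  }
  \text{,}
\end{equation}
since $\Delta_\auc = (1 - 2\theta)\E{\zeta |\zeta - \zeta'|}$. Because $\eta(x)$ is not a.s.\ constant, neither is $\zeta = \eta(x) - \theta$. \cref{lem:auc-min-vs-max:covariance2} therefore gives $c \coloneqq -\E{\zeta \zeta' |\zeta - \zeta'|} > 0$, and the bracket equals $\Delta_\auc + c/(n-1)$. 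The positive prefactor $\theta^{-2}(1-\theta)^{-2} n^{-1}$ does not affect the sign.

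For sufficiency, assume $\Delta_\auc \geq 0$. Then $\Delta_\auc + c/(n-1) > 0$ for every $n \geq 2$, because $c > 0$. This is exactly the strict inequality $\Var{\widehat{\bestauc_1}} < \Var{\widehat{\bestauc_2}}$ for all $n \geq 2$.

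For necessity, argue by contraposition and assume $\Delta_\auc < 0$. The term $c/(n-1)$ decreases to $0$ as $n \to \infty$, so there is an $n \geq 2$ with $c/(n-1) < -\Delta_\auc$; concretely, any $n > 1 + c/(-\Delta_\auc)$ works. For such an $n$ the bracket is negative, so $\Var{\widehat{\bestauc_1}} > \Var{\widehat{\bestauc_2}}$. Hence the inequality fails for that $n$, contradicting ``for all $n \geq 2$''.

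No step here is a real obstacle once the two lemmas are in hand. The only care needed is the quantifier: the claim is ``for all $n \geq 2$'', so necessity needs only a single violating $n$. It is also the strict negativity from \cref{lem:auc-min-vs-max:covariance2} that lets the boundary case $\Delta_\auc = 0$ still yield a strict inequality. One should also check that the variances are finite, which holds because $\eta \in [0,1]$ and hence every quantity involved is bounded.
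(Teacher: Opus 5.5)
Your proposal is correct and follows the paper's proof: both directions come from combining \cref{lem:auc-min-vs-max:variance-difference} with the strict negativity of $\E{\zeta \zeta' |\zeta - \zeta'|}$ from \cref{lem:auc-min-vs-max:covariance2}. Your explicit threshold $n > 1 + c/(-\Delta_\auc)$ is simply a quantitative version of the paper's limit argument $n \paren{\Var{\widehat{\bestauc_2}} - \Var{\widehat{\bestauc_1}}} \to \Delta_\auc / (\theta^2 (1-\theta)^2) < 0$.
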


\begin{proof}
  If $\Delta_\auc \geq 0$, then
  \cref{lem:auc-min-vs-max:variance-difference,lem:auc-min-vs-max:covariance2}
  gives $\Var{\widehat{\bestauc_2}} - \Var{\widehat{\bestauc_1}} > 0$.
  Conversely, if $\Delta_\auc < 0$, we have
  \begin{equation}
  \lim_{n \to \infty} 
  n \paren{
    \Var{\widehat{\bestauc_2}} - \Var{\widehat{\bestauc_1}}
  } 
  = 
  \frac{\Delta_\auc}{\theta^{2} (1 - \theta)^{2}} 
  < 0
  \text{,}
  \end{equation}
  and therefore we will end up with
  $\Var{\widehat{\bestauc_1}} > \Var{\widehat{\bestauc_2}}$
  for sufficiently large $n$.
\end{proof}

\subsubsection{
  How to choose the better estimator based on data
}
\label{sec:proof-clean:discriminants:test}

Define 
$h(x, y) 
\coloneqq
\frac{1 - 2 \theta}{2}
(x + y) |x - y|$,
so that we have
$\Delta_\auc = \E{h(\eta - \theta, \eta' - \theta)}$.
Given a sample $\set{\eta_{i}}_{i=1}^n$,
let
\begin{equation}
  \label{eq:auc-min-vs-max:hbar}
  \bar{h}_{i}
  \coloneqq
  \frac{1}{n - 1}
  \sum_{j\text{: } j \neq i} 
  h(\eta_{i} - \theta, \eta_{j} - \theta)
\end{equation}
for each $i \in \set{1, \dots, n}$.
Define $\widehat{\Delta}_{\auc}$ and $\widehat{v}_\auc$ as
\begin{equation}
  \label{eq:auc-min-vs-max:deltahat}
  \widehat{\Delta}_{\auc}
  \coloneqq
  \frac{1}{n}
  \sum_{i=1}^{n} \bar{h}_{i}
  \text{.}
\end{equation}
Then, $\widehat{\Delta}_{\auc}$ is clearly 
an unbiased estimator of $\Delta_\auc$.
Therefore, we can use the sign of $\widehat{\Delta}_{\auc}$
as a data-dependent criterion to choose between
$\widehat{\bestauc_1}$ and $\widehat{\bestauc_2}$:
we choose $\widehat{\bestauc_1}$
if $\widehat{\Delta}_{\auc} \geq  0$,
and choose $\widehat{\bestauc_2}$
if $\widehat{\Delta}_{\auc} < 0$.

Moreover, we can test the sign of $\Delta_\auc$
based on the asymptotic normality of $\widehat{\Delta}_\auc$.
by the general theory of U-statistics of order two
\citep{callaert1981order}, we have
\begin{equation}
  \label{eq:auc-min-vs-max:asymptotic-normality}
  \frac{
    \widehat{\Delta}_{\auc} - \Delta_\auc
  }{
    \sqrt{\widehat{v}_\auc / n}
  }
  \xrightarrow{d}
  \mathcal{N}(0, 1)
  \text{,}
\end{equation}
where
\begin{equation}
  \label{eq:auc-min-vs-max:vhat}
  \widehat{v}_\auc
  \coloneqq
  \frac{4 (n - 1)}{(n - 2)^{2}}
  \sum_{i=1}^{n}
  \paren{
    \bar{h}_{i} - \widehat{\Delta}_\auc
  }^{2}
  \text{.}
\end{equation}
Based on this asymptotic normality,
we can construct a Wald-type hypothesis test for
the sign of $\Delta_\auc$:
\begin{equation}
  H_0 \text{: } \Delta_\auc < 0 
  \quad \text{vs.} \quad
  H_1 \text{: } \Delta_\auc \geq 0
  \text{.}
\end{equation}
We define the test statistic as
\begin{equation}
  T_\auc \coloneqq
  \frac{
    \widehat{\Delta}_{\auc}
  }{
    \sqrt{\widehat{v}_\auc / n}
  }
\end{equation}
and reject $H_0$ if $T_\auc \geq z_{\alpha}$,
where $z_{\alpha}$ is the upper $\alpha$-quantile of
the standard normal distribution.
The p-value of this test is given by $1 - \Phi(T_\auc)$,
where $\Phi$ is the cumulative distribution function
of the standard normal distribution.
We can also consider the converse test
\begin{equation}
  H_0 \text{: } \Delta_\auc \geq 0
  \quad \text{vs.} \quad
  H_1 \text{: } \Delta_\auc < 0
  \text{,}
\end{equation}
by rejecting $H_0$ if $T_\auc \leq z_{1 - \alpha}$.
The p-value of this test is given by $\Phi(T_\auc)$.

\subsubsection{
  Efficient computation of
  $\widehat{\Delta}_\auc$ 
  and 
  $\widehat{v}_\auc$
}

Naively, computing $\widehat{\Delta}_\auc$ and $\widehat{v}_\auc$
would take $O(n^{2})$ time due to the double summation.
However, we can construct an $O(n \log n)$-time algorithm,
which is based on the following theorem.

\begin{theorem}
  \label{thm:auc-min-vs-max:hbar-constant-time}
  For each $j \in \set{1, \dots, n}$, 
  let $(j)$ denote the index of the $j$-th smallest element in
  $\set{\eta_{1}, \dots, \eta_{n}}$,
  i.e., $\eta_{(1)} \leq \dots \leq \eta_{(n)}$,
  and let $S_{i}$
  be the $i$-th prefix sum of 
  $\set{(\eta_{(j)} - \theta)^{2}}_{j=1}^n$,
  i.e.,
  $S_{i}
  \coloneqq 
  \sum_{j=1}^{i} (\eta_{(j)} - \theta)^{2}$.
  Then, $\bar{h}_{(i)}$ 
  can be expressed as
  \begin{equation}
    \label{eq:auc-min-vs-max:hbar-constant-time}
    \bar{h}_{(i)}
    =
    \frac{1 - 2 \theta}{2 (n - 1)}
    \left[
      (2i - n) (\eta_{(i)} - \theta)^{2} + S_{n} - 2 S_{i}
    \right]
    \text{.}
  \end{equation}
\end{theorem}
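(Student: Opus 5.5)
The plan is a direct algebraic simplification, with no probabilistic content. Write $a_j \coloneqq \eta_{(j)} - \theta$, so that $a_1 \leq \dots \leq a_n$ because the map $z \mapsto z - \theta$ preserves order. By the definition of $h$ and of $\bar{h}_i$ in \cref{eq:auc-min-vs-max:hbar}, we have
\begin{equation}
  \bar{h}_{(i)} = \frac{1 - 2\theta}{2(n-1)} \sum_{j \neq i} (a_i + a_j)\,|a_i - a_j| .
\end{equation}
The key observation is that the sign of $a_i - a_j$ is known once we use the sorted order.

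First split the sum into $j < i$ and $j > i$. For $j < i$ we have $|a_i - a_j| = a_i - a_j$. Hence $(a_i + a_j)|a_i - a_j| = a_i^2 - a_j^2$. For $j > i$ the sign flips, and the term equals $a_j^2 - a_i^2$. The difference-of-squares identity is what removes the absolute value and leaves only squared centered soft labels.

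Next collect terms. The $j<i$ part contributes $(i-1)a_i^2 - S_{i-1}$. The $j>i$ part contributes $(S_n - S_i) - (n-i)a_i^2$. Adding these gives
\begin{equation}
  (2i - n - 1)\,a_i^2 + S_n - S_i - S_{i-1}.
\end{equation}
Substituting $S_{i-1} = S_i - a_i^2$ turns this into $(2i-n)a_i^2 + S_n - 2S_i$. Multiplying by the prefactor $\frac{1-2\theta}{2(n-1)}$ gives exactly \cref{eq:auc-min-vs-max:hbar-constant-time}.

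Ties among the $\eta$ values need a brief check. If $a_j = a_i$ for some $j \neq i$, the summand is zero. Both candidate expressions $a_i^2 - a_j^2$ and $a_j^2 - a_i^2$ also vanish in that case. So the formula holds for any tie-breaking in the ordering $(\cdot)$.

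The main obstacle is only the bookkeeping with $S_{i-1}$ versus $S_i$, that is, getting the coefficient $2i - n$ correct rather than $2i - n - 1$. The tie case is the only other subtlety, and it is handled as above.

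Algorithmically, this gives all $\bar{h}_{(i)}$ in $O(n)$ after sorting. Sorting costs $O(n\log n)$, and the prefix sums cost $O(n)$. Therefore $\widehat{\Delta}_\auc$ and $\widehat{v}_\auc$ can be computed in $O(n\log n)$ time.
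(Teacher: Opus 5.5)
Your proof is correct and matches the paper's argument: you split the sum into $j<i$ and $j>i$, use the sorted order to turn $(a_i+a_j)|a_i-a_j|$ into a difference of squares, and substitute $S_{i-1}=S_i-(\eta_{(i)}-\theta)^2$. Your separate tie check is harmless but unnecessary, since $a_j \le a_i$ for $j<i$ already gives $|a_i-a_j| = a_i-a_j$, including when the two are equal.
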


\begin{proof}
  For ease of notation, let 
  $a_{i} \coloneqq \frac{2 (n - 1)}{1 - 2 \theta} \bar{h}_{(i)}$
  and 
  $\zeta_{i} \coloneqq \eta_{(i)} - \theta$.
  Then
  \begin{equation}
    \begin{aligned}
      a_{i}
      &=
      \sum_{j: j \neq i} 
      (\zeta_{i} + \zeta_{j}) |\zeta_{i} - \zeta_{j}|
      \\ &=
      \sum_{j < i} 
      (\zeta_{i} + \zeta_{j}) (\zeta_{i} - \zeta_{j})
      +
      \sum_{j > i}
      (\zeta_{i} + \zeta_{j}) (\zeta_{j} - \zeta_{i})
      \\ &=
      \sum_{j < i}
      \left(
        \zeta_{i}^{2} - \zeta_{j}^{2}
      \right)
      +
      \sum_{j > i}
      \left(
        \zeta_{j}^{2} - \zeta_{i}^{2}
      \right)
      \\ &=
      (i - 1) \zeta_{i}^{2} 
      - 
      \sum_{j < i} \zeta_{j}^{2}
      +
      \sum_{j > i} \zeta_{j}^{2} 
      - 
      (n - i) \zeta_{i}^{2}
      \text{.}
    \end{aligned}
  \end{equation}
  Observe that
  \begin{equation}
    \sum_{j < i} \zeta_{j}^{2}
    =
    S_{i - 1}
    = 
    S_{i} - \zeta_{i}^{2}
    \text{, \quad}
    \sum_{j > i} \zeta_{j}^{2}
    =
    S_{n} - S_{i}
    \text{.}
  \end{equation}
  Hence, we have
  \begin{equation}
    a_{i}
    =
    (2i - n) \zeta_{i}^{2} 
    +
    S_{n} - 2 S_{i}
    \text{,}
  \end{equation}
  which completes the proof.
\end{proof}

\cref{thm:auc-min-vs-max:hbar-constant-time}
gives an algorithm that computes
$\widehat{\Delta}_\auc$ and $\widehat{v}_\auc$
in $O(n \log n)$ time,
as shown in 
\cref{alg:auc-delta-vhat}.
The steps are as follows:
\begin{enumerate}
  \item
    Sort $\eta_{1}, \dots, \eta_{n}$ in ascending order
    to obtain $\eta_{(1)}, \dots, \eta_{(n)}$:
    $O(n \log n)$ time, assuming standard sorting algorithms
    such as Mergesort.
  \item 
    Compute the prefix sums
    $S_1, \dots, S_n$:
    $O(n)$ time in total.
  \item
    For each $i = 1, \dots, n$,
    compute $\bar{h}_{(i)}$ using
    \eqref{eq:auc-min-vs-max:hbar-constant-time}:
    constant time for each $i$, hence $O(n)$ time in total.
  \item
    Compute $\widehat{\Delta}_\auc$ and $\widehat{v}_\auc$
    by \eqref{eq:auc-min-vs-max:deltahat} and
    \eqref{eq:auc-min-vs-max:vhat}:
    $O(n)$ time.
\end{enumerate}

\begin{algorithm}[t]
  \caption{An $O(n \log n)$-time algorithm for computing $\widehat{\Delta}_\auc$ and $\widehat{v}_\auc$}
  \label{alg:auc-delta-vhat}
  \begin{algorithmic}
    \INPUT Soft labels $\eta_{1}, \dots, \eta_{n}$, class prior $\theta$
    \STATE $\eta_{(1)}, \dots, \eta_{(n)} \gets$ sort $\eta_{1}, \dots, \eta_{n}$ in ascending order
    \STATE $S_{0} \gets 0$
    \FOR{$i = 1$ \textbf{to} $n$}
      \STATE $S_{i} \gets S_{i-1} + (\eta_{(i)} - \theta)^{2}$
    \ENDFOR
    \FOR{$i = 1$ \textbf{to} $n$}
      \STATE $\bar{h}_{(i)} \gets
      \frac{1 - 2 \theta}{2 (n - 1)}
      \left[
        (2i - n) (\eta_{(i)} - \theta)^{2} + S_{n} - 2 S_{i}
      \right]$
    \ENDFOR
    \STATE $\widehat{\Delta}_{\auc} \gets \frac{1}{n} \sum_{i=1}^{n} \bar{h}_{i}$
    \STATE $\widehat{v}_\auc \gets
    \frac{4 (n - 1)}{(n - 2)^{2}}
    \sum_{i=1}^{n}
    \paren{
      \bar{h}_{i} - \widehat{\Delta}_\auc
    }^{2}$
    \OUTPUT $\widehat{\Delta}_{\auc}$, $\widehat{v}_\auc$
  \end{algorithmic}
\end{algorithm}

\subsubsection{Sufficient conditions for non-negative discriminants}

Here, we discuss some sufficient conditions
for $\Delta_\ber, \Delta_{\auc} \geq 0$.

\begin{proposition}
  \label{prop:min-vs-max:formal}
  Assume that the random variable $\eta(x)$ 
  is not a.s.\ constant.
  Then, we have
  $\Delta_{\ber} \geq 0$ and $\Delta_{\auc} \geq 0$
  if one of the following conditions holds:
  \begin{enumerate}[label=(\roman*)]
    \item 
      \label{item:auc-min-vs-max:formal:balanced-classes}
      The two classes are balanced, i.e., $\theta = \frac{1}{2}$.
    \item 
      \label{item:auc-min-vs-max:formal:symmetric}
      The law of $\eta(x)$ is symmetric around its mean $\theta$
      (which may or may not be equal to $\frac{1}{2}$).
    \item 
      \label{item:min-vs-max:formal:small-bayes-error}
      The Bayes error satisfies $\besterr \leq \delta_{\theta}$,
      where $\delta_{\theta} > 0$ is a certain constant 
      depending only on $\theta$.
  \end{enumerate}
\end{proposition}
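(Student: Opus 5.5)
The plan is to treat the three conditions separately. Conditions \ref{item:auc-min-vs-max:formal:balanced-classes} and \ref{item:auc-min-vs-max:formal:symmetric} make both discriminants vanish. Condition \ref{item:min-vs-max:formal:small-bayes-error} needs a perturbation argument around the noiseless case $\eta \in \set{0,1}$.

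For \ref{item:auc-min-vs-max:formal:balanced-classes}, both $\Delta_\ber$ and $\Delta_\auc$ carry the factor $1 - 2\theta = 0$, so both equal $0$.

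For \ref{item:auc-min-vs-max:formal:symmetric}, the centered variable $\zeta = \eta(x) - \theta$ satisfies $\zeta \overset{d}{=} -\zeta$, and $(\zeta, \zeta') \overset{d}{=} (-\zeta, -\zeta')$ by independence. The maps $z \mapsto z\abs{z}$ and $(z,w) \mapsto z\abs{z-w}$ are odd under this sign flip and bounded on $[-1,1]$. Hence $\E{\zeta\abs{\zeta}} = -\E{\zeta\abs{\zeta}}$ and $\E{\zeta\abs{\zeta-\zeta'}} = -\E{\zeta\abs{\zeta-\zeta'}}$, so both discriminants are $0$.

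For \ref{item:min-vs-max:formal:small-bayes-error}, I would assume $\theta \neq \frac12$, since the case $\theta = \frac12$ is covered by \ref{item:auc-min-vs-max:formal:balanced-classes} and any $\delta_{1/2} > 0$ works there.

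Let $r \coloneqq \I{\eta(x) \geq 1/2} \in \set{0,1}$. Then $\abs{\eta(x) - r} = \min\set{\eta(x), 1 - \eta(x)}$, so $\E{\abs{\eta - r}} = \besterr$. Write $p \coloneqq \E{r}$; then $\abs{p - \theta} = \abs{\E{r - \eta}} \leq \besterr$.

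The functions $g(z) = (z-\theta)\abs{z-\theta}$ and $k(z,w) = (z-\theta)\abs{z-w}$ are Lipschitz on $[0,1]$ and $[0,1]^2$ with absolute constants ($2$ for $g$, and at most $2$ in each argument for $k$). Therefore
$\abs{\E{g(\eta)} - \E{g(r)}} \leq 2\besterr$ and $\abs{\E{k(\eta,\eta')} - \E{k(r,r')}} \leq 4\besterr$, where $r'$ is the independent copy of $r$ built from $x'$.

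Next I would compute the two-point expectations explicitly. One gets $\E{g(r)} = p(1-\theta)^2 - (1-p)\theta^2$. For $\E{k(r,r')}$, only the pairs with $r \neq r'$ contribute, each with $\abs{r - r'} = 1$, giving $\E{k(r,r')} = p(1-p)(1-\theta) - (1-p)p\,\theta$.

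Both expressions are polynomials in $p$ with bounded derivatives on $[0,1]$. At $p = \theta$ each equals $\theta(1-\theta)(1-2\theta)$. Replacing $p$ by $\theta$ therefore costs at most $C\besterr$ for an absolute constant $C$.

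Combining these estimates gives $\E{\zeta\abs{\zeta}}, \E{\zeta\abs{\zeta-\zeta'}} \in \theta(1-\theta)(1-2\theta) \pm C'\besterr$. Multiplying by $(1-2\theta)$ yields
$\Delta_\ber, \Delta_\auc \geq \theta(1-\theta)(1-2\theta)^2 - C'\abs{1-2\theta}\besterr$.
So the choice $\delta_\theta \coloneqq \theta(1-\theta)\abs{1-2\theta}/C'$ suffices.

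The main work is in this last step: getting the Lipschitz and rounding bookkeeping right, especially the drift of $p$ away from $\theta$. The approach itself is routine; I would simply track explicit constants.
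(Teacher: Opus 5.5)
Your proof is correct. Parts (i) and (ii) are exactly the paper's arguments.

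For (iii) you follow the same overall strategy as the paper, but with a different coupling. The strategy is a Lipschitz perturbation of $\E{\zeta|\zeta|}$ and $\E{\zeta|\zeta-\zeta'|}$ toward a law on $\{0,1\}$. For such a law with mean $\theta$, both quantities equal $\theta(1-\theta)(1-2\theta)$.

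The paper uses a randomized rounding $z$ with $z \mid \eta(x) \sim \mathrm{Bernoulli}(\eta(x))$. This keeps $\E{z}=\theta$ exactly, so the comparison law is precisely the Bernoulli$(\theta)$ case of the two-point example, and no drift step is needed. The price is a transport distance $\E{|\eta(x)-z|} = 2\E{\eta(x)(1-\eta(x))} \leq 2\besterr$.

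Your deterministic rounding $r=\I{\eta(x)\geq 1/2}$ is the Bayes classifier itself. Its transport distance is exactly $\besterr$ and it needs no auxiliary randomness, but its mean $p$ moves away from $\theta$. You absorb this drift correctly: the explicit two-point values $p(1-\theta)^2-(1-p)\theta^2$ and $p(1-p)(1-2\theta)$ are both $1$-Lipschitz in $p$, and $|p-\theta|\leq\besterr$.

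The two routes give thresholds of the same order, $\min\{\theta,1-\theta\}\,|1-2\theta|$. With your constants ($C'=5$) you get $\delta_\theta=\theta(1-\theta)|1-2\theta|/5$. This is somewhat smaller than the paper's $\frac14\min\{\theta,1-\theta\}|1-2\theta|$, which does not matter, since the statement only asserts that some $\delta_\theta>0$ exists.
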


\ref{item:auc-min-vs-max:formal:balanced-classes}
is trivial by the definitions of 
$\Delta_{\ber}$ and $\Delta_{\auc}$.
In what follows, we prove the rest of \cref{prop:min-vs-max:formal}.
Specifically, 
\ref{item:auc-min-vs-max:formal:symmetric}
follows from
\cref{lem:min-vs-max:symmetric};
we prove 
\ref{item:min-vs-max:formal:small-bayes-error}
as \cref{lem:min-vs-max:small-bayes-error}.

\begin{lemma}
  \label{lem:min-vs-max:symmetric}
  Assume that the distribution of $\zeta$ is symmetric around $0$,
  i.e., $\zeta$ and $-\zeta$ are identically distributed.
  Then, we have the following:
  \begin{enumerate*}[label=(\roman*)]
    \item 
      $\E{\zeta |\zeta|} = 0$ and
    \item
      $\E{\zeta |\zeta - \zeta'|} = 0$.
  \end{enumerate*}
\end{lemma}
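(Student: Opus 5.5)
Both claims follow from the same symmetry principle: if a random vector and its image under a sign flip have the same law, then any integrable odd function of it has expectation zero. I would apply this once to $\zeta$ alone and once to the pair $(\zeta, \zeta')$.

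For (i), write $g(z) \coloneqq z\abs{z}$, which satisfies $g(-z) = -g(z)$. Integrability is automatic, since $\zeta = \eta(x) - \theta$ takes values in $[-1, 1]$ and so $\abs{g(\zeta)} \leq 1$. Because $\zeta$ and $-\zeta$ are identically distributed,
\begin{equation}
  \E{\zeta \abs{\zeta}} = \E{(-\zeta)\abs{-\zeta}} = -\E{\zeta\abs{\zeta}} \text{,}
\end{equation}
and therefore $\E{\zeta\abs{\zeta}} = 0$.

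For (ii), I first need the joint version of the hypothesis. Since $\zeta$ and $\zeta'$ are independent, and each is equal in law to its negative, the product-measure structure gives that $(\zeta, \zeta')$ and $(-\zeta, -\zeta')$ have the same joint law. This is the only step that needs any care, and it is routine once written as a statement about product measures. Now set $k(z, w) \coloneqq z\abs{z - w}$, which is bounded on $[-1,1]^2$ and satisfies $k(-z, -w) = -z\abs{w - z} = -k(z, w)$. Hence
\begin{equation}
  \E{\zeta\abs{\zeta - \zeta'}} = \E{k(-\zeta, -\zeta')} = -\E{\zeta\abs{\zeta - \zeta'}} \text{,}
\end{equation}
which forces $\E{\zeta\abs{\zeta - \zeta'}} = 0$.

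Plugging these into the definitions of $\Delta_\ber$ and $\Delta_\auc$ shows that both discriminants vanish. In particular both are $\geq 0$, which yields condition (ii) of \cref{prop:min-vs-max:formal}.

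As an alternative route for (ii), one could note that $\E{\zeta\abs{\zeta - \zeta'}} = \frac{1}{2}\E{(\zeta + \zeta')\abs{\zeta - \zeta'}}$ by exchangeability, and then apply the same oddness argument to $(z+w)\abs{z-w}$. That detour is unnecessary here, because the direct sign-flip argument already suffices.
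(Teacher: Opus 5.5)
Your proposal is correct and follows essentially the same argument as the paper: both parts use the sign-flip symmetry (of $\zeta$ alone, then of the independent pair $(\zeta, \zeta')$) to show that each expectation equals its own negative. Your remarks on boundedness and on the product-measure justification of the joint symmetry make explicit two points the paper leaves implicit.
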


\begin{proof}
  \begin{enumerate}[label=(\roman*)]
    \item 
      By symmetry,
      we have
      \begin{equation}
        \E{\zeta |\zeta|}
        =
        \E{-\zeta |-\zeta|}
        =
        - \E{\zeta |\zeta|}
        \text{,}
      \end{equation}
      which implies $\E{\zeta |\zeta|} = 0$.
    \item
      By symmetry, $(\zeta, \zeta')$ and $(-\zeta, -\zeta')$ are identically distributed,
      and hence
      \begin{equation}
        \E{\zeta |\zeta - \zeta'|}
        =
        \E{-\zeta |-\zeta + \zeta'|}
        =
        - \E{\zeta |\zeta - \zeta'|}
        \text{.}
      \end{equation}
      Therefore, we have $\E{\zeta |\zeta - \zeta'|} = 0$.
  \end{enumerate}
\end{proof}

\begin{example}
  \label{ex:min-vs-max:two-point}
  Consider the following two-point distribution:
  \begin{equation}
    \begin{cases}
      \eta(x) = a & \text{with probability } p \text{,} \\
      \eta(x) = b & \text{with probability } 1 - p \text{,}
    \end{cases}
  \end{equation}
  where $0 \leq a < b \leq 1$ and $0 < p < 1$.
  Then, we have
  $\theta = \E{\eta(x)} = a p + b (1 - p)$,
  and
  it can be shown that
  \begin{equation}
    \Delta_{\ber}
    =
    \Delta_{\auc}
    = 
    (b - a)^2 p (1 - p) \cdot (1 - 2 \theta) (2p - 1)
    \text{.}
  \end{equation}
  Since the two discriminants coincide in this case,
  let us write $\Delta \coloneqq \Delta_{\ber} = \Delta_{\auc}$
  for ease of notation.
  Two immediate observations are:

  \begin{enumerate}[label=(\roman*)]
    \item If $\theta = \frac{1}{2}$, we have $\Delta = 0$.
    This is consistent with
    \cref{prop:min-vs-max:formal}~\cref{item:auc-min-vs-max:formal:balanced-classes}.
    \item If $p = \frac{1}{2}$, we have $\Delta = 0$
    no matter how we choose $a$ and $b$.
    This is the symmetric case in 
    \cref{prop:min-vs-max:formal}~\cref{item:auc-min-vs-max:formal:symmetric}. 
  \end{enumerate}

  In general, the sign of $\Delta$
  coincides with that of
  \begin{equation}
    \frac{\Delta}{4 (b - a) p (1 - p)}
    =
    \paren{\theta - \frac{1}{2}} \paren{\theta - \frac{a+b}{2}}
  \end{equation}
  since $4 (b - a) p (1 - p) > 0$.\footnote{
    Note that $\frac{a + b}{2}$ is 
    the class prior $\theta = ap + b (1 - p)$
    for $p = \frac{1}{2}$.
  }
  Therefore, we have $\Delta < 0$ if
  \begin{equation}
    \label{ex:min-vs-max:two-point:negative-delta-condition}
    \min \set{
      \frac{1}{2}, \frac{a + b}{2}
    }
    <
    \theta
    <
    \max \set{
      \frac{1}{2}, \frac{a + b}{2}
    }
  \end{equation}
  and $\Delta \geq 0$ otherwise.
  Some concrete examples are as follows:
  \begin{enumerate}[label=(\roman*),resume]
    \item 
      \label{item:min-vs-max:bernoulli}
      For distributions with $a + b = 1$,
      including the Bernoulli case ($a = 0$, $b = 1$),
      we have 
      $\Delta = 4 (b - a) p (1 - p) \cdot \paren{\theta - \frac{1}{2}}^{2} \geq 0$ 
      regardless of $p$.
      In particular, $\Delta > 0$ if $\theta \neq \frac{1}{2}$.
    \item 
      To see that $\Delta$ can indeed be negative,
      let $a = 0.1$ and $b = 0.7$.
      Then, any $p \in (\frac{1}{3}, \frac{1}{2})$
      satisfies
      \eqref{ex:min-vs-max:two-point:negative-delta-condition}
      and hence gives $\Delta < 0$.
  \end{enumerate}

\end{example}

While \cref{ex:min-vs-max:two-point}~\ref{item:min-vs-max:bernoulli}
is very simple, it has an interesting implication for 
a wider class of distributions;
that is, it suggests that we have 
$\Delta_{\ber}, \Delta_{\auc} \geq 0$
for any distribution with a sufficiently low 
label uncertainty (e.g., small Bayes error).
The rationale is as follows.
We have seen that $\Delta_{\ber} = \Delta_{\auc} > 0$ holds
for any Bernoulli distribution with mean $\theta \neq \frac{1}{2}$.
Since $\Delta_{\ber}$ and $\Delta_{\auc}$ are 
continuous as a functional of $\eta(x)$'s law 
(with respect to, e.g., the topology of weak convergence
or Wasserstein distance),
we can expect that they remain positive
for any distribution sufficiently close to
a Bernoulli distribution with mean $\theta$.
As Bernoulli distributions are the most ``certain'' distributions,
being close to a Bernoulli distribution
means that the distribution contains only a small amount of 
label uncertainty.
\cref{lem:min-vs-max:small-bayes-error} formalizes this intuition.

\begin{lemma}
  \label{lem:min-vs-max:small-bayes-error}
  We have 
  $\Delta_\ber \geq 0$
  and
  $\Delta_\auc \geq 0$
  for any distribution whose Bayes error $\besterr$
  and class prior $\theta$ satisfies
  \begin{equation}
    \label{eq:auc-min-vs-max:small-bayes-error:condition}
    \besterr
    \leq
    \frac{1}{4} \min \set{\theta, 1 - \theta} |1 - 2 \theta|
    \text{.}
  \end{equation}
\end{lemma}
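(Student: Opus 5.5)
By the symmetry $\eta \mapsto 1 - \eta$ (equivalently $\theta \mapsto 1 - \theta$), both $\zeta$ and $1 - 2\theta$ change sign, so $\Delta_\ber$ and $\Delta_\auc$ are unchanged. The condition \eqref{eq:auc-min-vs-max:small-bayes-error:condition} is also invariant. Hence I will assume $\theta < \frac{1}{2}$; the case $\theta = \frac{1}{2}$ is \cref{prop:min-vs-max:formal}~\ref{item:auc-min-vs-max:formal:balanced-classes}. It then suffices to show $\E{\zeta|\zeta|} \geq 0$ and $\E{\zeta|\zeta - \zeta'|} \geq 0$ whenever $\varepsilon \coloneqq \besterr = \E{\min\set{\eta, 1-\eta}} \leq \frac{1}{4}\theta(1-2\theta)$.

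The idea is to make rigorous the perturbation heuristic from \cref{ex:min-vs-max:two-point}~\ref{item:min-vs-max:bernoulli}. I couple $\eta = \eta(x)$ with its rounding $B \coloneqq \I{\eta \geq 1/2}$, and similarly $B'$ with $\eta'$. Then $|\eta - B| = \min\set{\eta, 1-\eta}$, so $\E{|\eta - B|} = \varepsilon$. Moreover, $B \sim$ Bernoulli$(q)$ with $|q - \theta| = |\E{B - \eta}| \leq \varepsilon$. Note that $B$ is centred at $\theta$, not at its own mean $q$, so I keep the functions $g(z) = (z-\theta)|z-\theta|$ and $h(z,w) = (z-\theta)|z-w|$ as they are.

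\emph{BER part.} First I compute exactly
\begin{equation}
  \E{g(B)} = q(1-\theta)^2 - (1-q)\theta^2 .
\end{equation}
This is affine in $q$ with slope $1 - 2\theta + 2\theta^2 \leq 1$, and equals $\theta(1-\theta)(1-2\theta)$ at $q = \theta$. Hence $\E{g(B)} \geq \theta(1-\theta)(1-2\theta) - \varepsilon$. Next, for $a, b \in [0,1]$ we have $|g(a) - g(b)| \leq (|a-\theta| + |b-\theta|)|a-b| \leq 2(1-\theta)|a-b|$, which gives $\E{|g(\eta) - g(B)|} \leq 2(1-\theta)\varepsilon$. Combining,
\begin{equation}
  \E{\zeta|\zeta|} \geq \theta(1-\theta)(1-2\theta) - (3-2\theta)\varepsilon .
\end{equation}
Under the hypothesis $\varepsilon \leq \theta(1-2\theta)/4$, the right-hand side is nonnegative as soon as $(3-2\theta)/4 \leq 1-\theta$, i.e.\ $\theta \leq \frac{1}{2}$, which holds.

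\emph{AUC part.} The plan is the same with $(\eta, \eta')$ coupled to the independent pair $(B, B')$. Directly, $\E{(B-\theta)|B-B'|} = q(1-q)(1-2\theta)$. For the perturbation I split
\begin{equation}
  h(\eta,\eta') - h(B,B') = (\eta - B)|B - B'| + (\eta-\theta)\big(|\eta-\eta'| - |B-B'|\big),
\end{equation}
where the first factor $|B-B'|$ is an indicator. The two terms then cost at most $\varepsilon$ and $2(1-\theta)\varepsilon$ in expectation, respectively; for the first I use the independence of $B'$ and $(\eta, B)$.

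\emph{Main obstacle.} The main difficulty is the constants. Crudely bounding $q(1-q) \geq \theta(1-\theta) - \varepsilon$ gives a total loss of $(4-2\theta)\varepsilon$, which just misses the target factor $4(1-\theta)$. So the Bernoulli term must be handled more carefully. I would first use the sharper expansion $q(1-q) \geq \theta(1-\theta) - (1-2\theta)\varepsilon - \varepsilon^2$, which exploits the extra factor $(1-2\theta)$. Failing that, I would use the first term of the split more cleverly: bound $\E{|\eta - B|\,|B - B'|}$ by $\varepsilon \max\set{q, 1-q}$, or absorb $(\eta - B)$ into the signed quantity $\theta - q$ rather than taking absolute values. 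I expect one of these refinements, together with $\varepsilon \leq \theta(1-2\theta)/4 \leq 1/32$, to close the inequality.
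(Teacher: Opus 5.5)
Your argument is correct, but the ``main obstacle'' comes from an arithmetic slip. The Bernoulli reference value is $q(1-q)(1-2\theta)$, so the crude bound $q(1-q)\geq\theta(1-\theta)-\varepsilon$ costs only $(1-2\theta)\varepsilon$, not $\varepsilon$. Combined with your two perturbation terms, this gives
\[
  \E{\zeta\,|\zeta-\zeta'|}
  \geq (1-2\theta)\bigl(\theta(1-\theta)-\varepsilon\bigr) - \varepsilon - 2(1-\theta)\varepsilon
  = (1-\theta)\bigl(\theta(1-2\theta)-4\varepsilon\bigr) \geq 0 ,
\]
which holds exactly under the hypothesis $\varepsilon\leq\frac14\theta(1-2\theta)$. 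Your $(4-2\theta)\varepsilon$ comes from replacing $(1-2\theta)\varepsilon$ by $\varepsilon$. So no refinement is needed. Your first refinement, $q(1-q)\geq\theta(1-\theta)-(1-2\theta)\varepsilon-\varepsilon^2$, also closes the inequality with room to spare, since it only requires $\varepsilon\leq 2\theta$.

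Your route differs from the paper's in the coupling. The paper uses randomized rounding, $z\mid\eta(x)\sim\mathrm{Bernoulli}(\eta(x))$. Then $z$ is exactly $\mathrm{Bernoulli}(\theta)$, so the reference value $\theta(1-\theta)(1-2\theta)$ enters with no mean-shift correction. The price is a larger transport cost, $\E{|\eta(x)-z|}=2\E{\eta(x)(1-\eta(x))}\leq 2\besterr$. The paper then needs only one Lipschitz estimate per part:
\begin{itemize}
  \item $x\mapsto x|x|$ is $2\max\{\theta,1-\theta\}$-Lipschitz on $[-\theta,1-\theta]$;
  \item the symmetrized kernel $\frac12(x+y)|x-y|$ is $\max\{\theta,1-\theta\}$-Lipschitz in the $1$-norm.
\end{itemize}
So both discriminants lose at most $4\max\{\theta,1-\theta\}|1-2\theta|\besterr$. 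This yields the stated threshold directly, with no reduction to $\theta<\frac12$.

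Your deterministic rounding $B=\I{\eta\geq 1/2}$ halves the transport cost ($\E{|\eta-B|}=\besterr$ exactly), at the cost of tracking the shift $|q-\theta|\leq\besterr$ in the reference value. This gives slightly better constants. For BER, your loss $(3-2\theta)\varepsilon$ beats the paper's $4(1-\theta)\varepsilon$, so you in fact prove $\Delta_\ber\geq 0$ under the weaker condition $\varepsilon\leq\theta(1-\theta)(1-2\theta)/(3-2\theta)$. For AUC, the crude version reproduces the paper's constant exactly, and your refinement improves it slightly. The paper's version is shorter and more uniform, which is why its AUC part can simply reuse the BER computation.
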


\begin{proof}
  First, we prove the claim for $\Delta_\ber$.
  Let $h(x) = x |x|$ for
  $x \in [-\theta, 1 - \theta]$.
  Then, we have 
  $\abs{h'(x)} = 2 |x| \leq 2 \max \{ \theta, 1 - \theta \}$,
  and hence $h$ is
  $2 \max \{ \theta, 1 - \theta \}$-Lipschitz continuous
  with respect to $1$-norm.
  Therefore, for any $[0, 1]$-valued random variable $z$,
  we have
  \begin{equation}
    \label{eq:min-vs-max:small-bayes-error:ber}
    \begin{aligned}
      \abs{
        \E{h(\zeta)}
        -
        \E{h(z - \theta)}
      }
      &\leq
      2 \max \set{\theta, 1 - \theta}
      \E{
        |\zeta - (z - \theta)| 
      }
      \\ &=
      2 \max \set{\theta, 1 - \theta}
      \E{ |\eta(x) - z| }
      \text{.}
    \end{aligned}
  \end{equation}

  Now, take $z$ to be a random variable whose 
  conditional distribution given $\eta(x)$ is
  a Bernoulli distribution with mean $\eta(x)$.\footnote{
    Let $u \sim \mathrm{Unif}([0,1])$ be independent of $\eta(x)$,
    and define $z \coloneqq \I{u \leq \eta(x)}$. 
    Then,
    $\E{z \mid \eta(x)} = \E{\I{u \leq \eta(x)} \mid \eta(x)} = \eta(x)$,
    hence $z \mid \eta(x) \sim \mathrm{Bernoulli}(\eta(x))$.
  }
  Then, $z$ is a Bernoulli random variable 
  with mean $\theta$.
  Noting that
  $\E{ |\eta(x) - z| \mid \eta(x) }
  =
  2 \eta(x) (1 - \eta(x))$,
  we have
  \begin{equation}
    \label{eq:ber-min-vs-max:small-bayes-error:diff}
    \begin{aligned}
      \abs{
        \E{h(\zeta)}
        -
        \E{h(z - \theta)}
      }
      &\leq
      4 \max \set{\theta, 1 - \theta}
      \E{ \eta(x) (1 - \eta(x)) }
      \\ &\leq
      4 \max \set{\theta, 1 - \theta}
      \E{ \min \{ \eta(x), 1 - \eta(x) \} }
      \\ &=
      4 \max \set{\theta, 1 - \theta} \besterr
      \text{.}
    \end{aligned}
  \end{equation}

  Observe that $(1 - 2 \theta) \E{h(\zeta)} = \Delta_\ber$ 
  and that $(1 - 2 \theta) \E{h(z - \theta)} = \theta (1 - \theta) |1 - 2 \theta|^{2}$ 
  (by \cref{ex:min-vs-max:two-point}~\ref{item:min-vs-max:bernoulli}).
  Hence, \eqref{eq:ber-min-vs-max:small-bayes-error:diff}
  implies
  \begin{equation}
    \begin{aligned}
      \Delta_\ber
      &\geq
      \theta (1 - \theta) |1 - 2 \theta|^{2}
      -
      4 \max \set{\theta, 1 - \theta} |1 - 2 \theta| \besterr
      \\ &=
      4 \max \set{\theta, 1 - \theta} |1 - 2 \theta| 
      \paren{
        \frac{1}{4} \min \set{\theta, 1 - \theta} |1 - 2 \theta|
        -
        \besterr
      }
      \text{,}
    \end{aligned}
  \end{equation}
  which is non-negative if
  \begin{equation}
    \besterr
    \leq
    \frac{1}{4} \min \set{\theta, 1 - \theta} |1 - 2 \theta|
    \text{.}
  \end{equation}

  The proof for $\Delta_\auc$ is similar.
  Let $h(x, y) = \frac{1}{2} (x + y) |x - y|$ for
  $x, y \in [-\theta, 1 - \theta]$.
  Then, we have 
  $\norm[\infty]{\nabla h(x, y)} = \max \{ |x|, |y| \} \leq \max \{ \theta, 1 - \theta \}$,
  and hence $h$ is
  $\max \{ \theta, 1 - \theta \}$-Lipschitz continuous
  with respect to $1$-norm.
  Therefore, for any $[0, 1]$-valued random variable $z$
  and its i.i.d.\ copy $z'$,
  we have
  \begin{equation}
    \label{eq:min-vs-max:small-bayes-error:auc}
    \begin{aligned}
      \abs{
        \E{h(\zeta, \zeta')}
        -
        \E{h(z - \theta, z' - \theta)}
      }
      &\leq
      \max \set{\theta, 1 - \theta}
      \E{
        |\zeta - (z - \theta)| + |\zeta' - (z' - \theta)|
      }
      \\ &=
      2 \max \set{\theta, 1 - \theta}
      \E{ |\eta(x) - z| }
      \text{.}
    \end{aligned}
  \end{equation}
  Notice that
  the right-hand side of
  \eqref{eq:min-vs-max:small-bayes-error:auc}
  exactly matches that of
  \eqref{eq:min-vs-max:small-bayes-error:ber}.
  Hence, the rest of the proof is almost identical.
\end{proof}

\begin{remark}
  \begin{itemize}
    \item 
      It is possible to derive similar sufficient conditions
      in terms of the optimal BER or AUC instead of the Bayes error.
    \item
      This lemma implies that we can choose
      the constant $\delta_{\theta}$ 
      in
      \cref{prop:min-vs-max:formal}
      \ref{item:min-vs-max:formal:small-bayes-error}
      to be
      \begin{equation}
        \delta_{\theta}
        =
        \begin{cases}
          \frac{1}{4} \theta (1 - 2 \theta)
          & \text{if } \theta < \frac{1}{2} \text{,} \\
          \infty
          & \text{if } \theta = \frac{1}{2} \text{,} \\
          \frac{1}{4} (1 - \theta) (2 \theta - 1)
          & \text{if } \theta > \frac{1}{2} \text{,}
        \end{cases}
      \end{equation}
      which might not be optimal.
  \end{itemize}
\end{remark}

\subsection{
  Near-Linear Time Algorithm for Computing the Max Formula $\widehat{\bestauc_2}$ 
}
\label{sec:alg-bestauc-max}

In \cref{sec:alg-bestauc}, we presented an $O(n \log n)$-time algorithm
for computing the min formula $\widehat{\bestauc_1}$
or the optimal AUC.
Here we provide the proof of the correctness
and time complexity of \cref{alg:bestauc}.

\begin{theorem}[store=thm:alg-bestauc]
  \label{thm:alg-bestauc}
  \cref{alg:bestauc} computes $\widehat{\bestauc_1}$ in $O(n \log n)$ time
  in the worst case,
  assuming any standard sorting algorithm with
  $O(n \log n)$ worst-case time complexity (e.g., Mergesort).
\end{theorem}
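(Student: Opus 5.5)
Since \cref{alg:bestauc} itself is not reproduced in this excerpt, I will assume it follows the same sort-and-prefix-sum pattern as \cref{alg:auc-delta-vhat}. The plan is to reduce $\widehat{\bestauc_1} = \frac{2}{n(n-1)}\sum_{i<j}\phimin_\auc(\eta_i,\eta_j)$ to a closed form involving only sorted values and prefix sums. Write $m(z,w)=\min\{z(1-w),w(1-z)\}$. The key observation is that $z(1-w)-w(1-z)=z-w$, so $m(z,w)=z(1-w)$ exactly when $z\le w$. Hence, after sorting so that $\eta_{(1)}\le\dots\le\eta_{(n)}$, every pair $i<j$ in sorted order satisfies
\begin{equation}
  m(\eta_{(i)},\eta_{(j)})=\eta_{(i)}(1-\eta_{(j)}) .
\end{equation}
Ties are harmless, because the two arguments of the min coincide when $\eta_{(i)}=\eta_{(j)}$.

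Next I collapse the double sum. Summing over $j>i$ gives
\begin{equation}
  \sum_{i<j}\eta_{(i)}(1-\eta_{(j)})=\sum_{j=1}^n (1-\eta_{(j)})P_{j-1},
  \qquad P_k\coloneqq\sum_{i\le k}\eta_{(i)} .
\end{equation}
This yields
\begin{equation}
  \widehat{\bestauc_1}=1-\frac{1}{\theta(1-\theta)n(n-1)}\sum_{j=1}^n(1-\eta_{(j)})P_{j-1} .
\end{equation}
Correctness then amounts to checking that \cref{alg:bestauc} evaluates exactly this expression. Concretely, I would verify a loop invariant: after iteration $j$, the accumulator equals $\sum_{k\le j}(1-\eta_{(k)})P_{k-1}$ and the running prefix equals $P_j$. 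This follows by induction on $j$. If the algorithm instead uses a rank formula such as $\sum_i \eta_{(i)}\bigl((n-i)-(\text{suffix sum})\bigr)$, I would show equivalence by the same exchange of summation order.

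For the complexity bound, sorting costs $O(n\log n)$ in the worst case under the stated assumption (e.g.\ Mergesort). The prefix sums and the single accumulation loop cost $O(1)$ per index, so $O(n)$ in total, and the final normalization is $O(1)$. The total is therefore $O(n\log n)$.

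The main obstacle is the identification of the min branch via the sorted order. Everything else is bookkeeping, but this step is what turns a $\Theta(n^2)$ sum into a separable one. It requires care that the algorithm's indexing convention for $P_{j-1}$ (strict versus non-strict prefix) matches the convention used in the derivation, and that tie handling does not double count.
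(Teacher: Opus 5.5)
Your proposal is correct and follows essentially the same route as the paper. The paper phrases the key step as $\min\{a(1-b),b(1-a)\}=\min\{a,b\}(1-\max\{a,b\})$, which is equivalent to your identity $z(1-w)-w(1-z)=z-w$; it then collapses the double sum into $\sum_{j}(1-\eta_{(j)})\sum_{i<j}\eta_{(i)}$, tracks the inner sum with a running prefix variable, and adds the $O(n\log n)$ sort to an $O(n)$ loop.
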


\begin{proof}
  First, we prove the correctness of \cref{alg:bestauc}.
  Observe that 
  $\min \set{ a (1 - b), b (1 - a)} = \min \set{a, b} (1 - \max \set{a, b})$.
  Since $\eta_{(1)}, \dots, \eta_{(n)}$ are sorted in ascending order,
  we have
  \begin{equation}
    \begin{aligned}
      \sum_{i < j} 
      \min \set{ \eta_{(i)} (1 - \eta_{(j)}), \eta_{(j)} (1 - \eta_{(i)}) }
      &=
      \sum_{i < j} 
      \eta_{(i)} (1 - \eta_{(j)})
      \\ &=
      \sum_{j = 2}^{n} \left[
        (1 - \eta_{(j)})
        \sum_{i = 1}^{j - 1} \eta_{(i)} 
      \right]
      \text{.}
    \end{aligned}
  \end{equation}
  At the $j$-th iteration of the for loop in \cref{alg:bestauc},
  the variable $\mathrm{PREFIX}$ stores
  the prefix sum
  $\sum_{i = 1}^{j - 1} \eta_{(i)}$.
  Therefore, \cref{alg:bestauc} correctly computes 
  $\widehat{\bestauc_1}$.

  For the time complexity analysis,
  \cref{alg:bestauc} first sorts $\eta_{1}, \dots, \eta_{n}$, 
  which takes $O(n \log n)$ time by assumption.
  Then, the for loop iterates only $\Theta(n)$ times,
  and each iteration takes a constant time.
  Therefore, the total worse-case time complexity of \cref{alg:bestauc}
  is $O(n \log n)$.
\end{proof}

The max-based estimator $\widehat{\bestauc_2}$
can also be computed in $O(n \log n)$ time
by a similar algorithm, as shown in
\cref{alg:bestauc-max}.

\begin{algorithm}[t]
  \caption{An $O(n \log n)$-time algorithm for estimating $\bestauc$ with the max formula}
  \label{alg:bestauc-max}
  \begin{algorithmic}
    \INPUT Soft labels $\eta_{1}, \dots, \eta_{n}$, class prior $\theta$
    \STATE $\eta_{(1)}, \dots, \eta_{(n)} \gets$ sort $\eta_{1}, \dots, \eta_{n}$ in ascending order
    \STATE $\mathrm{SUM} \gets 0$
    \STATE $\mathrm{PREFIX} \gets 0$
    \FOR{$j = 2$ \textbf{to} $n$}
      \STATE $\mathrm{PREFIX} \gets \mathrm{PREFIX} + 1 - \eta_{(j-1)}$
      \STATE $\mathrm{SUM} \gets \mathrm{SUM} 
      + \eta_{(j)} \cdot \mathrm{PREFIX}$
    \ENDFOR
    \OUTPUT $\frac{\mathrm{SUM}}{n (n-1) \theta (1-\theta)}$
  \end{algorithmic}
\end{algorithm}

\begin{theorem}[store=thm:alg-bestauc-max]
  \label{thm:alg-bestauc-max}
  \cref{alg:bestauc-max} computes 
  $\widehat{\bestauc_2}$ in $O(n \log n)$ time
  in the worst case,
  assuming any standard comparison-based sorting algorithm with 
  $O(n \log n)$ worst-case time complexity (e.g., Mergesort).
\end{theorem}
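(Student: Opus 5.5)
Following the proof of \cref{thm:alg-bestauc}, the argument has two parts: correctness of the output, then the running-time bound.

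\textbf{Correctness.} First I record the algebraic identity
\[
\max\set{a(1-b),\, b(1-a)} = \max\set{a,b}\,\bigl(1-\min\set{a,b}\bigr) \qquad (a,b\in[0,1]).
\]
To check it, expand: $b(1-a) - a(1-b) = b - a$. So the maximum is $b(1-a)$ exactly when $b \geq a$. With the sorted values, every pair $i<j$ then contributes $\eta_{(j)}(1-\eta_{(i)})$, and
\[
\sum_{i<j} \max\set{\eta_{(i)}(1-\eta_{(j)}),\, \eta_{(j)}(1-\eta_{(i)})}
= \sum_{j=2}^{n} \eta_{(j)} \sum_{i=1}^{j-1} \bigl(1-\eta_{(i)}\bigr).
\]
Since $\phimax_\auc$ is symmetric in its arguments, this sum over unordered pairs equals the sum over the original indices.

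Next I verify by induction on $j$ that, just after the update at iteration $j$, $\mathrm{PREFIX} = \sum_{i=1}^{j-1}(1-\eta_{(i)})$. Hence after the loop,
\[
\mathrm{SUM} = \sum_{i<j} \max\set{\eta_{(i)}(1-\eta_{(j)}),\, \eta_{(j)}(1-\eta_{(i)})}.
\]

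Finally I match the normalization. By \cref{lem:bestauc}, $\widehat{\bestauc_2}$ is the U-statistic
\[
\widehat{\bestauc_2} = \frac{2}{n(n-1)} \sum_{i<j} \frac{1}{2\theta(1-\theta)} \max\set{\eta_i(1-\eta_j),\, \eta_j(1-\eta_i)}
= \frac{\mathrm{SUM}}{n(n-1)\theta(1-\theta)},
\]
which is exactly the output of \cref{alg:bestauc-max}.

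\textbf{Complexity.} Sorting takes $O(n\log n)$ time by assumption. The loop runs $n-1$ times with $O(1)$ arithmetic per iteration, and the final division is $O(1)$. The total is therefore $O(n\log n)$ in the worst case.

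The only step that needs care is the identity reducing the pairwise maximum to an ordered product, including the case of ties $\eta_{(i)}=\eta_{(j)}$. Ties cause no problem because both candidate terms are then equal. The remaining steps are routine bookkeeping.
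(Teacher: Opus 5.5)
Your proof is correct and follows the paper's argument step for step. The argument uses the identity $\max\{a(1-b), b(1-a)\} = \max\{a,b\}(1-\min\{a,b\})$, reduces the sum over the sorted sample to $\sum_{j=2}^{n}\eta_{(j)}\sum_{i=1}^{j-1}(1-\eta_{(i)})$, tracks the loop with a prefix-sum invariant, and bounds the cost by an $O(n\log n)$ sort plus a linear pass. Your invariant $\mathrm{PREFIX}=\sum_{i=1}^{j-1}(1-\eta_{(i)})$ is the right one; the paper's text states $\sum_{i=1}^{j-1}\eta_{(i)}$ at this point, which appears to be a typo carried over from the proof for the min formula.
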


\begin{proof}
  We prove the correctness of \cref{alg:bestauc-max}.
  Observe that 
  $\max \set{ a (1 - b), b (1 - a)} = \max \set{a, b} (1 - \min \set{a, b})$.
  Since $\eta_{(1)}, \dots, \eta_{(n)}$ are sorted in ascending order,
  we have
  \begin{equation}
    \begin{aligned}
      n (n - 1) \theta (1 - \theta) \widehat{\bestauc_2}
      &=
      \sum_{i<j} 
      \max \set{
        \eta_{i} (1 - \eta_{j}), 
        \eta_{j} (1 - \eta_{i})
      }
      \\ &=
      \sum_{i<j} 
      \max \set{
        \eta_{(i)} (1 - \eta_{(j)}), 
        \eta_{(j)} (1 - \eta_{(i)})
      }
      \\ &=
      \sum_{i<j} 
      \eta_{(j)} (1 - \eta_{(i)})
      \\ &=
      \sum_{j=2}^{n} 
      \sum_{i=1}^{j-1}
      \eta_{(j)} (1 - \eta_{(i)})
      \\ &=
      \sum_{j=2}^{n} 
      \left[
        \eta_{(j)} 
        \sum_{i=1}^{j-1}
        (1 - \eta_{(i)})
      \right]
    \end{aligned}
  \end{equation}
  At the $j$-th iteration of the for loop in \cref{alg:bestauc-max},
  the variable $\mathrm{PREFIX}$ stores
  the prefix sum
  $\sum_{i = 1}^{j - 1} \eta_{(i)}$.
  Therefore, \cref{alg:bestauc-max} correctly computes 
  $\widehat{\bestauc}$.

  For the time complexity analysis,
  \cref{alg:bestauc-max} first sorts $\eta_{1}, \dots, \eta_{n}$, 
  which takes $O(n \log n)$ time by assumption.
  Then, the for loop iterates $\Theta(n)$ times,
  and each iteration takes a constant time.
  Therefore, the total worse-case time complexity of \cref{alg:bestauc-max}
  is $O(n \log n)$.
\end{proof}

\begin{remark}
  The algorithm for computing the discriminant estimator
  $\widehat{\Delta}_\auc$
  in $O(n \log n)$ time
  is shown in \cref{alg:auc-delta-vhat}.
\end{remark}

\subsection{Clipping the AUC estimator}
\label{sec:clip-auc}

Another minor note
about the estimation of $\bestauc$:
although the true value
$\bestauc$ lies between $0.5$ and $1$
for any data distribution,
it is possible that 
the estimates 
$\widehat{\bestauc_{1}}$
and 
$\widehat{\bestauc_{2}}$
can fall outside the valid range $[0.5, 1]$
in rare events.\footnote{
  For example, imagine that
  we draw a tiny sample of size $n = 2$,
  which happens to be $\set{\eta_{1}, \eta_{2}} = \set{\frac{1}{2}, \frac{1}{2}}$.
  Then, we have
  $\widehat{\bestauc_{1}} = 1 - \frac{1}{8 \theta (1 - \theta)}$,
  which is less than $0.5$
  unless $\theta = 0.5$.
}
To address this issue,
we can simply clip the raw estimates
to the valid range $[0.5, 1]$.
We denote the clipped estimators by
$
  \widetilde{\bestauc_k} 
  \coloneqq 
  \clip_{0.5}^{1}(\widehat{\bestauc_k})
  = \min \{
    1,
    \max \{
      0.5, 
      \widehat{\bestauc_k}
    \}
  \}
$,
$k = 1, 2$.
Here, $\clip_{a}^{b}(z)
=
\min \set{
  b,
  \max \set{ a, z }
}$ 
represents 
the operation that clips its argument to the interval
$[a, b]$.
It is straightforward to verify that
each clipped estimator is always 
more accurate than
the unclipped counterpart,
i.e.,
$|\widetilde{\bestauc_k} - \bestauc| \leq |\widehat{\bestauc_k} - \bestauc|$.
Again,
we present our theoretical analysis mainly for
the min formula estimator
$\widetilde{\bestauc_1}$
and denote it
just by $\widetilde{\bestauc}$
for brevity,
but the arguments for $\widetilde{\bestauc_2}$
are similar.
Note that the BER estimators do not require clipping
as both the true value $\bestber$
and the estimates 
$\widehat{\bestber_1}$,
$\widehat{\bestber_2}$
lie in $[0, 0.5]$.

\subsection{
  Estimation Error Bounds
  for
  Estimators Based on Clean Soft Labels
}
\label{sec:proof-clean:bound}

\subsubsection{
  Preliminaries:
  Hoeffding's 
  and
  Bernstein's
  inequalities
  for U-statistics
}

\emph{U-statistics} are a broad class of statistics 
introduced by \citet{hoeffding1948class}. 
Let $\phi: \R^d \to \R$ be a function 
that is symmetric, 
i.e., invariant to permutations of its $d$ arguments.
Given $n \geq d$ i.i.d.\ observations
$Z_1, \dots, Z_n$, 
consider the estimator
\begin{equation}
  \label{eq:u-statistic}
  U_n 
  \coloneqq 
  \binom{n}{d}^{-1} \sum_{1 \leq i_1 < \ldots < i_d \leq n} 
  \phi(Z_{i_1}, \ldots, Z_{i_d})
  \text{.}
\end{equation}
$U_n$ is called a U-statistic with a kernel $\phi$ of degree $d$. 

For $d = 1$, $U_n$ is simply an average of i.i.d.\ random variables,
so the standard concentration inequalities such as
the famous Hoeffding's inequality can be used
to analyze its deviation from the mean.
However, for $d \geq 2$, the summands in \eqref{eq:u-statistic}
are no longer independent.
Nonetheless, 
in the same paper as the one that introduced 
his famous concentration inequality 
for sums of independent random variables,
\citet{hoeffding1963probability}
established a similar concentration bound
for U-statistics:
if the kernel $\phi$ is bounded
in an interval $[a, b]$,
then
for any $\epsilon > 0$,
\begin{equation}
  \label{eq:hoeffding-u-statistic}
  \P{
    \abs{
      U_{n} - \E{U_{n}}
    }
    > \epsilon
  }
  \leq
  2 \exp\paren{
    - \frac{2 (n/d) \epsilon^2}{(b - a)^2}
  }
  \text{.}
\end{equation}
Or equivalently,
for any $\delta \in (0, 1)$,
with probability at least $1 - \delta$,
it holds that
\begin{equation}
  \label{eq:hoeffding-u-statistic:tail}
  \abs{
    U_{n} - \E{U_{n}}
  }
  \leq
  (b - a)
  \sqrt{
    \frac{\log (2/\delta)}{2 n/d}
  }
  \text{.}
\end{equation}
One thing to note is that,
although we can also apply McDiarmid's inequality
(also known as the bounded difference inequality)
to obtain a similar concentration bound,
Hoeffding's bound \eqref{eq:hoeffding-u-statistic:tail}
is tighter by a multiplicative factor of $\sqrt{d}$.

\citet{hoeffding1963probability}
also provided a Bernstein-type concentration bound
for U-statistics,
which takes into account the variance 
of the kernel:
if the kernel $\phi$ is bounded as
$|\phi(Z_{1}, \dots, Z_{d}) - \E{\phi(Z_{1}, \dots, Z_{d})}| \leq c$,
for any $\epsilon > 0$,
we have
\begin{equation}
  \label{eq:bernstein-u-statistic}
  \P{
    \abs{
      U_{n} - \E{U_{n}}
    }
    > \epsilon
  }
  \leq
  2 \exp\paren{
    - \frac{(n/d) \epsilon^2}{2 \paren{\sigma^{2} + c \epsilon / 3}}
  }
  \text{,}
\end{equation}
where
$\sigma^{2} \coloneqq \Var{\phi(Z_{1}, \dots, Z_{d})}$ 
is the variance of the kernel.
It also can be rewritten as a tail bound:
for any $\delta \in (0, 1)$,
with probability at least $1 - \delta$,
it holds that
\begin{equation}
  \label{eq:bernstein-u-statistic:tail}
  \abs{
    U_{n} - \E{U_{n}}
  }
  \leq
  \sqrt{
    \frac{2 \sigma^{2} \log (2/\delta)}{n/d}
  }
  +
  \frac{2 c \log (2/\delta)}{3 (n/d)}
  \text{.}
\end{equation}
Bernstein-type bounds 
\eqref{eq:bernstein-u-statistic},
\eqref{eq:bernstein-u-statistic:tail}
are often tighter than
Hoeffding-type bounds
\eqref{eq:hoeffding-u-statistic},
\eqref{eq:hoeffding-u-statistic:tail}
when the standard deviation $\sigma$
is much smaller than 
the width $b - a$ of the kernel's range.

\subsubsection{
  Proofs of the estimation error bounds
}

In the simplest setting of this paper
where 
clean soft labels $\set{\eta_{i}}_{i=1}^n$ 
are available
and the class prior $\theta$ is known,
our estimators 
$\widehat{\bestber}$
and
$\widehat{\bestauc}$
can be seen as U-statistics
of degrees $d = 1, 2$, respectively,
with bounded kernels.
Thus,
the above concentration inequalities
can be applied
to obtain the bounds
(\cref{thm:ber-auc:bound,thm:ber-auc:bound:auc:bernstein})
on the estimation errors
of these estimators.

The bound for 
$\widehat{\bestber}$ 
is a straightforward
application of
the Hoeffding bound
\eqref{eq:hoeffding-u-statistic:tail},
so we focus on 
the $\widehat{\bestauc}$ case 
in what follows.
The kernel
\begin{equation}
  \phimin_{\auc}(z_1, z_2) 
  = 
  1 - 
  \frac{1}{2 \theta (1 - \theta)} 
  \min \set{
    z_1 (1 - z_2), z_2 (1 - z_1)
  }
\end{equation}
takes values between
$a = 1 - \frac{1}{8 \theta (1 - \theta)}$
and
$b = 1$,
so
we can use
the Hoeffding bound
\eqref{eq:hoeffding-u-statistic:tail}
to obtain
the following result:
\begin{equation}
  \abs{
    \widehat{\bestauc}
    -
    \bestauc
  }
  \leq
  \frac{1}{8 \theta (1 - \theta)}
  \sqrt{
    \frac{\log (2/\delta)}{n}
  }
  \quad
  \text{with probability at least }
  1 - \delta 
  \text{.}
\end{equation}
The $\theta$-dependence of the right-hand side
comes from the fact that, 
unlike in the case of
$\widehat{\bestber}$,
the range $b - a = \frac{1}{8 \theta (1 - \theta)}$
of the kernel $\phimin$ depends on the class prior $\theta$.
Under severe class imbalance
where $\theta \to 0$ (or $1$),
the factor $\frac{1}{8 \theta (1 - \theta)}$
blows up at the rate of $O(1/\theta)$
(or $O(1/(1 - \theta))$).

We can mitigate this issue and slow down the blow-up rate
to $O(1/\sqrt{\theta})$
by using
the Bernstein bound
\eqref{eq:bernstein-u-statistic:tail}
instead of the Hoeffding bound.
The resulting bound is stated in
\cref{thm:ber-auc:bound:auc:bernstein}.

\begin{lemma}
  \label{lem:ber-auc:bound:auc:variance}
  The variance of the kernel $\phimin_{\auc}$
  is bounded as
  \begin{equation}
    \Var{\phimin_{\auc}(\eta(x), \eta(x'))}
    \leq
    \frac{1}{16 \theta (1 - \theta)}
    \text{.}
  \end{equation}
\end{lemma}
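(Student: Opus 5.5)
The plan is to pull out the affine scaling and reduce the claim to a bound on the variance of the min term. Write $m(z_1, z_2) \coloneqq \min\set{z_1(1 - z_2), z_2(1 - z_1)}$ as in the proof of \cref{lem:auc-min-vs-max:variance-difference}. Since $\phimin_{\auc} = 1 - \frac{1}{2\theta(1-\theta)} m$, we have
\begin{equation}
  \Var{\phimin_{\auc}(\eta(x), \eta(x'))} = \frac{1}{4\theta^2(1-\theta)^2} \Var{m(\eta(x), \eta(x'))}.
\end{equation}
The claim is therefore equivalent to $\Var{m(\eta, \eta')} \le \frac{1}{4}\theta(1-\theta)$, where $\eta = \eta(x)$ and $\eta' = \eta(x')$.

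To prove this, I use the elementary inequality for a variable with values in $[0, c]$:
\begin{equation}
  \Var{m} \le \E{m^2} \le c\, \E{m}.
\end{equation}
This needs two ingredients.

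\textbf{First ingredient: a uniform bound on $m$.} I will show $0 \le m \le \frac14$. Nonnegativity is clear. For the upper bound, the minimum of two nonnegative numbers is at most their geometric mean, so
\begin{equation}
  m(z_1, z_2) \le \sqrt{z_1(1-z_1)\, z_2(1-z_2)} \le \tfrac14,
\end{equation}
because $z(1-z) \le \frac14$ on $[0,1]$.

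\textbf{Second ingredient: a bound on the mean of $m$.} Bound the minimum by one of its arguments and use the independence of $\eta$ and $\eta'$:
\begin{equation}
  \E{m(\eta, \eta')} \le \E{\eta(1-\eta')} = \E{\eta}\,\E{1-\eta'} = \theta(1-\theta).
\end{equation}

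Combining the two ingredients gives
\begin{equation}
  \Var{m} \le \E{m^2} \le \tfrac14 \E{m} \le \tfrac14 \theta(1-\theta).
\end{equation}
Substituting into the scaling identity yields exactly
\begin{equation}
  \Var{\phimin_{\auc}(\eta(x), \eta(x'))} \le \frac{\theta(1-\theta)/4}{4\theta^2(1-\theta)^2} = \frac{1}{16\theta(1-\theta)}.
\end{equation}

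The only step that needs any thought is the uniform bound $m \le \frac14$, handled by the geometric-mean inequality above. Everything else is routine. The point to note is that the Bernstein-friendly $1/\theta$ rate comes from $\E{m} \le \theta(1-\theta)$ rather than from the range of the kernel. This is also why the bound improves on the squared range $\bigl(\frac{1}{8\theta(1-\theta)}\bigr)^2$ used in the Hoeffding approach.
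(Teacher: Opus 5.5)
Your proof is correct and follows the paper's argument. Like the paper, you show $\Var{m} \le \E{m^2} \le \frac14 \E{m} \le \frac14 \theta(1-\theta)$ and then rescale by $\frac{1}{4\theta^2(1-\theta)^2}$. The differences are minor. The paper gets $\E{m} \le \theta(1-\theta)$ from the identity $\E{m} = 2\theta(1-\theta)(1-\bestauc)$ together with $\bestauc \ge \frac12$, whereas you bound the minimum by one branch and use independence. You also supply a justification of $m \le \frac14$, which the paper simply asserts.
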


\begin{proof}
  Let $m \coloneqq \min \set{\eta(x) (1 - \eta(x')), \eta(x') (1 - \eta(x))}$.
  Then, we have $0 \leq m \leq \frac{1}{4}$
  and
  $\E{m} = 2 \theta (1 - \theta) (1 - \bestauc) \leq \theta (1 - \theta)$.
  So it follows that
  \begin{equation}
    \Var{m}
    \leq
    \E{m^2}
    \leq
    \frac{1}{4} \E{m}
    \leq
    \frac{1}{4} \theta (1 - \theta)
    \text{.}
  \end{equation}
  Therefore,
  we have
  \begin{equation}
    \Var{\phimin_{\auc}(\eta(x), \eta(x'))}
    =
    \frac{1}{4 \theta^2 (1 - \theta)^2}
    \Var{m}
    \leq
    \frac{1}{16 \theta (1 - \theta)}
    \text{.}
  \end{equation}
\end{proof}

\begin{lemma}
  \label{lem:ber-auc:bound:auc:range}
  The kernel $\phimin_{\auc}$
  is bounded as
  \begin{equation}
    \abs{
      \phimin_{\auc}(\eta(x), \eta(x'))
      -
      \E{\phimin_{\auc}(\eta(x), \eta(x'))}
    }
    \leq
    \frac{1}{8 \theta (1 - \theta)}
    \text{.}
  \end{equation}
\end{lemma}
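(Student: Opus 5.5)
We need to show $|\phimin_\auc(\eta,\eta') - \bestauc| \le \frac{1}{8\theta(1-\theta)}$. Both the kernel value and its mean lie in the same interval, so the deviation is at most the width of that interval. The only things to establish are where the kernel lives and where its mean lives.

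The plan is to reduce everything to the quantity $m \coloneqq \min\{\eta(x)(1-\eta(x')),\,\eta(x')(1-\eta(x))\}$ from the proof of \cref{lem:ber-auc:bound:auc:variance}.

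First, note that $\phimin_\auc(\eta(x),\eta(x')) - \E{\phimin_\auc(\eta(x),\eta(x'))} = -\frac{1}{2\theta(1-\theta)}(m - \E{m})$. It therefore suffices to show $|m - \E{m}| \le \frac14$.

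Next, bound the range of $m$. Using the identity from the proof of \cref{thm:alg-bestauc}, $m = \min\{a,b\}(1-\max\{a,b\})$ with $a=\eta(x)$ and $b=\eta(x')$. This gives
\begin{equation}
  0 \le m \le \min\{a,b\}\bigl(1-\min\{a,b\}\bigr) \le \tfrac14 .
\end{equation}
Nonnegativity holds because both factors are nonnegative. The upper bound holds because $t(1-t)\le \frac14$ on $[0,1]$.

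Finally, since $m\in[0,\frac14]$ almost surely, its mean $\E{m}$ also lies in $[0,\frac14]$. Two points of an interval of length $\frac14$ differ by at most $\frac14$, so $|m-\E{m}|\le\frac14$. Multiplying by $\frac{1}{2\theta(1-\theta)}$ yields the claimed bound $\frac{1}{8\theta(1-\theta)}$.

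An alternative route is to quote the range $[1-\frac{1}{8\theta(1-\theta)},\,1]$ of the kernel already stated before \cref{lem:ber-auc:bound:auc:variance}, together with the fact that $\bestauc = \E{\phimin_\auc(\eta(x),\eta(x'))}$ (\cref{lem:bestauc}) lies in the same interval.

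There is no real obstacle here. The only step needing care is the elementary bound $m\le\frac14$, which the factorization above handles directly.
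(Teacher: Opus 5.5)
Your proof is correct and follows the paper's argument: reduce to $m$, note $0 \le m \le \frac14$, and conclude $|m - \E{m}| \le \frac14$ before rescaling by $\frac{1}{2\theta(1-\theta)}$. The only difference is cosmetic: the paper locates the mean more finely as $\E{m} = 2\theta(1-\theta)(1-\bestauc) \le \theta(1-\theta)$, whereas you simply use $\E{m} \in [0,\frac14]$, which works just as well.
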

\begin{proof}
  Continuing from the proof of \cref{lem:ber-auc:bound:auc:variance},
  $m = \min \set{\eta(x) (1 - \eta(x')), \eta(x') (1 - \eta(x))}$
  is bounded as
  $0 \leq m \leq \frac{1}{4}$,
  and its mean satisfies
  $\E{m} = 2 \theta (1 - \theta) (1 - \bestauc) \in [0, \theta (1 - \theta)]$.
  Thus,
  we have
  $m - \E{m} \leq \frac{1}{4}$ 
  and
  $\E{m} - m \leq \theta (1 - \theta) \leq \frac{1}{4}$,
  which implies
  \begin{equation}
    \abs{
      \phimin_{\auc}(\eta(x), \eta(x'))
      -
      \E{\phimin_{\auc}(\eta(x), \eta(x'))}
    }
    =
    \frac{1}{2 \theta (1 - \theta)}
    \abs{m - \E{m}}
    \leq
    \frac{1}{8 \theta (1 - \theta)}
    \text{.}
  \end{equation}
\end{proof}

\begin{lemma}
  \label{lem:ber-auc:bound:auc:bernstein}
  For any
  $\delta \in (0, 1)$
  and
  $n \geq \frac{\log (2/\delta)}{9 \theta (1 - \theta)}$,
  with probability at least $1 - \delta$,
  it holds that
  \begin{equation}
    \abs{
      \widetilde{\bestauc}
      -
      \bestauc
    }
    \leq
    \abs{
      \widehat{\bestauc}
      -
      \bestauc
    }
    \leq
    \sqrt{
      \frac{\log (2/\delta)}{\theta (1 - \theta) n}
    }
    \text{.}
  \end{equation}
\end{lemma}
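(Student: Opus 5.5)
The first inequality, $|\widetilde{\bestauc} - \bestauc| \leq |\widehat{\bestauc} - \bestauc|$, holds deterministically by \cref{sec:clip-auc}. Clipping to $[0.5, 1]$ is the projection onto an interval that contains $\bestauc$, so it can only reduce the distance to $\bestauc$. It therefore remains to bound $|\widehat{\bestauc} - \bestauc|$.

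The estimator $\widehat{\bestauc}$ is a U-statistic of degree $d = 2$ with kernel $\phimin_{\auc}$. By \cref{lem:bestauc}, it is unbiased for $\bestauc$, so we may apply the Bernstein tail bound \eqref{eq:bernstein-u-statistic:tail} with $n/d = n/2$. We plug in the variance bound $\sigma^2 \leq \frac{1}{16\theta(1-\theta)}$ from \cref{lem:ber-auc:bound:auc:variance} and the range bound $c = \frac{1}{8\theta(1-\theta)}$ from \cref{lem:ber-auc:bound:auc:range}. Writing $L \coloneqq \log(2/\delta)$, this gives, with probability at least $1-\delta$,
\begin{equation}
  |\widehat{\bestauc} - \bestauc|
  \leq
  \sqrt{\frac{4 L}{16\theta(1-\theta) n}}
  + \frac{4 L}{3 \cdot 8 \theta(1-\theta) n}
  =
  \frac{1}{2}\sqrt{\frac{L}{\theta(1-\theta) n}}
  + \frac{L}{6\theta(1-\theta) n}
  \text{.}
\end{equation}

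The remaining step is to absorb the second (linear) term into the first using the sample-size condition. Set $r \coloneqq \sqrt{L/(\theta(1-\theta)n)}$. The second term equals $r^2/6$, and the assumption $n \geq \frac{L}{9\theta(1-\theta)}$ is exactly $r \leq 3$. Hence $r^2/6 \leq r/2$, and the total is at most $r/2 + r/2 = r$, which is the claimed bound.

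The only delicate point is the constant bookkeeping. Specifically, we must use $n/d = n/2$ in both terms of \eqref{eq:bernstein-u-statistic:tail} and check that the threshold $9\theta(1-\theta)$ matches $r \leq 3$. The threshold has slack to spare, so even a slightly worse constant from the bounds above would still close the argument.
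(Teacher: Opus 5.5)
Your proposal is correct and matches the paper's proof: Bernstein's bound \eqref{eq:bernstein-u-statistic:tail} with $n/d = n/2$, together with \cref{lem:ber-auc:bound:auc:variance,lem:ber-auc:bound:auc:range}, gives $\frac{1}{2}\sqrt{\frac{\log(2/\delta)}{\theta(1-\theta)n}} + \frac{\log(2/\delta)}{6\theta(1-\theta)n}$. The sample-size condition is what lets the second term be absorbed into the first, and the clipping inequality holds deterministically. Your explicit $r \leq 3$ bookkeeping simply spells out the paper's one-line remark that the first term dominates.
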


\begin{proof}
  With \cref{lem:ber-auc:bound:auc:variance,lem:ber-auc:bound:auc:range}
  proven above,
  we are now ready to apply
  the Bernstein bound
  \eqref{eq:bernstein-u-statistic:tail}
  to the estimator $\widehat{\bestauc}$.
  As a result,
  for any $\delta \in (0, 1)$,
  with probability at least $1 - \delta$,
  we have
  \begin{equation}
    \abs{
      \widehat{\bestauc}
      -
      \bestauc
    }
    \leq
    \sqrt{
      \frac{\log (2/\delta)}{4 \theta (1 - \theta) n}
    }
    +
    \frac{\log (2/\delta)}{6 \theta (1 - \theta) n}
    \text{.}
  \end{equation}
  Furthermore,
  the first term dominates the second term
  since
  $n \geq \frac{\log (2/\delta)}{9 \theta (1 - \theta)}$,
  so
  the above bound can be simplified as
  \begin{equation}
    \abs{
      \widehat{\bestauc}
      -
      \bestauc
    }
    \leq
    \sqrt{
      \frac{\log (2/\delta)}{\theta (1 - \theta) n}
    }
    \text{.}
  \end{equation}

  Finally, the claim follows as
  we have $\abs{\widetilde{\bestauc} - \bestauc} \leq \abs{\widehat{\bestauc} - \bestauc}$
  by the definition of the clipped estimator $\widetilde{\bestauc}$.
\end{proof}

\getkeytheorem{thm:ber-auc:bound:auc:bernstein}

\begin{proof}
  If 
  $n < \frac{4 \log (2 / \delta)}{\theta (1 - \theta)}$,
  then the claim trivially holds
  because
  $\text{(RHS)} = \sqrt{\frac{\log (2 / \delta)}{\theta (1 - \theta) n}} > \frac{1}{2}$
  and the left-hand side is at most $\frac{1}{2}$.
  If
  $n \geq \frac{4 \log (2 / \delta)}{\theta (1 - \theta)}$,
  then the claim follows from
  \cref{lem:ber-auc:bound:auc:bernstein}.
\end{proof}

\subsection{Unknown Class Prior}
\label{sec:proof-clean:unknown-class-prior}

\subsubsection{Failure of the estimator by \citet{jeong2024data}}

\citet{jeong2024data} discussed the estimation of
the FPR and FNR of the Bayes optimal classifier
from soft labels, 
and they proposed to estimate the class prior by 
$\frac{1}{n} \sum_{i=1}^{n} \I{\eta_{i} \geq 0.5}$.
However, this estimator relies on the naive assumption that
$\P{y = 1} = \P{\eta(x) \geq 0.5}$.
It is easy to construct examples 
where this assumption does not hold.

\begin{example}
  \label{ex:jeong-class-prior-counterexample}
  Consider a simple distribution over 
  $\mathcal{X} \times \mathcal{Y} = \set{x_{0}} \times \set{0, 1}$ 
  where $\P{y = 0} = \P{y = 1} = 0.5$.
  Then, $\P{y = 1} = 0.5$ and $\P{\eta(x) \geq 0.5} = \P{0.5 \geq 0.5} = 1$
  does not match.
  In this case, the estimator by \citet{jeong2024data} always returns $1$
  however large the sample size is,
  while the true class prior is $0.5$.
  Therefore, their estimator is biased and statistically inconsistent.

  Also, it is trivial to extend this example to allow 
  $\mathcal{X}$ to have multiple (or even infinitely many) elements
  with various $\eta(x)$ values.
\end{example}

\subsubsection{Clipped estimator of the class prior and plug-in estimators}

Here, we first show that 
using Hoeffding's inequality
to bound $|\thetahat - \theta|$
(\cref{lem:unknown-class-prior})
results in a estimation error bound 
for the BER estimator
of order $O_p(\frac{1}{\theta (1 - \theta) \sqrt{n}})$
(\cref{thm:unknown-class-prior:ber-hoeffding}).
Then, 
by employing Bernstein's inequality
instead of Hoeffding's inequality
(\cref{lem:unknown-class-prior:bernstein}),
we improve the $\theta$-dependence
to $O_p(\frac{1}{\sqrt{\theta (1 - \theta) n}})$
(\cref{thm:unknown-class-prior:ber}).
After that, 
a similar result for the AUC estimator
(\cref{thm:unknown-class-prior:auc})
is presented.
Finally, we prove the lemmas used in the proofs
of these theorems.

\begin{lemma}[store=lem:unknown-class-prior]
  \label{lem:unknown-class-prior}
  Under \cref{ass:unknown-class-prior:epsilon},
  for any
  $\delta \in (0, 1)$
  and
  $n \geq 1$,
  with probability at least $1 - \delta$,
  we have
  $
  \lvert 
    \thetahat - \theta 
  \rvert
  \leq
  \sqrt{\frac{2 \log (2/\delta)}{n}}
  $.
\end{lemma}

\begin{proof}
  We decompose the estimation error as
  \begin{equation}
    \lvert 
      \thetahat - \theta 
    \rvert
    \leq
    \lvert 
      \clip_{\tau_{n}}^{1 - {\tau_{n}}}(\bar{\eta}) - \bar{\eta}
    \rvert
    +
    \lvert 
      \bar{\eta} - \theta
    \rvert
    \text{.}
  \end{equation}
  By Hoeffding's inequality, the second term is bounded as
  \begin{equation}
    \lvert 
      \bar{\eta} - \theta
    \rvert
    \leq \sqrt{\frac{\log (2 / \delta)}{2n}}
  \end{equation}
  with probability at least $1 - \delta$.
  The first term is the error due to clipping,
  which is at most $\tau_{n} = \frac{c}{n}$.
  For all
  $n \geq
  n_0 \coloneqq \lceil \frac{2c^{2}}{\log (\frac{2}{\delta})} \rceil$,
  we have
  $\frac{c}{n} \leq \sqrt{\frac{\log(2 / \delta)}{2n}}$ 
  and hence
  \begin{equation}
    \lvert \thetahat - \theta \rvert
    \leq 2 \sqrt{\frac{\log (2 / \delta)}{2n}}
    \text{.}
  \end{equation}
  Finally, the proof is completed by noting that
  the above inequality holds for all $n \geq 1$ 
  because
  $n_0 = 1$ 
  for any 
  $\delta \in (0, 1)$ 
  and
  $c \in (0, \frac{1}{2})$.
\end{proof}

\begin{theorem}
  \label{thm:unknown-class-prior:ber-hoeffding}
  For any $\epsilon > 0$,
  it holds that
  \begin{equation}
    \P{
      \abs{
        \widehat{\bestber}(\thetahat)
        -
        \bestber
      }
      > \epsilon
    }
    \leq
    4 \exp \paren{
      - \frac{
        n \theta^{2} (1 - \theta)^{2} \epsilon^{2}
      }{8}
    }
    \text{.}
  \end{equation}
  Or equivalently,
  for any $\delta \in (0, 1)$,
  with probability at least $1 - \delta$,
  \begin{equation}
    \abs{
      \widehat{\bestber}(\thetahat)
      -
      \bestber
    }
    \leq
    \frac{1}{\theta (1 - \theta)}
    \sqrt{
      \frac{8 \log (4 / \delta)}{n}
    }
    \text{.}
  \end{equation}
\end{theorem}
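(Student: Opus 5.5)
We need to bound $|\widehat{\bestber}(\thetahat) - \bestber|$, where $\widehat{\bestber}(\vartheta) = \frac{1}{n}\sum_i \phimin_{\ber}(\eta_i;\vartheta)$ with the prior argument made explicit. We split the error by the triangle inequality:
\begin{equation}
  \abs{\widehat{\bestber}(\thetahat) - \bestber}
  \leq
  \abs{\widehat{\bestber}(\thetahat) - \widehat{\bestber}(\theta)}
  +
  \abs{\widehat{\bestber}(\theta) - \bestber}
  \text{.}
\end{equation}
The second term is the known-prior error. Since the kernel $\phimin_{\ber}$ takes values in $[0,\frac12]$, Hoeffding's inequality \eqref{eq:hoeffding-u-statistic:tail} with $d=1$ controls it. This gives $\P{\cdot > \epsilon/2} \leq 2\exp(-8n(\epsilon/2)^2) = 2\exp(-2n\epsilon^2)$, which is far smaller than the target rate.

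For the first term we need a Lipschitz bound in the prior, pointwise in $z$. Write $\phimin_{\ber}(z;\vartheta) = \frac12\min\{z/\vartheta,(1-z)/(1-\vartheta)\}$. A minimum of two functions is Lipschitz with the larger of their constants, so the difference is at most $\frac12\max\{z|1/\thetahat - 1/\theta|, (1-z)|1/(1-\thetahat)-1/(1-\theta)|\}$. This equals $\frac12|\thetahat-\theta|\max\{z/(\theta\thetahat),(1-z)/((1-\theta)(1-\thetahat))\}$. The main obstacle is that $\thetahat$ can be close to $0$ or $1$, which is presumably where the clipping at $\tau_n$ enters. Two approaches are available.

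\textbf{Approach (a).} On the good event $|\thetahat-\theta|\le \theta(1-\theta)/2$ (say), we have $\thetahat\ge\theta/2$ and $1-\thetahat\ge(1-\theta)/2$. The difference is then at most $|\thetahat-\theta|/(\theta(1-\theta))$ times a small constant.

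\textbf{Approach (b).} Avoid the good event by exploiting the trivial bound: both estimates lie in $[0,\frac12]$. So if $|\thetahat-\theta|$ is large relative to $\theta(1-\theta)$, the target deviation $\epsilon$ would exceed $\frac12$ and the statement is vacuous.

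Concretely, I will show that $|\thetahat-\theta|\le \theta(1-\theta)\epsilon/4$ together with $\epsilon\le\frac12$ implies $\abs{\widehat{\bestber}(\thetahat)-\widehat{\bestber}(\theta)}\le\epsilon/2$. A slightly cruder route also works: bound $\min\{a,b\}$ differences directly, or use the cross-multiplied form $\phimin_{\ber}(z;\vartheta)=\frac{\min\{z(1-\vartheta),\vartheta(1-z)\}}{2\vartheta(1-\vartheta)}$. The numerator is $1$-Lipschitz in $\vartheta$, and the denominator, though it could in principle explode, is controlled via the closeness event. The constants need to be arranged so that the exponent reads $n\theta^2(1-\theta)^2\epsilon^2/8$.

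By \cref{lem:unknown-class-prior}, in the form $\P{|\thetahat-\theta|>t}\le 2\exp(-nt^2/2)$, choosing $t=\theta(1-\theta)\epsilon/2$ gives exactly $2\exp(-n\theta^2(1-\theta)^2\epsilon^2/8)$. So the first term should be made $\le \epsilon/2$ whenever $|\thetahat-\theta|\le\theta(1-\theta)\epsilon/2$, which is the Lipschitz step above with constant $1/(\theta(1-\theta))$ after handling the denominator. A union bound then yields
\begin{equation}
  2e^{-n\theta^2(1-\theta)^2\epsilon^2/8}+2e^{-2n\epsilon^2}\le 4e^{-n\theta^2(1-\theta)^2\epsilon^2/8},
\end{equation}
using $\theta(1-\theta)\le\frac14$. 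The case $\epsilon\ge\frac12$ is trivial since both quantities lie in $[0,\frac12]$. Inverting the tail gives the stated high-probability form. The delicate step is making the Lipschitz constant in $\vartheta$ come out as $1/(\theta(1-\theta))$ rather than $1/(\theta\thetahat)$; if needed I would absorb the factor $2$ from $\thetahat\ge\theta/2$ by adjusting where $\epsilon$ is split between the two terms.
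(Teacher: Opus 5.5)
Your skeleton coincides with the paper's: the same triangle split, Hoeffding's bound $2e^{-2n\epsilon^2}$ for the known-prior term, the tail of \cref{lem:unknown-class-prior} at $t=\theta(1-\theta)\epsilon/2$, and the same final union bound via $\theta(1-\theta)\le\frac14$.

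The gap is the step you yourself flag as delicate. You never establish $|\widehat{\bestber}(\thetahat)-\widehat{\bestber}(\theta)|\le|\thetahat-\theta|/(\theta(1-\theta))$, and the route you actually write out cannot give it. Bounding a difference of minima by the larger branch difference yields $\frac12|\thetahat-\theta|\max\{z/(\theta\thetahat),(1-z)/((1-\theta)(1-\thetahat))\}$. Even on your good event ($\thetahat\ge\theta/2$), this is of order $|\thetahat-\theta|/\min\{\theta,1-\theta\}^2$; take $z=\frac12$ and $\theta$ small. That exceeds $1/(\theta(1-\theta))$ by a factor of order $\max\{\theta,1-\theta\}/\min\{\theta,1-\theta\}$, which is unbounded. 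So it is not a factor $2$ that re-splitting $\epsilon$ could absorb, and pushed through it only gives an exponent of order $n\min\{\theta,1-\theta\}^4\epsilon^2$. The loss comes from ignoring that the branch $z/\vartheta$ is the active one only when $z\le\vartheta$.

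The missing ingredient is \cref{lem:unknown-class-prior:ber}. It is deterministic and holds for every $\theta'\in(0,1)$, so no good event is needed; clipping is used only to ensure $\thetahat\in(0,1)$. Set $N(\vartheta)=\min\{z(1-\vartheta),\vartheta(1-z)\}$ and $D(\vartheta)=\vartheta(1-\vartheta)$, so that $2\phimin_{\ber}(z;\vartheta)=N(\vartheta)/D(\vartheta)$. Then expand
\[
\frac{N(\theta')}{D(\theta')}-\frac{N(\theta)}{D(\theta)}
=\frac{D(\theta')\,\bigl(N(\theta')-N(\theta)\bigr)+N(\theta')\,\bigl(D(\theta)-D(\theta')\bigr)}{D(\theta)\,D(\theta')} .
\]
Both $N$ and $D$ are $1$-Lipschitz in $\vartheta$, and $N(\theta')\le D(\theta')$. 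Hence the factor $D(\theta')$ cancels and the difference is at most $2|\theta'-\theta|/D(\theta)$. With the factor $\frac12$ this is exactly the constant $1/(\theta(1-\theta))$, and your $\epsilon/2$ split then works verbatim.

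Your cross-multiplied fallback could also be salvaged. The other expansion (using $N(\theta)\le D(\theta)$) gives $1/D(\thetahat)\le\frac{4}{3\theta(1-\theta)}$ on the event when $\epsilon<\frac12$. Splitting $\epsilon$ as $\frac23\epsilon+\frac13\epsilon$ then still works, because $2e^{-8n\epsilon^2/9}$ is below the target. As written, however, the proposal carries out neither version.
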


\begin{proof}
  We have
  \begin{equation}
    \P{
      \abs{
        \widehat{\bestber}(\thetahat)
        -
        \bestber
      }
      > \epsilon
    }
    \leq
    \P{
      \abs{
        \widehat{\bestber}(\theta)
        -
        \bestber
      }
      >
      \frac{\epsilon}{2}
    }
    +
    \P{
      \abs{
        \widehat{\bestber}(\thetahat)
        -
        \widehat{\bestber}(\theta)
      }
      >
      \frac{\epsilon}{2}
    }
    \text{.}
  \end{equation}
  The first term is bounded by Hoeffding's inequality
  \eqref{eq:hoeffding-u-statistic} as
  \begin{equation}
    \P{
      \abs{
        \widehat{\bestber}(\theta)
        -
        \bestber
      }
      >
      \frac{\epsilon}{2}
    }
    \leq
    2 \exp \paren{
      - 2n\epsilon^2
    }
    \text{.}
  \end{equation}
  As for the second term,
  we use \cref{lem:unknown-class-prior:ber}
  to obtain
  \begin{equation}
    \P{
      \abs{
        \widehat{\bestber}(\thetahat)
        -
        \widehat{\bestber}(\theta)
      }
      >
      \frac{\epsilon}{2}
    }
    \leq 
    \P{
      \frac{1}{\theta (1 - \theta)}
      \abs{\thetahat - \theta}
      >
      \frac{\epsilon}{2}
    }
    \text{,}
  \end{equation}
  which can be further bounded by
  \cref{lem:unknown-class-prior} as
  \begin{equation}
    \begin{aligned}
      \P{
        \abs{\thetahat - \theta}
        >
        \frac{\theta (1 - \theta)}{2}
        \epsilon
      }
      &\leq
      2 \exp \paren{
        - \frac{
          n
          \paren{
            \frac{\theta (1 - \theta)}{2}
            \epsilon
          }^2
        }{2}
      }
      \\ &=
      2 \exp \paren{
        - \frac{
          n \theta^{2} (1 - \theta)^{2} \epsilon^{2}
        }{8}
      }
      \text{.}
    \end{aligned}
  \end{equation}
  Since $0 < \theta (1 - \theta) \leq \frac{1}{4}$,
  we have
  $\exp(-2n\epsilon^{2}) \leq \exp \paren{- \frac{n \theta^{2} (1 - \theta)^{2} \epsilon^{2}}{8}}$, 
  and thus
  \begin{equation}
    \P{
      \abs{
        \widehat{\bestber}(\thetahat)
        -
        \bestber
      }
      > \epsilon
    }
    \leq
    4 \exp \paren{
      - \frac{
        n \theta^{2} (1 - \theta)^{2} \epsilon^{2}
      }{8}
    }
    \text{.}
  \end{equation}
\end{proof}

The following lemma is similar to \cref{lem:unknown-class-prior},
but it gives a tighter bound
especially under severe class imbalance,
i.e., when $\theta$ is close to $0$ or $1$.

\getkeytheorem{lem:unknown-class-prior:bernstein}

\begin{proof}
  The proof is similar to that of \cref{lem:unknown-class-prior},
  but we use Bernstein's inequality
  instead of Hoeffding's inequality.
  Since $\eta$ is a $[0, 1]$-valued random variable with mean $\theta$,
  its variance is at most that of a Bernoulli random variable with the same mean,
  that is, $\theta (1 - \theta)$.
  Also, $|\eta - \theta| \leq \max \set{\theta, 1 - \theta} \leq 1$.
  Hence by Bernstein's inequality,
  with probability at least $1 - \delta$,
  we have
  \begin{equation}
    \lvert 
      \bar{\eta} - \theta
    \rvert
    \leq
    \sqrt{\frac{2 \theta (1 - \theta) \log (2 / \delta)}{n}}
    +
    \frac{2 \log (2 / \delta)}{3n}
    \text{.}
  \end{equation}
  and thus
  \begin{equation}
    \begin{aligned}
      \lvert 
        \thetahat - \theta 
      \rvert
      &\leq
      \lvert 
        \clip_{\tau_{n}}^{1 - {\tau_{n}}}(\bar{\eta}) - \bar{\eta}
      \rvert
      +
      \lvert 
        \bar{\eta} - \theta
      \rvert
      \\ &\leq
      \frac{c}{n}
      +
      \lvert 
        \bar{\eta} - \theta
      \rvert
      \\ &\leq
      \sqrt{\frac{2 \theta (1 - \theta) \log (2 / \delta)}{n}}
      +
      \frac{
        2 \log (2 / \delta)
        +
        3c
      }{3n}
      \text{.}
    \end{aligned}
  \end{equation}
  If 
  $
    n 
    \geq 
    \frac
      {\paren{2 \log (2 / \delta) + 3c}^2}
      {18 \theta (1 - \theta) \log (2 / \delta)}
  $,
  the first term dominates the second term,
  and hence
  the above inequality is simplified as
  \begin{equation}
    \lvert 
      \thetahat - \theta 
    \rvert
    \leq
    2 \sqrt{\frac{2 \theta (1 - \theta) \log (2 / \delta)}{n}}
    \text{.}
  \end{equation}
\end{proof}

\begin{theorem}[store=thm:unknown-class-prior:ber]
  \label{thm:unknown-class-prior:ber}
  Under \cref{ass:unknown-class-prior:epsilon},
  for any 
  $\delta > 0$,
  the following holds
  with probability at least $1 - \delta$:
  \begin{equation}
    \abs{
      \widehat{\bestber}(\thetahat) - \bestber
    }
    \leq 
    \frac{17}{8}
    \sqrt{
      \frac{2 \log (4/\delta)}{\theta (1 - \theta) n}
    }
    \text{.}
  \end{equation}
\end{theorem}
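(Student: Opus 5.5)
We split the error as in the proof of \cref{thm:unknown-class-prior:ber-hoeffding}:
\begin{equation}
  \abs{\widehat{\bestber}(\thetahat) - \bestber}
  \leq
  \abs{\widehat{\bestber}(\theta) - \bestber}
  +
  \abs{\widehat{\bestber}(\thetahat) - \widehat{\bestber}(\theta)}
  \text{,}
\end{equation}
and allot confidence $\delta/2$ to each term. That allotment is where the $\log(4/\delta)$ comes from.

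\textbf{First term.} Hoeffding's inequality \eqref{eq:hoeffding-u-statistic:tail} with $d=1$ and a kernel in $[0,\frac12]$ gives $\frac12\sqrt{\log(4/\delta)/(2n)}$. Since $\theta(1-\theta)\leq\frac14$, this is at most $\frac18\sqrt{2\log(4/\delta)/(\theta(1-\theta)n)}$. This is the source of the $\frac18$ in $\frac{17}{8}=2+\frac18$.

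\textbf{Second term.} This requires a Lipschitz bound in the prior. The route through \cref{lem:unknown-class-prior:ber} gives a factor $\frac{1}{\theta(1-\theta)}$, which is too crude: combined with \cref{lem:unknown-class-prior:bernstein} it would produce $\frac{1}{\theta(1-\theta)}\sqrt{\theta(1-\theta)}$, i.e.\ the right order but with a wrong constant unless that lemma's constant is sharper. I would instead use a sharper pointwise bound. On the branch where the minimum is $\frac{z}{2\theta}$, we have $\abs{\frac{z}{2\hat\theta}-\frac{z}{2\theta}} = \frac{z}{2\theta\hat\theta}\abs{\hat\theta-\theta}$. The other branch is symmetric. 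A min of two functions is Lipschitz with the larger of the two constants.

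It is easier to bound the averaged quantity than to take a supremum over $z$. Because $\phimin_\ber$ is nonnegative and homogeneous in $(1/\theta, 1/(1-\theta))$, the difference is controlled by
\begin{equation}
  \frac{1}{2n}\sum_i \paren{\frac{\eta_i}{\theta\hat\theta}+\frac{1-\eta_i}{(1-\theta)(1-\hat\theta)}}\abs{\hat\theta-\theta}
  =
  \frac{1}{2}\paren{\frac{\bar\eta}{\theta\hat\theta}+\frac{1-\bar\eta}{(1-\theta)(1-\hat\theta)}}\abs{\hat\theta-\theta}
  \text{.}
\end{equation}
Away from clipping, $\bar\eta\approx\hat\theta$, so this factor is about $\frac{1}{2\theta(1-\theta)}$. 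Combined with \cref{lem:unknown-class-prior:bernstein} at level $\delta/2$, which gives $\abs{\hat\theta-\theta}\leq2\sqrt{2\theta(1-\theta)\log(4/\delta)/n}$, this yields a multiplier near $1$. With the slack from replacing $\bar\eta/\hat\theta$ by at most $2$, the multiplier becomes $2$. Adding the two terms gives $(2+\frac18)\sqrt{2\log(4/\delta)/(\theta(1-\theta)n)}$.

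\textbf{Main obstacle.} \cref{lem:unknown-class-prior:bernstein} only holds for $n$ above a threshold of order $\log(2/\delta)/(\theta(1-\theta))$. The clipping also interacts with $\bar\eta/\hat\theta$ when $\bar\eta$ falls below $\tau_n$.

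\textbf{Small-$n$ case.} I would handle this exactly as in the proof of \cref{thm:ber-auc:bound:auc:bernstein}. If $n<C\log(4/\delta)/(\theta(1-\theta))$ for a suitable $C$ (checked against the threshold, using $c<\frac12$), the right-hand side exceeds $\frac12$. Both $\widehat{\bestber}(\hat\theta)$ and $\bestber$ lie in $[0,\frac12]$, so the bound holds trivially.

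\textbf{Large-$n$ case.} Here $\abs{\hat\theta-\theta}\leq\theta/2$ and likewise for $1-\theta$. Then $\hat\theta\geq\theta/2$ and $\bar\eta\leq\hat\theta+\tau_n$, which gives the needed bound on the factor $\frac{\bar\eta}{\theta\hat\theta}+\frac{1-\bar\eta}{(1-\theta)(1-\hat\theta)}$. Pinning down this factor precisely, so that the constant comes out as $2$, is the step I expect to require the most care.
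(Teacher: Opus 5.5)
Your proof is correct, and its skeleton is the paper's: the trivial case when the right-hand side exceeds $\frac12$, the $\delta/2$ split, Hoeffding giving $\sqrt{\log(4/\delta)/(8n)}\le\frac18\sqrt{2\log(4/\delta)/(\theta(1-\theta)n)}$, and \cref{lem:unknown-class-prior:bernstein} at level $\delta/2$.

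The one real difference is the Lipschitz-in-prior step, and your stated reason for deviating there is a miscalculation. \cref{lem:unknown-class-prior:ber} gives the deterministic factor $\frac{1}{\theta(1-\theta)}$, and
\[
\frac{1}{\theta(1-\theta)}\sqrt{\frac{8\theta(1-\theta)\log(4/\delta)}{n}}=2\sqrt{\frac{2\log(4/\delta)}{\theta(1-\theta)n}},
\]
which is exactly the $2$ in $\frac{17}{8}=2+\frac18$. This is precisely the paper's argument. The paper uses the small-$n$ cutoff $\frac{4\log(4/\delta)}{\theta(1-\theta)}$, after checking that the Bernstein threshold is below $0.53\,\frac{\log(4/\delta)}{\theta(1-\theta)}$.

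Your branchwise bound, via $\lvert\min\{a,b\}-\min\{a',b'\}\rvert\le\lvert a-a'\rvert+\lvert b-b'\rvert$, is also valid. The step you flag as the main obstacle is in fact immediate. Since $\thetahat$ is $\bar\eta$ clipped to $[\tau_n,1-\tau_n]$ with $\tau_n<\frac12$, you always have $\bar\eta/\thetahat\le 2$ and $(1-\bar\eta)/(1-\thetahat)\le 2$. Hence your data-dependent factor is at most $\frac{1}{\theta(1-\theta)}$ for every $n$, and the condition $\lvert\thetahat-\theta\rvert\le\theta/2$ is not needed.

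The trade-off between the two routes:
\begin{itemize}
\item Your route: on the event that no clipping occurs, the factor equals $\frac{1}{2\theta(1-\theta)}$, half the paper's, so it could sharpen the constant.
\item The paper's route: the lemma gives a bound uniform over $\theta'\in(0,1)$ that needs no information about $\bar\eta$ or the clipping, which keeps the proof shorter.
\end{itemize}
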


\begin{proof}
  Fix an arbitrary $\delta \in (0, 1)$.
  If $n < \frac{4 \log (4 / \delta)}{\theta (1 - \theta)}$,
  then the claim trivially holds
  because
  $\text{(RHS)} > \sqrt{\frac{\log (4 / \delta)}{\theta (1 - \theta) n}} > \frac{1}{2}$
  and the left-hand side is at most $\frac{1}{2}$.
  Therefore, we assume
  $n \geq \frac{4 \log (4 / \delta)}{\theta (1 - \theta)}$
  from now on.

  By \cref{thm:ber-auc:bound},
  we have the following with probability 
  at least $1 - \frac{\delta}{2}$:
  \begin{equation}
    \abs{
      \widehat{\bestber}(\theta) - \bestber
    }
    \leq
    \sqrt{
      \frac{\log (4/\delta)}{8n}
    }
    \text{.}
  \end{equation}
  Next, observe
  \begin{equation}
    \begin{aligned}
      \frac
        {\paren{2 \log (4 / \delta) + 3c}^2}
        {18 \theta (1 - \theta) \log (4 / \delta)}
      &\leq
      \frac
        {\paren{
          \paren{
            2
            +
            \frac{3c}{2 \log 2}
          }
          \log (4 / \delta)
        }^2}
        {18 \theta (1 - \theta) \log (4 / \delta)}
      \\ &=
      \frac{\paren{2 + \frac{3c}{2 \log 2}}^{2}}{18} 
      \cdot
      \frac{
        \log (4 / \delta)
      }{
        \theta (1 - \theta)
      }
      \\ &<
      0.53 
      \frac{
        \log (4 / \delta)
      }{
        \theta (1 - \theta)
      }
      \\ &\leq
      n
      \text{.}
    \end{aligned}
  \end{equation}
  It follows that
  with probability at least $1 - \frac{\delta}{2}$,
  the gap between
  the plug-in estimator and
  the estimator with the true class prior
  can be bounded by
  \cref{lem:unknown-class-prior:ber,lem:unknown-class-prior:bernstein}
  as
  \begin{equation}
    \begin{aligned}
      \abs{
        \widehat{\bestber}(\thetahat) 
        - 
        \widehat{\bestber}(\theta)
      }
      &\leq
      \frac{1}{\theta (1 - \theta)}
      \abs{
        \thetahat - \theta
      }
      \\ &\leq
      \frac{1}{\theta (1 - \theta)}
      \sqrt{
        \frac{8 \theta (1 - \theta) \log (4/\delta)}{n}
      }
      \\ &=
      \sqrt{
        \frac{8 \log (4/\delta)}{\theta (1 - \theta) n}
      }
    \text{.}
    \end{aligned}
  \end{equation}
  Combining these two inequalities
  by union bound, with probability at least $1 - \delta$,
  we have
  \begin{equation}
    \abs{
      \widehat{\bestber}(\thetahat) 
      - 
      \bestber
    }
    \leq
    \sqrt{
      \frac{\log (4/\delta)}{8n}
    }
    +
    \sqrt{
      \frac{8 \log (4/\delta)}{\theta (1 - \theta) n}
    }
    \text{.}
  \end{equation}
  We can further use 
  $0 < \theta (1 - \theta) \leq \frac{1}{4}$
  to simplify the right-hand side as
  \begin{equation}
    \begin{aligned}
      \sqrt{
        \frac{\log (4/\delta)}{8n}
      }
      +
      \sqrt{
        \frac{8 \log (4/\delta)}{\theta (1 - \theta) n}
      }
      &\leq
      \frac{1}{8}
      \sqrt{
        \frac{2 \log (4/\delta)}{\theta (1 - \theta) n}
      }
      +
      2
      \sqrt{
        \frac{2 \log (4/\delta)}{\theta (1 - \theta) n}
      }
      \\ &=
      \frac{17}{8}
      \sqrt{
        \frac{2 \log (4/\delta)}{\theta (1 - \theta) n}
      }
      \text{,}
    \end{aligned}
  \end{equation}
  which completes the proof.
\end{proof}

\begin{theorem}[store=thm:unknown-class-prior:auc]
  \label{thm:unknown-class-prior:auc}
  Under \cref{ass:unknown-class-prior:epsilon},
  for any 
  $\delta > 0$,
  the following holds
  with probability at least $1 - \delta$:
  \begin{equation}
    \abs{
      \widetilde{\bestauc}(\thetahat) - \bestauc
    }
    \leq 
    10
    \sqrt{
      \frac{\log (4/\delta)}{\theta (1 - \theta) n}
    }
    \text{.}
  \end{equation}
\end{theorem}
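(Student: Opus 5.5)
The plan mirrors the proof of \cref{thm:unknown-class-prior:ber}: triangle inequality plus a union bound, with a trivial regime for small $n$. Write $L \coloneqq \log(4/\delta)$. The large constant $10$ is there to make the small-$n$ regime trivial for a generous threshold. If $n < \frac{C L}{\theta(1-\theta)}$ with $C \le 400$, the right-hand side exceeds $10/\sqrt{C} \ge \frac12$. Since both $\widetilde{\bestauc}(\thetahat)$ and $\bestauc$ lie in $[0.5,1]$, the claim holds trivially there. So from now on assume $n \ge \frac{CL}{\theta(1-\theta)}$, with $C$ fixed later (something like $C = 400$ or smaller).

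\textbf{Decomposition.} Write
\[
\abs{\widetilde{\bestauc}(\thetahat)-\bestauc} \le \abs{\widetilde{\bestauc}(\theta)-\bestauc} + \abs{\widetilde{\bestauc}(\thetahat)-\widetilde{\bestauc}(\theta)}.
\]
The first term is handled by \cref{lem:ber-auc:bound:auc:bernstein} at confidence $\delta/2$. Its sample-size condition $n \ge \log(4/\delta)/(9\theta(1-\theta))$ holds in our regime, so the term is at most $\sqrt{L/(\theta(1-\theta)n)}$. For the second term, apply \cref{lem:unknown-class-prior:bernstein} at confidence $\delta/2$, whose sample-size condition is verified exactly as in the proof of \cref{thm:unknown-class-prior:ber}. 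This gives $\abs{\thetahat-\theta} \le \sqrt{8\theta(1-\theta)L/n}$.

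\textbf{Main obstacle: the Lipschitz step.} Let $\bar m$ be the average of $\min\{\eta_i(1-\eta_j),\eta_j(1-\eta_i)\}$ over pairs, and set $g(t) = \bar m/(2t(1-t))$. Then
\[
\widetilde{\bestauc}(t) = \clip_{0.5}^{1}(1-g(t)) = 1-\min\{g(t),\tfrac12\}
\]
(as $g \ge 0$). Unlike BER, $\bar m$ can be as large as $1/4$, so a naive derivative bound gives a bad $1/\theta^2$ factor. The clipping is what saves us. Whenever the two values $\min\{g,\frac12\}$ differ, the smaller of $g(\theta)$ and $g(\thetahat)$ is at most $\frac12$. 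Using $g(\thetahat)/g(\theta) = \theta(1-\theta)/(\thetahat(1-\thetahat))$, the difference is at most
\[
\frac12 \cdot \frac{\abs{\theta(1-\theta)-\thetahat(1-\thetahat)}}{\min\{\theta(1-\theta),\thetahat(1-\thetahat)\}} \le \frac{\abs{\thetahat-\theta}}{2\min\{\theta(1-\theta),\thetahat(1-\thetahat)\}}.
\]
It remains to show $\thetahat(1-\thetahat) \ge \theta(1-\theta)/2$. This follows if $\abs{\thetahat-\theta} \le \frac12\min\{\theta,1-\theta\}$, i.e.\ if $\sqrt{8\theta(1-\theta)L/n} \le \frac14\theta(1-\theta)$, using $\min\{\theta,1-\theta\} \ge \theta(1-\theta)$. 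That requires $n \ge 128 L/(\theta(1-\theta))$, so $C = 128 \le 400$ works. Under this, the second term is at most $\abs{\thetahat-\theta}/(\theta(1-\theta)) \le \sqrt{8L/(\theta(1-\theta)n)}$.

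\textbf{Combining.} A union bound gives, with probability at least $1-\delta$,
\[
\abs{\widetilde{\bestauc}(\thetahat)-\bestauc} \le (1+2\sqrt2)\sqrt{L/(\theta(1-\theta)n)} \le 10\sqrt{L/(\theta(1-\theta)n)}.
\]
The one step needing care is the Lipschitz step: getting the $1/(\theta(1-\theta))$ rather than $1/\theta^2$ dependence hinges on exploiting the clip together with the high-probability lower bound on $\thetahat(1-\thetahat)$.
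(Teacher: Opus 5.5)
Your proof is correct. Its skeleton matches the paper's: a trivial small-$n$ regime, a triangle inequality through the estimator at the true prior, Bernstein for $\thetahat$, a lower bound on $\thetahat(1-\thetahat)$, and a union bound. The genuine difference is the Lipschitz step.

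The paper first drops the clip, $\abs{\widetilde{\bestauc}(\thetahat)-\bestauc}\le\abs{\widehat{\bestauc}(\thetahat)-\bestauc}$, and works with the unclipped estimator. It uses the exact identity $1-\widehat{\bestauc}(\theta')=\frac{\theta(1-\theta)}{\theta'(1-\theta')}\paren{1-\widehat{\bestauc}(\theta)}$ from \cref{lem:unknown-class-prior:auc}. It then bounds the prefactor by $1-\widehat{\bestauc}(\theta)\le\frac12+\abs{\widehat{\bestauc}(\theta)-\bestauc}\le 1$. This reuses the same concentration event as the first term, together with $\bestauc\ge\frac12$. With $\thetahat(1-\thetahat)\ge\theta(1-\theta)/3$ for $n\ge 18\log(4/\delta)/(\theta(1-\theta))$ (\cref{lem:unknown-class-prior:auc2}), this yields the constant $1+6\sqrt2$.

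You instead keep the clip and use it to cap $g$ at $\frac12$ deterministically, so your Lipschitz bound needs no probabilistic input beyond the event on $\thetahat$. You also demand $\thetahat(1-\thetahat)\ge\theta(1-\theta)/2$, which raises the small-$n$ threshold from $18$ to $128$; this is harmless given the constant $10$. In return you get the sharper constant $1+2\sqrt2$.

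What the paper's route buys is that, in the large-$n$ regime, it also controls the unclipped plug-in estimator. Its multiplicative-structure argument with $\bestauc\ge\frac12$ is exactly what it reuses for the corrupted-label AUC bound. Your argument relies on the clip, or equivalently on that concentration event, to tame the $1/\paren{\thetahat(1-\thetahat)}$ factor.
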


\begin{proof}
  First, if $n < \frac{18 \log (4 / \delta)}{\theta (1 - \theta)}$,
  then the claim follows trivially
  because
  $\text{(RHS)} = 10 \sqrt{\frac{\log (4 / \delta)}{\theta (1 - \theta) n}} > \frac{5 \sqrt{2}}{3} > \frac{1}{2}$
  and the left-hand side is at most $\frac{1}{2}$.
  Therefore, we assume
  $n \geq \frac{18 \log (4 / \delta)}{\theta (1 - \theta)}$
  in the following.

  By \cref{thm:ber-auc:bound:auc:bernstein},
  with probability at least $1 - \frac{\delta}{2}$,
  we have
  \begin{equation}
    \abs{
      \widehat{\bestauc}(\theta)
      -
      \bestauc
    }
    \leq
    \sqrt{
      \frac{\log (4/\delta)}{\theta (1 - \theta) n}
    }
    \leq
    \frac{1}{2}
    \text{.}
  \end{equation}
  In this event,
  we have
  \begin{equation}
    \abs{
      \widehat{\bestauc}(\thetahat)
      -
      \widehat{\bestauc}(\theta)
    }
    \leq
    \frac{1 / 2 + 1 / 2}{\thetahat (1 - \thetahat)}
    |\thetahat - \theta|
    =
    \frac{1}{\thetahat (1 - \thetahat)}
    |\thetahat - \theta|
    \text{.}
  \end{equation}
  by \cref{lem:unknown-class-prior:auc}.
  Also, 
  by
  \cref{lem:unknown-class-prior:auc2},
  with probability at least $1 - \frac{\delta}{2}$,
  it holds that
  \begin{equation}
    \frac{1}{\thetahat (1 - \thetahat)}
    \lvert 
      \thetahat - \theta 
    \rvert
    \leq
    6
    \sqrt{\frac{2 \log (4 / \delta)}{\theta (1 - \theta) n}}
    \text{.}
  \end{equation}

  Therefore, by union bound,
  with probability at least $1 - \delta$,
  we have
  \begin{equation}
    \begin{aligned}
      \abs{
        \widetilde{\bestauc}(\thetahat)
        -
        \bestauc
      }
      &\leq
      \abs{
        \widehat{\bestauc}(\thetahat)
        -
        \bestauc
      }
      \\ &\leq
      \abs{
        \widehat{\bestauc}(\theta)
        -
        \bestauc
      }
      +
      \abs{
        \widehat{\bestauc}(\thetahat)
        -
        \widehat{\bestauc}(\theta)
      }
      \\ &\leq
      \sqrt{
        \frac{\log (4/\delta)}{\theta (1 - \theta) n}
      }
      +
      6
      \sqrt{\frac{2 \log (4 / \delta)}{\theta (1 - \theta) n}}
      \\ &<
      10
      \sqrt{
        \frac{\log (4/\delta)}{\theta (1 - \theta) n}
      }
      \text{.}
    \end{aligned}
  \end{equation}
\end{proof}

\begin{proposition}
  \label{prop:unknown-class-prior:ber-mse}
  The mean squared error of
  $\widehat{\bestber}(\thetahat)$
  is
  \begin{equation}
    \label{eq:unknown-class-prior:ber-mse}
    \E{
      \paren{
        \widehat{\bestber}(\thetahat) 
        -
        \bestber
      }^{2}
    }
    \leq
    \frac{C}{\theta (1 - \theta) n}
    \text{,}
  \end{equation}
  where $C > 0$ is a constant that does not depend on $\theta$ or $n$.
  Therefore
  $\widehat{\bestber}(\thetahat)$
  is asymptotically unbiased.
\end{proposition}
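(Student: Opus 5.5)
The plan is to convert the high-probability bound of \cref{thm:unknown-class-prior:ber} into a bound on the second moment by integrating the tail. The error is bounded: both $\widehat{\bestber}(\thetahat)$ and $\bestber$ lie in $[0, \frac{1}{2}]$. The clipped $\thetahat \in [\tau_n, 1-\tau_n]$ keeps the plug-in estimator well defined, and $\phimin_\ber$ still lies in $[0,\frac12]$ for any prior in $(0,1)$. Hence $X \coloneqq |\widehat{\bestber}(\thetahat) - \bestber| \le \frac{1}{2}$, and we may write $\E{X^2} = \int_0^{1/4} \P{X^2 > t}\, dt$.

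Next, invert \cref{thm:unknown-class-prior:ber}. With $A \coloneqq \frac{2 (17/8)^2}{\theta(1-\theta) n}$, the theorem says $\P{X^2 > A \log(4/\delta)} \le \delta$ for every $\delta \in (0,1)$. Substituting $t = A\log(4/\delta)$, i.e.\ $\delta = 4 e^{-t/A}$, gives
\begin{equation}
  \P{X^2 > t} \le \min\set{1, 4 e^{-t/A}}
\end{equation}
for all $t > A\log 4$; for smaller $t$ the trivial bound $1$ is used. Splitting the integral at $t_0 = A \log 4$ yields
\begin{equation}
  \E{X^2} \le A\log 4 + \int_{t_0}^\infty 4 e^{-t/A}\, dt = A(\log 4 + 1).
\end{equation}
This gives $C = 2(17/8)^2(1+\log 4)$, which is independent of $\theta$ and $n$. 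As a cross-check, one could instead apply the exponential tail form underlying \cref{thm:ber-auc:bound} and \cref{lem:unknown-class-prior:bernstein} directly, but the inversion route is cleaner.

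Asymptotic unbiasedness then follows from Jensen's inequality: $|\E{\widehat{\bestber}(\thetahat)} - \bestber| \le \sqrt{\E{X^2}} \le \sqrt{C/(\theta(1-\theta)n)} \to 0$.

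The main obstacle is checking that \cref{thm:unknown-class-prior:ber} is valid uniformly over all $\delta\in(0,1)$ and all $n$, including the small-$n$ regime. Its proof handles that regime by a trivial case using $X \le \frac12$, and the statement says ``for any $\delta$'' under \cref{ass:unknown-class-prior:epsilon}, so the substitution is legitimate. Whether $\thetahat$ is clipped is also a potential subtlety, since without clipping $\thetahat$ could equal $0$ and the estimator would be undefined. Clipping is defined in the paper, so I take it as given, and the bound $X\le\frac12$ then holds surely.
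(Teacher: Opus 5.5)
Your proposal is correct and follows the paper's proof. Both invert the high-probability bound of \cref{thm:unknown-class-prior:ber} into an exponential tail, integrate the tail to bound the second moment, and finish with Jensen's inequality; the only difference is that the paper applies the tail bound $4\exp(-K\theta(1-\theta)n\epsilon)$ over all $\epsilon>0$ (implicitly allowing $\delta\ge 1$, where it is trivial) to get $C=8(17/8)^2$, whereas your split at $t_0=A\log 4$ keeps $\delta\in(0,1)$ explicit and gives the slightly smaller constant $2(17/8)^2(1+\log 4)$.
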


\begin{proof}
  Take any
  $\epsilon > 0$
  and let
  $\delta \coloneqq 4 \exp \paren{- K \theta (1 - \theta) n \epsilon^{2}}$,
  where $K = \frac{8^{2}}{2 \cdot 17^{2}}$
  is a constant.
  By
  applying
  \cref{thm:unknown-class-prior:ber}
  with
  the above $\delta$,
  we have
  \begin{equation}
    \P{
      \abs{
        \widehat{\bestber}(\thetahat) 
        -
        \bestber
      }
      >
      \epsilon
    }
    \leq
    4 \exp \paren{
      - K \theta (1 - \theta) n \epsilon^{2}
    }
    \text{.}
  \end{equation}
  Integrating 
  the above tail bound
  over all $\epsilon > 0$ gives
  \begin{equation}
    \begin{aligned}
      \E{
        \paren{
          \widehat{\bestber}(\thetahat) 
          -
          \bestber
        }^{2}
      }
      &=
      \int_{0}^{\infty}
      \P{
        \paren{
          \widehat{\bestber}(\thetahat)
          -
          \bestber
        }^2
        >
        \epsilon
      }
      \, d\epsilon
      \\ &=
      \int_{0}^{\infty}
      \P{
        \abs{
          \widehat{\bestber}(\thetahat)
          -
          \bestber
        }
        >
        \sqrt{\epsilon}
      }
      \, d\epsilon
      \\ &\leq
      \int_{0}^{\infty}
      4 \exp \paren{
        - K \theta (1 - \theta) n \epsilon
      }
      \, d\epsilon
      \\ &=
      \frac{4/K}{\theta (1 - \theta) n}
      \text{,}
    \end{aligned}
  \end{equation}
  which proves \eqref{eq:unknown-class-prior:ber-mse}
  with $C = \frac{4}{K}$.

  Asymptotic unbiasedness follows from
  \begin{equation}
    \begin{aligned}
      \abs{
        \E{
          \widehat{\bestber}(\thetahat)
        }
        -
        \bestber
      }
      &\leq
      \E{
        \abs{
          \widehat{\bestber}(\thetahat)
          -
          \bestber
        }
      }
      \\ &\leq
      \sqrt{
        \E{
          \paren{
            \widehat{\bestber}(\thetahat)
            -
            \bestber
          }^{2}
        }
      }
      \\ &\leq
      \sqrt{
        \frac{C}{\theta (1 - \theta) n}
      }
      \\ &\xrightarrow{n \to \infty} 0
      \text{.}
    \end{aligned}
  \end{equation}
\end{proof}

\begin{proposition}
  \label{prop:unknown-class-prior:auc-mse}
  The mean squared error of
  $\widehat{\bestauc}(\thetahat)$
  is
  \begin{equation}
    \E{
      \paren{
        \widehat{\bestauc}(\thetahat) 
        -
        \bestauc
      }^{2}
    }
    \leq
    \frac{C}{\theta (1 - \theta) n}
    \text{,}
  \end{equation}
  where $C > 0$ is a constant that does not depend on $\theta$ or $n$.
  Therefore
  $\widehat{\bestauc}(\thetahat)$
  is asymptotically unbiased.
\end{proposition}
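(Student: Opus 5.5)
The plan is to reproduce the argument of \cref{prop:unknown-class-prior:ber-mse}: convert a high-probability bound into a sub-Gaussian tail bound and integrate it. The complication is that the statement concerns the \emph{unclipped} plug-in estimator $\widehat{\bestauc}(\thetahat)$. The trivial small-$n$ case in the proof of \cref{thm:unknown-class-prior:auc} used $|\widetilde{\bestauc}(\thetahat)-\bestauc|\le \frac12$, which is not available here. I will therefore split the range of $\epsilon$ into a moderate region, where the theorem's main argument applies verbatim, and a large-deviation region, where I use crude boundedness together with a lower-tail bound on $\thetahat$.

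\emph{Moderate deviations.} Inspect the proof of \cref{thm:unknown-class-prior:auc} in the regime $n\ge \frac{18\log(4/\delta)}{\theta(1-\theta)}$. The chain of inequalities there bounds $|\widehat{\bestauc}(\thetahat)-\bestauc|$ itself, not only its clipped version. It uses \cref{thm:ber-auc:bound:auc:bernstein}, \cref{lem:unknown-class-prior:auc} and \cref{lem:unknown-class-prior:auc2}. Now set $\delta=4\exp(-\theta(1-\theta)n\epsilon^2/100)$. The regime condition becomes $\epsilon^2\le 100/18$, so for all $\epsilon\le\epsilon_0:=10/\sqrt{18}$ we get
\[
\P{|\widehat{\bestauc}(\thetahat)-\bestauc|>\epsilon}\le 4\exp\paren{-K\theta(1-\theta)n\epsilon^2},
\]
with an absolute constant $K$. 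Integrating $\P{(\cdot)^2>s}$ over $s\in[0,\epsilon_0^2]$ then gives a contribution of at most $\frac{4/K}{\theta(1-\theta)n}$, exactly as in the BER case.

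\emph{Large deviations.} For $s>\epsilon_0^2$ I use the explicit form of the kernel. Since $0\le\min\{z_1(1-z_2),z_2(1-z_1)\}\le\frac14$, we have
\[
|\widehat{\bestauc}(\thetahat)-\bestauc|\le \frac{1}{8\thetahat(1-\thetahat)}+\frac12 .
\]
Hence an error exceeding $\sqrt s$ forces $\thetahat(1-\thetahat)\lesssim 1/\sqrt s$. Because $\thetahat\in[c/n,1-c/n]$, the error is deterministically at most of order $n/c$, so the integral is over a bounded range $s\lesssim n^2$.

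\begin{itemize}
\item When $\sqrt s\lesssim 1/(\theta(1-\theta))$, the event $\thetahat(1-\thetahat)\lesssim 1/\sqrt{s}$ is a multiplicative deviation of $\bar\eta$ from $\theta$ (or of $1-\bar\eta$ from $1-\theta$), by a factor that grows with $\sqrt{s}\,\theta(1-\theta)$. Bernstein's inequality with variance $\le\theta(1-\theta)$, as in \cref{lem:unknown-class-prior:bernstein}, gives a bound $\exp(-k\,\theta(1-\theta)n\cdot g(s))$ with $g$ increasing.
\item When $\sqrt s\gtrsim 1/(\theta(1-\theta))$, we need $\bar\eta$ of order at most $1/\sqrt s\ll\theta$. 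The lower-tail Chernoff bound $\P{\bar\eta\le\theta/2}\le e^{-\theta n/8}$ then applies, sharpened to $\P{\bar\eta\le u}\le (e u/\theta)^{\cdots}$-type bounds for $u\ll\theta$.
\end{itemize}

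\emph{Main obstacle.} The hard part is making these large-deviation integrals come out as $O(1/(\theta(1-\theta)n))$ \emph{uniformly in $\theta$}. A naive bound of the form $n^2\cdot e^{-k\theta n}$ is not uniformly $O(1/(\theta n))$ when $\theta\to0$. The substitution $s=t^2$ and a careful matching of the multiplicative Chernoff exponent against the polynomial range $s\le n^2/c^2$ is what I would try first.

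If uniformity fails, there are two fallbacks. One is to state the constant as depending on $c$. The other is to note that the clipping threshold $\tau_n=c/n$ caps $\frac{1}{\thetahat(1-\thetahat)}$, and to combine this cap with $\P{\bar\eta<c/n}\le (1-\theta)^{n}$-type bounds.

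Asymptotic unbiasedness then follows from Jensen's inequality exactly as in \cref{prop:unknown-class-prior:ber-mse}.
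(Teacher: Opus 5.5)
There is a genuine gap: your large-deviation step cannot yield an $O(1/(\theta(1-\theta)n))$ bound, for a structural reason and not a bookkeeping one.

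Your moderate-deviation step is sound. The chain in the proof of \cref{thm:unknown-class-prior:auc} (for $n\ge 18\log(4/\delta)/(\theta(1-\theta))$) does bound the unclipped error. Your choice of $\delta$ then gives $\P{\lvert\widehat{\bestauc}(\thetahat)-\bestauc\rvert>\epsilon}\le 4\exp(-\theta(1-\theta)n\epsilon^2/100)$ for all $\epsilon\le\epsilon_0=10/\sqrt{18}=5\sqrt2/3$.

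The problem is the large-deviation step. Evaluate your crude bound $\frac{1}{8\thetahat(1-\thetahat)}+\frac12$ at the typical value $\thetahat\approx\theta$: it already exceeds $\epsilon_0$ whenever $\theta(1-\theta)<1/(8(\epsilon_0-\frac12))\approx 0.067$. Take such a $\theta$, say $\theta=0.05$. Then the event $\{\thetahat(1-\thetahat)<1/(8(\sqrt s-\frac12))\}$ has probability tending to one for every $s$ with $\epsilon_0<\sqrt s<\frac{1}{8\theta(1-\theta)}+\frac12$. No deviation of $\bar\eta$ is required there, contrary to your first bullet. So your route bounds the MSE by an $n$-independent quantity of order $(\theta(1-\theta))^{-2}$, which does not even vanish as $n\to\infty$. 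Chernoff refinements and your fallbacks cannot repair this, because the loss occurs at typical $\thetahat$, not near the clipping boundary. The crude bound decouples the kernel sum from $\thetahat$, although both are computed from the same $\eta_i$.

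The missing idea is a deterministic self-normalization. Since $\min\{a,b\}\le\frac{a+b}{2}$,
\[
\sum_{i<j}\min\{\eta_i(1-\eta_j),\eta_j(1-\eta_i)\}\le\frac12\sum_{i\ne j}\eta_i(1-\eta_j)\le\frac{n^2}{2}\,\bar\eta(1-\bar\eta).
\]
Also, $\thetahat$ is $\bar\eta$ clipped to $[c/n,1-c/n]$ with $c/n<\frac12$, and clipping can only increase $x(1-x)$; hence $\bar\eta(1-\bar\eta)\le\thetahat(1-\thetahat)$. Together these give $1-\frac{n}{2(n-1)}\le\widehat{\bestauc}(\thetahat)\le 1$. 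So for $n\ge2$ the estimate lies in $[0,1]$, and $\lvert\widehat{\bestauc}(\thetahat)-\bestauc\rvert\le 1<\epsilon_0$ deterministically.

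With this, your large-deviation region is empty and the sub-Gaussian tail holds for all $\epsilon>0$. Integrating it exactly as in \cref{prop:unknown-class-prior:ber-mse} finishes the proof, and asymptotic unbiasedness follows by Jensen. This is also what the paper's one-line ``almost the same as BER'' proof tacitly needs. \cref{thm:unknown-class-prior:auc} is stated for the clipped estimator, whose small-$n$ case uses the error bound $\frac12$. For the unclipped estimator, the bound $1<5\sqrt2/3$ above plays that role.
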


\begin{proof}
  We omit the proof as it is almost the same as the one of 
  \cref{prop:unknown-class-prior:ber-mse}.
\end{proof}

In the following, we prove the lemmas used 
in the proof of 
\cref{thm:unknown-class-prior:ber,thm:unknown-class-prior:auc}.

\begin{lemma}
  \label{lem:unknown-class-prior:ber}
  For any $\theta' \in (0, 1)$,
  we have
  \begin{equation}
    \abs{
      \widehat{\bestber}(\theta')
      -
      \widehat{\bestber}(\theta)
    }
    \leq
    \frac{1}{\theta (1 - \theta)}
    \abs{\theta' - \theta}
    \text{.}
  \end{equation}
\end{lemma}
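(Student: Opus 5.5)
The plan is to prove the bound pointwise, for every summand $\phimin_{\ber}(\eta_i)$ separately, and then average over $i$. Fix $z\in[0,1]$ and define
\begin{equation}
  g(t) \coloneqq \min\set{\frac{z}{t}, \frac{1-z}{1-t}}, \qquad t\in(0,1),
\end{equation}
so that $\widehat{\bestber}(\theta') - \widehat{\bestber}(\theta) = \frac{1}{2n}\sum_{i}\paren{g_{\eta_i}(\theta') - g_{\eta_i}(\theta)}$. By the triangle inequality it therefore suffices to show
\begin{equation}
  \abs{g(\theta') - g(\theta)} \leq \max\set{\frac{1}{\theta},\frac{1}{1-\theta}}\abs{\theta'-\theta}.
\end{equation}
This is enough, since $\max\set{1/\theta, 1/(1-\theta)} \leq 1/(\theta(1-\theta))$, and the factor $\tfrac12$ then only makes the bound better.

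The obvious route, using that $\min$ is $1$-Lipschitz in the sup norm, fails. It produces a factor $1/(\theta\theta')$, which is uncontrolled when $\theta'$ is small. I will use the shape of $g$ instead. Since $z/t$ is decreasing and $(1-z)/(1-t)$ is increasing in $t$, we have $g(t)=z/t$ for $t\geq z$ and $g(t)=(1-z)/(1-t)$ for $t\leq z$. Hence $g$ is unimodal with maximum $g(z)=1$. The argument then splits into four cases according to where $\theta$ and $\theta'$ lie relative to $z$.

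\emph{Both $\theta,\theta'\geq z$.} Here $\abs{z/\theta' - z/\theta} = \abs{\theta-\theta'}\cdot\frac{z}{\theta'}\cdot\frac1\theta \leq \abs{\theta-\theta'}/\theta$, using $z\leq\theta'$.

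\emph{Both $\theta,\theta'\leq z$.} Symmetrically, the difference equals $\abs{\theta-\theta'}\frac{1-z}{(1-\theta')(1-\theta)} \leq \abs{\theta-\theta'}/(1-\theta)$, using $1-z\leq 1-\theta'$.

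\emph{$\theta'<z<\theta$.} Both values lie in $[0,1]$, so $\abs{g(\theta')-g(\theta)} \leq \max\set{1-g(\theta'),\,1-g(\theta)}$. We have $1-g(\theta) = (\theta-z)/\theta \leq (\theta-\theta')/\theta$. We also have $1-g(\theta')=(z-\theta')/(1-\theta') \leq (\theta-\theta')/(1-\theta)$, because $z<\theta$ and $1-\theta'>1-\theta$.

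\emph{$\theta<z<\theta'$.} This case is symmetric. We get $1-g(\theta)=(z-\theta)/(1-\theta)$ and $1-g(\theta') = (\theta'-z)/\theta' \leq (\theta'-\theta)/\theta$, since $\theta'>\theta$.

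The main obstacle is exactly the last two, mixed cases: the denominators must end up involving the true $\theta$ rather than $\theta'$. Passing through the peak value $1$ is what makes this work. Averaging the pointwise bound over $i$ then gives the lemma.

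If the lemma is also needed for the max-formula estimator, I would run the same argument. Its summand is $1-\tfrac12\max\set{z/t,(1-z)/(1-t)}$. The function $\max\set{z/t,(1-z)/(1-t)}$ equals $z/t$ for $t\leq z$ and $(1-z)/(1-t)$ for $t\geq z$, so the same case analysis applies with the roles of the two branches swapped.
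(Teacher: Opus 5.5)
Your proof is correct, and it takes a genuinely different route from the paper's.

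\textbf{The paper's route.} The paper writes each summand as $N(t)/D(t)$, with $N(t)=\min\{\eta(1-t),t(1-\eta)\}$ and $D(t)=t(1-t)$. It then applies the generic quotient estimate $\lvert N(\theta')/D(\theta')-N(\theta)/D(\theta)\rvert\le\bigl(D(\theta')\lvert N(\theta')-N(\theta)\rvert+N(\theta')\lvert D(\theta')-D(\theta)\rvert\bigr)/\bigl(D(\theta)D(\theta')\bigr)$ and bounds both differences by $\lvert\theta'-\theta\rvert$. Finally it cancels $D(\theta')$ using $N(\theta')\le D(\theta')$. This gives the per-summand constant $2/(\theta(1-\theta))$, and hence exactly the stated bound after the factor $\tfrac12$.

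\textbf{Comparison.} Both proofs rest on the same fact, $g(\theta')\le 1$ (the paper's $N(\theta')\le D(\theta')$). This is what removes the $\theta'$-dependence you correctly flag as the danger. In your first two cases it is literally $z\le\theta'$ (resp.\ $1-z\le 1-\theta'$), and in the mixed cases it enters through the peak. The paper's version is shorter and case-free, but it loses a constant by bounding $\lvert N(\theta')-N(\theta)\rvert$ and $\lvert D(\theta')-D(\theta)\rvert$ separately and adding them. Your branch-by-branch analysis avoids that loss. It actually proves $\lvert\widehat{\bestber}(\theta')-\widehat{\bestber}(\theta)\rvert\le\lvert\theta'-\theta\rvert/(2\min\{\theta,1-\theta\})$, which is smaller than the lemma's right-hand side by the factor $2/\max\{\theta,1-\theta\}\in(2,4]$. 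Your bound is sharp: it is attained, e.g., for $\theta\le\tfrac12$, $\theta'<\theta$ and all $\eta_i=\theta'$. Since $1/\min\{\theta,1-\theta\}$ and $1/(\theta(1-\theta))$ differ by a factor of at most $2$, this improves only constants, not the downstream rates.

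\textbf{Correction to your closing remark.} The argument does not carry over to the max-formula estimator; in fact no bound of the lemma's form holds for $\widehat{\bestber_2}$. For $\max\{z/t,(1-z)/(1-t)\}$ the branch $z/t$ is active on $t\le z$. So in the case $\theta,\theta'\le z$, the factor $z/\theta'$ is at least $1$ rather than at most $1$, and it blows up as $\theta'\to 0$. Concretely, take $n=1$, $\eta_1=0.9$, $\theta=0.5$, $\theta'=0.01$. The max-formula summand $1-\tfrac12\max\{\eta_1/t,(1-\eta_1)/(1-t)\}$ moves from $0.1$ to $-44$, while $\lvert\theta'-\theta\rvert/(\theta(1-\theta))=1.96$. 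This does not affect the lemma, which concerns the min-formula estimator, but that remark should be dropped.
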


\begin{proof}
  Fix an arbitrary $\eta \in [0, 1]$ and let
  \begin{equation}
    N(\theta)
    \coloneqq
    \min \set{
      \eta (1 - \theta),
      \theta (1 - \eta)
    }
    \text{,}
    \quad
    D(\theta)
    \coloneqq
    \theta (1 - \theta)
    \text{.}
  \end{equation}
  Then, we have 
  \begin{equation}
    \min \set{
      \frac{\eta}{\theta},
      \frac{1 - \eta}{1 - \theta}
    }
    =
    \frac{N(\theta)}{D(\theta)}
    \text{.}
  \end{equation}
  Observe that any $\theta' \in (0, 1)$ satisfies
  \begin{equation}
    \abs{
      \frac{N(\theta')}{D(\theta')}
      -
      \frac{N(\theta)}{D(\theta)}
    }
    \leq
    \frac{1}{D(\theta) D(\theta')}
    \paren{
      D(\theta') \abs{N(\theta') - N(\theta)}
      +
      N(\theta') \abs{D(\theta') - D(\theta)}
    }
  \end{equation}
  and that
  \begin{equation}
    \abs{N(\theta') - N(\theta)}
    \leq
    \abs{\theta' - \theta}
    \text{,}
    \quad
    \abs{D(\theta') - D(\theta)}
    \leq
    \abs{\theta' - \theta}
    \text{.}
  \end{equation}
  Therefore, we have
  \begin{equation}
    \abs{
      \frac{N(\theta')}{D(\theta')}
      -
      \frac{N(\theta)}{D(\theta)}
    }
    \leq
    \frac{D(\theta') + N(\theta')}{D(\theta) D(\theta')}
    \abs{\theta' - \theta}
    \leq
    \frac{2}{D(\theta)}
    \abs{\theta' - \theta}
    \text{,}
  \end{equation}
  where we used the fact that
  $N(\theta') \leq D(\theta')$
  for the last inequality.

  Using the above inequality, we obtain
  \begin{equation}
    \label{eq:unknown-class-prior:ber:intermediate}
    \begin{aligned}
      \abs{
        \widehat{\bestber}(\theta')
        -
        \widehat{\bestber}(\theta)
      }
      &\leq
      \frac{1}{2 n}
      \sum_{i=1}^{n}
      \abs{
        \min \set{
          \frac{\eta_{i}}{\theta'},
          \frac{1 - \eta_{i}}{1 - \theta'}
        }
        -
        \min \set{
          \frac{\eta_{i}}{\theta},
          \frac{1 - \eta_{i}}{1 - \theta}
        }
      }
      \\ &\leq
      \frac{1}{2 n}
      \sum_{i=1}^{n}
      \frac{2}{\theta (1 - \theta)}
      \abs{\theta' - \theta}
      \\ &=
      \frac{1}{\theta (1 - \theta)}
      \abs{\theta' - \theta}
      \text{.}
    \end{aligned}
  \end{equation}
\end{proof}

\begin{remark}
  In \cref{lem:unknown-class-prior:ber},
  we could have simply used the fact that
  $\theta \mapsto \min \set{\frac{\eta}{\theta}, \frac{1 - \eta}{1 - \theta}}$ 
  is $\frac{1}{\min \set{\theta, 1 - \theta}^{2}}$-Lipschitz
  to obtain 
  $
    \abs{\widehat{\bestber}(\theta') - \widehat{\bestber}(\theta)}
    \leq
    \frac{1}{\min \set{\theta, 1 - \theta}^{2}}
    \abs{\theta' - \theta}
  $,
  but it would have given a worse $\theta$-dependence.
\end{remark}

\begin{lemma}
  \label{lem:unknown-class-prior:auc}
  For any $\theta' \in (0, 1)$,
  it holds that
  \begin{equation}
    \abs{
      \widehat{\bestauc}(\theta')
      -
      \widehat{\bestauc}(\theta)
    }
    \leq
    \frac{
      1/2
      +
      \abs{
        \widehat{\bestauc}(\theta) - \bestauc
      }
    }{\theta' (1 - \theta')}
    \abs{\theta' - \theta}
    \text{.}
  \end{equation}
\end{lemma}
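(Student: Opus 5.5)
The key observation is that the $\theta$-dependence of $\widehat{\bestauc}(\theta)$ is entirely through the prefactor $\frac{1}{2\theta(1-\theta)}$. Write
\begin{equation}
  \widehat{\bestauc}(\theta) = 1 - \frac{\widehat{m}}{2\theta(1-\theta)},
  \qquad
  \widehat{m} \coloneqq \frac{2}{n(n-1)} \sum_{i<j} \min\set{\eta_i(1-\eta_j), \eta_j(1-\eta_i)}.
\end{equation}
The statistic $\widehat{m}$ does not depend on the class prior. Consequently,
\begin{equation}
  \widehat{\bestauc}(\theta') - \widehat{\bestauc}(\theta)
  = \frac{\widehat{m}}{2}\paren{\frac{1}{\theta(1-\theta)} - \frac{1}{\theta'(1-\theta')}}
  = \frac{\widehat{m}}{2\theta(1-\theta)} \cdot \frac{\theta'(1-\theta') - \theta(1-\theta)}{\theta'(1-\theta')}.
\end{equation}
The plan is to bound the two factors on the right separately.

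\textbf{Step 1 (the $\theta$-difference).} We have $\theta'(1-\theta') - \theta(1-\theta) = (\theta'-\theta)(1-\theta-\theta')$. Since $\theta, \theta' \in (0,1)$, we get $|1-\theta-\theta'| < 1$. Hence the numerator is at most $|\theta'-\theta|$ in absolute value, and this is exactly the source of the denominator $\theta'(1-\theta')$ in the claim.

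\textbf{Step 2 (the prefactor).} Note that $\frac{\widehat{m}}{2\theta(1-\theta)} = 1 - \widehat{\bestauc}(\theta)$, which is nonnegative. We then bound it using the true optimum:
\begin{equation}
  1 - \widehat{\bestauc}(\theta) \le (1 - \bestauc) + |\widehat{\bestauc}(\theta) - \bestauc| \le \frac{1}{2} + |\widehat{\bestauc}(\theta) - \bestauc|.
\end{equation}
The last inequality uses $\bestauc \ge 1/2$. This follows because a constant scorer attains AUC $1/2$; equivalently, it follows from \cref{lem:bestauc} via $\E{m} = 2\theta(1-\theta)(1-\bestauc)$ together with $\E{m} \le \theta(1-\theta)$, as already used in \cref{lem:ber-auc:bound:auc:variance}.

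\textbf{Conclusion and main obstacle.} Multiplying the two bounds gives the claim. The only delicate point is Step 2: the naive bound $\widehat{m} \le 1/4$ would leave a factor $\frac{1}{\theta(1-\theta)}$ and ruin the $\theta$-dependence. The trick is to recognize the prefactor as $1 - \widehat{\bestauc}(\theta)$ and compare it to $1 - \bestauc \le 1/2$. This is why the random error term $|\widehat{\bestauc}(\theta) - \bestauc|$ appears in the statement.
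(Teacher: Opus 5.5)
Your proof is correct and follows essentially the same route as the paper: you isolate the $\theta$-dependence in the prefactor $\frac{1}{2\theta(1-\theta)}$, use $|\theta'(1-\theta') - \theta(1-\theta)| \le |\theta' - \theta|$, and bound the remaining factor $1 - \widehat{\bestauc}(\theta)$ by $\frac{1}{2} + |\widehat{\bestauc}(\theta) - \bestauc|$ via $\bestauc \ge \frac{1}{2}$. The only cosmetic difference is that you write the difference directly as a product rather than through the ratio $\frac{1 - \widehat{\bestauc}(\theta')}{1 - \widehat{\bestauc}(\theta)}$, which spares you the paper's separate treatment of the case $\widehat{\bestauc}(\theta) = 1$.
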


\begin{proof}
  We first show that
  \begin{equation}
    \label{eq:unknown-class-prior:auc:intermediate}
    \abs{
      \widehat{\bestauc}(\theta')
      -
      \widehat{\bestauc}(\theta)
    }
    \leq
    \frac{
      \abs{1 - \widehat{\bestauc}(\theta)}
    }{
      \theta' (1 - \theta')
    }
    \abs{\theta' - \theta}
    \text{.}
  \end{equation}
  If $\widehat{\bestauc}(\theta) = 1$,
  \eqref{eq:unknown-class-prior:auc:intermediate}
  trivially holds because
  we also have
  $\widehat{\bestauc}(\theta') = 1$.
  If $\widehat{\bestauc}(\theta) < 1$,
  we have
  \begin{equation}
    \begin{aligned}
      \abs{
        \frac{
          1 - \widehat{\bestauc}(\theta')
        }{
          1 - \widehat{\bestauc}(\theta)
        }
        - 1
      }
      &=
      \abs{
        \frac{\theta(1 - \theta)}{\theta' (1 - \theta')} - 1
      }
      \\ &=
      \abs{
        \frac{\theta' (1 - \theta') - \theta (1 - \theta)}{\theta' (1 - \theta')}
      }
      \\ &\leq
      \frac{\abs{\theta' - \theta}}{\theta' (1 - \theta')}
      \text{,}
    \end{aligned}
  \end{equation}
  where the last inequality follows from the fact that
  $x \in (0, 1) \mapsto x (1 - x)$ is $1$-Lipschitz.
  Now,
  \eqref{eq:unknown-class-prior:auc:intermediate}
  follows by multiplying both sides
  by $|1 - \widehat{\bestauc}(\theta)|$.

  Next,
  observe that
  the AUC of the trivial scoring function $f(x) \equiv 0$
  is
  \begin{equation}
      \auc(f)
      =
      \E{
        \I{0 > 0} 
        + 
        \frac{1}{2} \I{0 = 0}
      }
      =
      \frac{1}{2}
      \text{,}
  \end{equation}
  and hence 
  the maximum AUC over all scoring functions
  satisfies $\bestauc \in [\frac{1}{2}, 1]$.
  Using this fact,
  we have
  \begin{equation}
    \abs{1 - \widehat{\bestauc}(\theta)}
    \leq
    \abs{1 - \bestauc}
    +
    \abs{
      \widehat{\bestauc}(\theta) - \bestauc
    }
    \leq
    \frac{1}{2}
    +
    \abs{
      \widehat{\bestauc}(\theta) - \bestauc
    }
    \text{.}
  \end{equation}
  Combining this with
  \eqref{eq:unknown-class-prior:auc:intermediate},
  we obtain the desired result.
\end{proof}

\begin{remark}
  In \cref{lem:unknown-class-prior:auc},
  we could have argued more simply by using
  $
    \abs{\frac{1}{\theta' (1 - \theta')} - \frac{1}{\theta (1 - \theta)}}
    \leq
    \frac{1}{\theta' (1 - \theta') \theta (1 - \theta)}
    \abs{\theta' - \theta}
  $,
  but it would have given a worse $\theta$-dependence in the final result.
\end{remark}

\begin{lemma}
  \label{lem:unknown-class-prior:auc2}
  Suppose \cref{ass:unknown-class-prior:epsilon}
  holds.
  Then, for any $\delta \in (0, 1)$
  and
  $n \geq \frac{18 \log (2 / \delta)}{\theta (1 - \theta)}$,
  with probability at least $1 - \delta$,
  it holds that
  \begin{equation}
    \frac{1}{\thetahat (1 - \thetahat)}
    \lvert 
      \thetahat - \theta 
    \rvert
    \leq
    6
    \sqrt{\frac{2 \log (2 / \delta)}{\theta (1 - \theta) n}}
    \text{.}
  \end{equation}
\end{lemma}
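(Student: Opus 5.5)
We prove \cref{lem:unknown-class-prior:auc2} by combining the Bernstein-type bound on $\lvert \thetahat - \theta \rvert$ from \cref{lem:unknown-class-prior:bernstein} with a lower bound on the random denominator $\thetahat (1 - \thetahat)$ in terms of $\theta (1 - \theta)$. On the event of \cref{lem:unknown-class-prior:bernstein}, which has probability at least $1 - \delta$, we have $\lvert \thetahat - \theta \rvert \leq 2 \sqrt{2 \theta (1 - \theta) \log(2/\delta) / n}$. This holds provided $n \geq \frac{(2\log(2/\delta) + 3c)^2}{18\theta(1-\theta)\log(2/\delta)}$. As in the proof of \cref{thm:unknown-class-prior:ber}, using $c < \frac{1}{2}$ and $\log(2/\delta) > \log 2$, this threshold is at most about $0.53\,\frac{\log(2/\delta)}{\theta(1-\theta)}$. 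That is far below the assumed $n \geq \frac{18\log(2/\delta)}{\theta(1-\theta)}$, so the Bernstein event applies.

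Next, we control the denominator on the same event. The assumed sample size gives
\begin{equation}
\lvert \thetahat - \theta \rvert \leq 2\sqrt{\frac{2\theta(1-\theta)\log(2/\delta)}{n}} \leq 2 \sqrt{\frac{2 \theta^2(1-\theta)^2}{18}} = \frac{2}{3}\,\theta(1-\theta) \leq \frac{2}{3} \min\{\theta, 1-\theta\}.
\end{equation}
Hence $\thetahat \geq \theta/3$ and $1 - \thetahat \geq (1-\theta)/3$. Without further structure this yields $\thetahat(1-\thetahat) \geq \theta(1-\theta)/9$, and plugging in gives a constant of $9 \cdot 2 = 18$, which is too weak. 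We therefore refine the argument. Write $\thetahat(1-\thetahat) \geq \theta(1-\theta) - \lvert\thetahat - \theta\rvert$, using that $x \mapsto x(1-x)$ is $1$-Lipschitz on $[0,1]$. The display above then gives $\thetahat(1-\thetahat) \geq \theta(1-\theta)\,(1 - \frac{2}{3}) = \frac{1}{3}\theta(1-\theta)$.

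Combining the two bounds,
\begin{equation}
\frac{\lvert\thetahat - \theta\rvert}{\thetahat(1-\thetahat)} \leq \frac{3}{\theta(1-\theta)} \cdot 2\sqrt{\frac{2\theta(1-\theta)\log(2/\delta)}{n}} = 6\sqrt{\frac{2\log(2/\delta)}{\theta(1-\theta)n}},
\end{equation}
which is exactly the claim. Clipping does not interfere, since $\thetahat \in [\tau_n, 1 - \tau_n] \subset (0,1)$ guarantees the ratio is well defined.

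The main obstacle is the denominator step. A naive multiplicative argument loses a factor of $9$, whereas the Lipschitz argument gives the constant $3$ needed to reach $6$. The sample-size condition $n \geq 18\log(2/\delta)/(\theta(1-\theta))$ appears tailored precisely so that $\lvert\thetahat - \theta\rvert \leq \frac{2}{3}\theta(1-\theta)$. The remaining work is to check carefully that the threshold of \cref{lem:unknown-class-prior:bernstein} is dominated under \cref{ass:unknown-class-prior:epsilon}.
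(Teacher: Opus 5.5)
Your proposal is correct and matches the paper's proof step for step: the Bernstein bound of \cref{lem:unknown-class-prior:bernstein}, the sample-size condition giving $\lvert\thetahat-\theta\rvert\le\frac{2}{3}\theta(1-\theta)$, the $1$-Lipschitz bound $\thetahat(1-\thetahat)\ge\frac{1}{3}\theta(1-\theta)$, and the final combination yielding the constant $6$. One harmless slip: the figure $0.53$ comes from the $\log(4/\delta)$ setting of \cref{thm:unknown-class-prior:ber}, whereas with only $\log(2/\delta)>\log 2$ the threshold is roughly $0.97\,\log(2/\delta)/(\theta(1-\theta))$; the domination you need follows most simply from $2\log(2/\delta)+3c\le 18\log(2/\delta)$, which holds because $3c<\frac{3}{2}<16\log 2<16\log(2/\delta)$.
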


\begin{proof}
  Some calculation shows that 
  any $c \in (0, \frac{1}{2})$,
  $\delta \in (0, 1)$
  and $\theta \in (0, 1)$
  satisfy
  $
    \frac{18 \log (2 / \delta)}{\theta (1 - \theta)}
    \geq
    \frac
      {\paren{2 \log (2 / \delta) + 3c}^2}
      {18 \theta (1 - \theta) \log (2 / \delta)}
  $.
  Therefore,
  by \cref{lem:unknown-class-prior:bernstein},
  with probability at least $1 - \delta$,
  we have
  \begin{equation}
    \label{eq:unknown-class-prior:auc2:intermediate}
    \lvert 
      \thetahat - \theta 
    \rvert
    \leq
    \sqrt{\frac{8 \theta (1 - \theta) \log (2 / \delta)}{n}}
    \text{.}
  \end{equation}
  Since $n \geq \frac{18 \log (2 / \delta)}{\theta (1 - \theta)}$,
  it follows that
  \begin{equation}
    \lvert 
      \thetahat - \theta 
    \rvert
    \leq
    \sqrt{8 \theta (1 - \theta) \log (2 / \delta)}
    \cdot
    \sqrt{
      \frac{\theta (1 - \theta)}{18 \log (2 / \delta)}
    }
    \leq
    \frac{2 \theta (1 - \theta)}{3}
    \text{,}
  \end{equation}
  which implies
  \begin{equation}
    \thetahat (1 - \thetahat)   
    \geq
    \theta (1 - \theta)
    -
    \lvert 
      \thetahat - \theta 
    \rvert
    \geq
    \frac{1}{3}
    \theta (1 - \theta)
    \text{.}
  \end{equation}
  Therefore, again by \eqref{eq:unknown-class-prior:auc2:intermediate},
  we have
  \begin{equation}
    \frac{1}{\thetahat (1 - \thetahat)}
    \lvert 
      \thetahat - \theta 
    \rvert
    \leq
    \frac{3}{\theta (1 - \theta)}
    \sqrt{\frac{8 \theta (1 - \theta) \log (2 / \delta)}{n}}
    =
    6
    \sqrt{\frac{2 \log (2 / \delta)}{\theta (1 - \theta) n}}
    \text{.}
  \end{equation}
\end{proof}

\section{Estimation from corrupted soft labels}
\label{sec:corrupted}

In practice, 
clean soft labels
$\eta_{i} = \P{y = 1 \mid x = x_{i}}$
may be unavailable.
If the data generation process allows 
repeated sampling of class labels for 
the same input $x_{i}$,
it is still possible to approximate $\eta_{i} = \E{y \mid x = x_{i}}$
by the average of the sampled labels,
in which case
the resulting plug-in Bayes error estimator
has been shown to be asymptotically unbiased
\citep{ishida2023is,ushio2025practical}.

However, it is often difficult to obtain
more than one class label per input
due to various reasons.
For example,
the construction of image classification datasets,
e.g., CIFAR-10 \citep{cifar10},
typically proceeds as follows:
first, images are retrieved from the web
by searching with keywords
associated with each class name;
then, human annotators filter out
images that are irrelevant to the class;
finally, the images are downsampled to a fixed resolution.
As such, each class label is associated 
with the original high-resolution image before downsampling.
Although it is possible to
collect multiple class labels for each
low-resolution image after downsampling
as in the CIFAR-10H dataset \citep{cifar10h},
the downsampling process makes the class posterior distribution
much more uncertain 
than that of the original high-resolution image,
distorting the soft label estimates.

A natural way to model such situations 
is to assume that
we have access to
a single class label $y_{i} \in \set{0, 1}$
plus 
a \emph{corrupted soft label}
$\xi_{i} = f(\eta_{i}) \in [0, 1]$,
instead of the clean soft label $\eta_{i}$,
for each input $x_{i}$ \citep{ushio2025practical}.
Here, the corrupted soft label $\xi_{i}$
is the clean soft label $\eta_{i}$
skewed by some unknown increasing transformation $f: [0, 1] \to [0, 1]$.
In the CIFAR-10/10H example above,
$\xi_{i}$ is the class posterior of the low-resolution image,
which can be obtained by averaging the labels from CIFAR-10H,
and
$y_{i}$ is the class label from CIFAR-10.

Here, we consider the problem of
estimating the optimal 
BER and AUC
from a dataset of $n$ pairs
$(\xi_{1}, y_{1}), \dots, (\xi_{n}, y_{n})$.
We first describe an algorithm based on isotonic regression
\citep{ayer1955empirical}
to recover the clean soft labels
from the corrupted ones.
Next, we discuss
the estimation of the class prior.
Then,
we present finite-sample
error bounds
for the resulting plug-in estimators.
Finally, we extend these bounds to
a noisy setting where
the observed soft labels may not preserve
the ordering of the clean soft labels.

\paragraph{Recovery of soft labels}

\begin{wrapfigure}[22]{r}{0.4\textwidth}
  \vspace{-\intextsep}
  \centering
  \includegraphics[width=\linewidth]{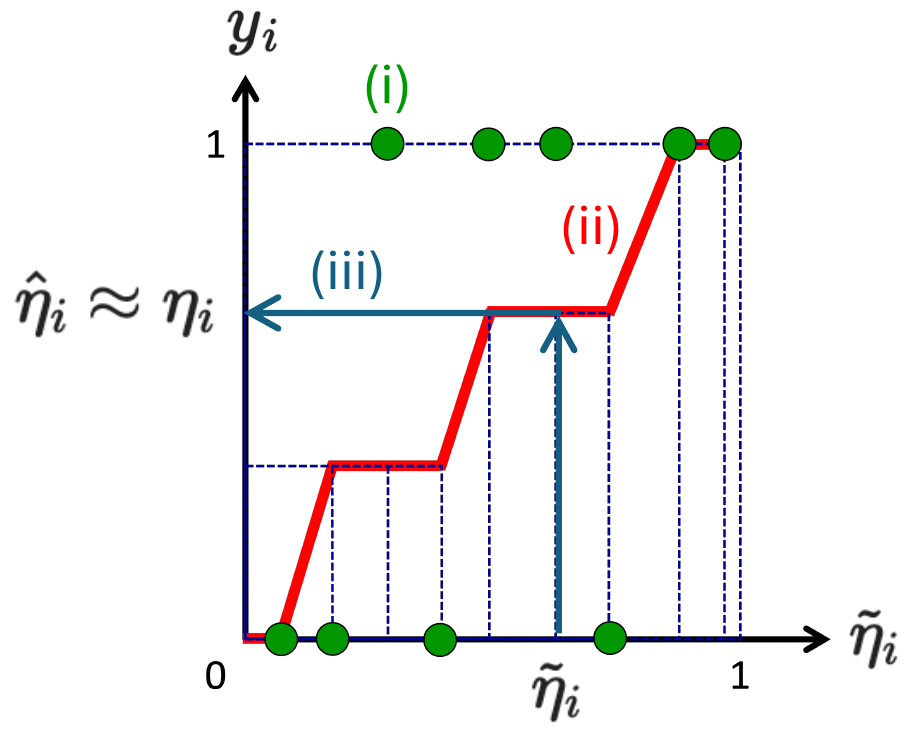}
  \caption{
    Illustration of the use of isotonic regression
    to recover clean soft labels
    from pairs 
    $(\xi_{1}, y_{1}), \dots, (\xi_{n}, y_{n})$
    of corrupted soft labels
    and class labels.
    (i) Plot the points
    $(\xi_{i}, y_{i})$.
    (ii) Find the non-decreasing function
    that best fits the points.
    (iii) Use the fitted function values
    as estimates of the clean soft labels.
    Roughly speaking, 
    the fitted function is expected to 
    approximate the inverse $f^{-1}$ 
    of the unknown skew function $f$.
  }
  \label{fig:corrupted}
\end{wrapfigure}

To approximately recover the clean soft labels $\eta_{1}, \dots, \eta_{n}$,
we adopt an approach based on 
isotonic regression,
which was originally proposed by
\citet{ushio2025practical}
in the context of Bayes error estimation.
Isotonic regression is one of the most widely used methods 
for calibrating classifier outputs 
to produce reliable probability estimates \citep{zadrozny2002transforming},
but here we use it for calibrating the corrupted soft labels.
Then the calibrated soft labels can be used as estimates of the clean soft labels.

A technical description of the algorithm is as follows.
We first
    sort the soft labels $\xi_{1}, \dots, \xi_{n}$
    in ascending order,
    and
    let $(j)$ denote the index of the $j$-th smallest
    of them,
    i.e., $\xi_{(1)} \leq \dots \leq \xi_{(n)}$.
    Then we use the pool adjacent violator algorithm
    \citep[PAVA;][]{ayer1955empirical}
    to find the monotonic sequence
    $\widehat{\eta}_{(1)} \leq \dots \leq \widehat{\eta}_{(n)}$
    that best fits the corresponding class labels
    $y_{(1)}, \dots, y_{(n)}$
    in the least squares sense:
    $
      \min_{
        \widehat{\eta}_{(1)} 
        \leq \dots \leq 
        \widehat{\eta}_{(n)}
      }
      \frac{1}{n}
      \sum_{i=1}^{n}
      \paren{
        y_{(i)} - \widehat{\eta}_{(i)}
      }^{2}
    $.
Each resulting $\widehat{\eta}_{i}$
is expected to approximate 
the true clean soft label $\eta_{i}$.
See also \cref{fig:corrupted}.
Assuming standard sorting algorithms
with worst-case time complexity of $O(n \log n)$,
this algorithm runs in
$O(n \log n)$ time in total,
since
the PAVA part takes only $O(n)$ time
\citep{best1990active,busing2022monotone}.

\paragraph{Estimation of class prior}

As 
the BER and AUC estimators
introduced in
\cref{sec:clean}
involve
the unknown class prior $\theta$,
it needs to be estimated from data.
Note that
\citet{ushio2025practical}
did not discuss this problem
since their Bayes error estimator did not
depend on
the class prior.

Since $\theta = \E{y} = \E{\eta(x)}$,
two natural candidate estimators
of $\theta$
are
$\frac{1}{n} \sum_{i} y_{i}$
and
$\frac{1}{n} \sum_{i} \widehat{\eta}_{i}$.
Interestingly,
however,
it turns out that
they are actually no different from each other.
Indeed,
we have $\widehat{\bm{\eta}} + t \bm{1} \in \mathcal{M}_{n}$
for any $t \in \R$.\footnote{
  $\bm{1}$ denotes the all-one vector.
}
So by the optimality of
$\widehat{\bm{\eta}}$,
$
  t \mapsto
  \frac{1}{2}
  \enorm{
    \bm{y} - (\widehat{\bm{\eta}} + t \bm{1})
  }^{2}
$
achieves its minimum at $t = 0$.
Therefore, we have
$
  \frac{1}{2}
  \left.
  \frac{d}{dt}
  \enorm{
    \bm{y} - (\widehat{\bm{\eta}} + t \bm{1})
  }^{2}
  \right|_{t=0}
  =
  \widehat{\bm{\eta}}^\top \bm{1}
  -
  \bm{y}^\top \bm{1}
  =
  0
$,
which implies 
$\frac{1}{n} \sum_{i} y_{i} = \frac{1}{n} \sum_{i} \widehat{\eta}_{i}$.
Similarly to \cref{sec:unknown-class-prior},
we use a clipped version of this estimator:
$
  \thetatilde 
  \coloneqq 
  \clip_{\tau_{n}}^{1 - \tau_{n}}(\frac{1}{n} \sum_{i} y_{i})
  =
  \clip_{\tau_{n}}^{1 - \tau_{n}}(\frac{1}{n} \sum_{i} \widehat{\eta}_{i})
$,
where the clipping threshold $\tau_{n}$
is set according to \cref{ass:unknown-class-prior:epsilon}.
This estimator $\thetatilde$ 
converges to $\theta$
at rate $\frac{1}{\sqrt{n}}$
(see \cref{lem:corrupted:class-prior:bernstein}).

\paragraph{Estimation error bounds}

Now we can estimate $\bestber$ and $\bestauc$
by plugging in $\widehat{\bm{\eta}}$ and $\thetatilde$
in place of $\bm{\eta}$ and $\theta$,
respectively,
in the definitions of $\widehat{\bestber}$ and $\widetilde{\bestauc}$.
Let us denote these plug-in estimators by
$
  \widehat{\bestber}(
    \widehat{\bm{\eta}}, 
    \thetatilde
  )
$
and
$
  \widetilde{\bestauc}(
    \widehat{\bm{\eta}}, 
    \thetatilde
  )
$.
They enjoy the following finite-sample error bounds,
where
$\lesssim$
represents 
an inequality
up to some constant
not depending on $n$, $\theta$ or $\delta$.

\begin{theorem}[store=thm:corrupted:noiseless]
  \label{thm:corrupted:noiseless}
  Under
  \cref{ass:unknown-class-prior:epsilon},
  for any
  $\delta \in (0, 1)$,
  each of the following holds with probability at least $1 - \delta$:
  \IfRestatingTF{
    \begin{equation}
      \nonumber
      \abs{
        \widehat{\bestber}(
          \widehat{\bm{\eta}},
          \thetatilde
        )
        -
        \bestber
      }
      \lesssim
      \frac{1}{\min \set{\theta, 1 - \theta}}
      \paren{
        \frac{1}{n^{1 / 3}}
        +
        \sqrt{
          \frac{\log (5 / \delta)}{n}
        }
      }
    \end{equation}
    and
    \begin{equation}
      \nonumber
      \abs{
        \widetilde{\bestauc}(
          \widehat{\bm{\eta}},
          \thetatilde
        )
        -
        \bestauc
      }
      \lesssim
      \frac{1}{\theta (1 - \theta)}
      \paren{
        \frac{1}{n^{1 / 3}}
        +
        \sqrt{
          \frac{\log (5 / \delta)}{n}
        }
      }
      \text{.}
    \end{equation}
  }{
    $
      \abs{
        \widehat{\bestber}(
          \widehat{\bm{\eta}},
          \thetatilde
        )
        -
        \bestber
      }
      \lesssim
      \frac{1}{\min \set{\theta, 1 - \theta}}
      \paren{
        \frac{1}{n^{1 / 3}}
        +
        \sqrt{
          \frac{\log (5 / \delta)}{n}
        }
      }
    $
    and
    $
      \abs{
        \widetilde{\bestauc}(
          \widehat{\bm{\eta}},
          \thetatilde
        )
        -
        \bestauc
      }
      \lesssim
      \frac{1}{\theta (1 - \theta)}
      \paren{
        \frac{1}{n^{1 / 3}}
        +
        \sqrt{
          \frac{\log (5 / \delta)}{n}
        }
      }
    $.
  }
\end{theorem}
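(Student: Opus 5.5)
We split the error of the plug-in estimators into three parts, using the triangle inequality. For BER we write
\[
\bigl|\widehat{\bestber}(\widehat{\bm{\eta}},\thetatilde)-\bestber\bigr|
\le
\bigl|\widehat{\bestber}(\widehat{\bm{\eta}},\thetatilde)-\widehat{\bestber}(\bm{\eta},\thetatilde)\bigr|
+\bigl|\widehat{\bestber}(\bm{\eta},\thetatilde)-\widehat{\bestber}(\bm{\eta},\theta)\bigr|
+\bigl|\widehat{\bestber}(\bm{\eta},\theta)-\bestber\bigr|.
\]
The third term is the clean, known-prior error; we bound it by the Hoeffding bound \eqref{eq:hoeffding-u-statistic:tail}, getting order $\sqrt{\log(1/\delta)/n}$. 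The second term is the prior-perturbation error. Lemma~\ref{lem:unknown-class-prior:ber} gives $\frac{1}{\theta(1-\theta)}|\thetatilde-\theta|$. Since $\frac1n\sum_i y_i$ is an average of Bernoulli$(\theta)$ variables, the Bernstein argument of Lemma~\ref{lem:unknown-class-prior:bernstein} (stated as Lemma~\ref{lem:corrupted:class-prior:bernstein}) gives $|\thetatilde-\theta|\lesssim\sqrt{\theta(1-\theta)\log(1/\delta)/n}$. This part is therefore $\lesssim\sqrt{\log(1/\delta)/(\theta(1-\theta)n)}$, which is dominated by $\frac{1}{\min\{\theta,1-\theta\}}\sqrt{\log(1/\delta)/n}$. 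As in Theorem~\ref{thm:unknown-class-prior:ber}, we handle small $n$ by noting that the claim is trivial there because the left-hand side is at most $1/2$.

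The first term is the soft-label recovery error. The map $z\mapsto\phimin_\ber(z)$ is $\frac{1}{2\min\{\theta',1-\theta'\}}$-Lipschitz, where $\theta'$ is the plug-in prior. On the good event, $\thetatilde$ lies within a constant factor of $\theta$ in the sense $\min\{\thetatilde,1-\thetatilde\}\gtrsim\min\{\theta,1-\theta\}$, using the argument of Lemma~\ref{lem:unknown-class-prior:auc2}. Hence this term is at most $\frac{C}{\min\{\theta,1-\theta\}}\cdot\frac1n\sum_i|\widehat\eta_i-\eta_i|$.

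The key ingredient is then an isotonic-regression risk bound. Because $\xi_i=f(\eta_i)$ with $f$ increasing, the vector $(\eta_{(1)},\dots,\eta_{(n)})$ sorted by $\xi$ is itself monotone, and $\E{y_i\mid\eta_i}=\eta_i$ with bounded noise. Standard isotonic LSE risk bounds (Zhang 2002 / Chatterjee et al.) for bounded monotone signals with bounded sub-Gaussian noise give $\frac1n\sum_i(\widehat\eta_i-\eta_i)^2\lesssim n^{-2/3}+\log(1/\delta)/n$ with probability $1-\delta$; this is the form used by \citet{ushio2025practical}. By Cauchy--Schwarz, $\frac1n\sum|\widehat\eta_i-\eta_i|\lesssim n^{-1/3}+\sqrt{\log(1/\delta)/n}$. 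This high-probability isotonic bound, derived via concentration of the LSE error around its expectation (the LSE is a projection, hence $1$-Lipschitz in $\bm y$, so we can use Talagrand or bounded differences), is the step I expect to be the main technical obstacle. The remaining work is a union bound over the events. The five-way split of $\delta$ suggested by $\log(5/\delta)$ covers the clean U-statistic, the prior, and the isotonic error, with spare pieces for AUC.

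For AUC we use the same three-term decomposition and handle clipping via $|\clip(a)-b|\le|a-b|$ for $b\in[1/2,1]$. The clean term comes from Theorem~\ref{thm:ber-auc:bound:auc:bernstein} (order $\sqrt{\log(1/\delta)/(\theta(1-\theta)n)}$). The prior term comes from Lemmas~\ref{lem:unknown-class-prior:auc} and~\ref{lem:unknown-class-prior:auc2}. For the recovery term, we use that $(z_1,z_2)\mapsto\min\{z_1(1-z_2),z_2(1-z_1)\}$ is $1$-Lipschitz in each coordinate. The U-statistic difference is then at most $\frac{1}{2\thetatilde(1-\thetatilde)}\cdot\frac{2}{n}\sum_i|\widehat\eta_i-\eta_i|$, and $\thetatilde(1-\thetatilde)\ge\theta(1-\theta)/3$ on the good event. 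This yields the $\frac{1}{\theta(1-\theta)}$ prefactor times the same isotonic rate. Combining the terms and absorbing the smaller ones gives the stated bound.
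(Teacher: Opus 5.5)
Your proposal is correct and follows essentially the paper's route. The pieces match: a three-term triangle inequality, with the recovery error controlled by the Lipschitz property of the estimators in the soft labels plus the isotonic-regression bound of \citet{ushio2025practical} (with zero approximation term, since the sorted clean labels are monotone), the prior error by the Bernstein bound on $\thetatilde$, and the clean-label concentration bound. For AUC your split is exactly the paper's $A(\thetatilde)\,|B(\widehat{\bm{\eta}})-B(\bm{\eta})| + B(\bm{\eta})\,|A(\thetatilde)-A(\theta)|$ decomposition.

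The one difference is in the BER case. The paper telescopes in the other order: it evaluates the prior perturbation at the recovered labels via Lemma~\ref{lem:unknown-class-prior:ber}, whose constant depends only on the true $\theta$, and evaluates the recovery error at the true $\theta$. That ordering avoids your extra step of lower-bounding $\min\{\thetatilde,1-\thetatilde\}$. You do supply that step correctly via Lemma~\ref{lem:unknown-class-prior:auc2}, since $\min\{\thetatilde,1-\thetatilde\}\ge\thetatilde(1-\thetatilde)\ge\theta(1-\theta)/3$.
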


\paragraph{Noisy corruption model}

\cref{thm:corrupted:noiseless}
can be extended to the case where
the corruption involves additive noise, i.e.,
$\xi_i = f(\eta_i) + \epsilon_i$.
Formally, we consider the following corruption model.

\begin{assumption}[Noisy corruption model]
  \label{ass:corrupted:noisy}
  Corrupted soft labels $\xi_{i} \in [0, 1]$
  are generated as
  $\xi_{i} = f(\eta_{i}) + \epsilon_{i}$.
  Here, the skew function $f: [0, 1] \to [0, 1]$ is differentiable with
  $f' \geq \gamma$ for some constant $\gamma > 0$
  and satisfies $f(0) = 0$ and $f(1) = 1$.
  Also, given $\eta_{1}, \dots, \eta_{n}$,
  $\epsilon_{i}$ is a zero-mean random variable
  with variance at most $\sigma^2$.
  We further assume that
  $\epsilon_{1}, \dots, \epsilon_{n}, y_{1}, \dots, y_{n}$ are
  conditionally independent given $\eta_{1}, \dots, \eta_{n}$.
\end{assumption}

The following counterpart of
\cref{thm:corrupted:noiseless}
holds in this noisy setting.
This result is similar
to Theorem~3 of \citet{ushio2025practical}.

\begin{theorem}
  \label{thm:corrupted:noisy}
  Under
  \cref{ass:unknown-class-prior:epsilon}
  and
  \cref{ass:corrupted:noisy},
  for any
  $\delta \in (0, 1)$,
  each of the following holds
  with probability at least $1 - \delta$:
  \begin{equation}
    \begin{aligned}
      \abs{
        \widehat{\bestber}(
          \widehat{\bm{\eta}},
          \thetatilde
        )
        -
        \bestber
      }
      &\lesssim
      \frac{1}{\min \set{\theta, 1 - \theta}}
      \paren{
        \frac{\sigma}{\gamma}
        +
        \frac{1}{n^{1 / 3}}
        +
        \sqrt{
          \frac{\log (5 / \delta)}{n}
        }
      }
      \text{,}
      \\
      \abs{
        \widetilde{\bestauc}(
          \widehat{\bm{\eta}},
          \thetatilde
        )
        -
        \bestauc
      }
      &\lesssim
      \frac{1}{\theta (1 - \theta)}
      \paren{
        \frac{\sigma}{\gamma}
        +
        \frac{1}{n^{1 / 3}}
        +
        \sqrt{
          \frac{\log (5 / \delta)}{n}
        }
      }
      \text{.}
    \end{aligned}
  \end{equation}

\end{theorem}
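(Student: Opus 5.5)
We will reduce \cref{thm:corrupted:noisy} to the same decomposition used for \cref{thm:corrupted:noiseless}, isolating the single place where the ordering of the $\xi_i$ matters. Write the target error as
\[
\widehat{\bestber}(\widehat{\bm{\eta}},\thetatilde)-\bestber
=
\bigl[\widehat{\bestber}(\widehat{\bm{\eta}},\thetatilde)-\widehat{\bestber}(\widehat{\bm{\eta}},\theta)\bigr]
+\bigl[\widehat{\bestber}(\widehat{\bm{\eta}},\theta)-\widehat{\bestber}(\bm{\eta},\theta)\bigr]
+\bigl[\widehat{\bestber}(\bm{\eta},\theta)-\bestber\bigr],
\]
and analogously for $\widetilde{\bestauc}$. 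Each term is then bounded separately.

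\textbf{Third and first terms.} The third term involves only clean labels and the true prior, so \cref{thm:ber-auc:bound} and \cref{thm:ber-auc:bound:auc:bernstein} give $\sqrt{\log(5/\delta)/n}$ up to the $\theta$-dependent factor. For the first term, note that $\thetatilde$ is the clipped mean of the hard labels $y_i$. The $y_i$ are still i.i.d.\ Bernoulli$(\theta)$ under \cref{ass:corrupted:noisy}, because the noise $\epsilon_i$ does not affect them. Hence \cref{lem:corrupted:class-prior:bernstein} applies unchanged. Combined with the Lipschitz-in-prior bounds \cref{lem:unknown-class-prior:ber,lem:unknown-class-prior:auc,lem:unknown-class-prior:auc2}, which hold for any $[0,1]$-valued label vector and in particular for $\widehat{\bm{\eta}}$, this term is also of order $\sqrt{\log(5/\delta)/n}$ times the prior factor. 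Both terms are therefore identical to their counterparts in the noiseless proof.

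\textbf{Middle term and Lipschitz reduction.} Everything specific to the noise lives in the middle term. Since $z\mapsto\phimin_{\ber}(z)$ is $\frac{1}{2\min\{\theta,1-\theta\}}$-Lipschitz, it is bounded by $\frac{1}{2\min\{\theta,1-\theta\}}\cdot\frac1n\sum_i|\widehat\eta_i-\eta_i|$. For AUC, the kernel $\min\{z_1(1-z_2),z_2(1-z_1)\}$ is $1$-Lipschitz in each argument. The U-statistic difference is thus at most $\frac{1}{\theta(1-\theta)}\cdot\frac1n\sum_i|\widehat\eta_i-\eta_i|$, and clipping only helps. By Cauchy–Schwarz it then suffices to show that, with probability $1-\delta$,
\[
\Bigl(\tfrac1n\textstyle\sum_i(\widehat\eta_i-\eta_i)^2\Bigr)^{1/2}\lesssim \frac{\sigma}{\gamma}+n^{-1/3}+\sqrt{\log(5/\delta)/n}.
\]

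\textbf{Isotonic regression under noisy ordering (main obstacle).} This is the main obstacle, and we would follow Theorem~3 of \citet{ushio2025practical}. Define the oracle ordering by $f(\eta_i)$, equivalently by $\eta_i$. Let $\bar{\bm\eta}$ be the isotonic fit of $\bm y$ with respect to the $\xi$-ordering, which equals $\widehat{\bm\eta}$. Decompose $\widehat{\bm\eta}-\bm\eta=(\widehat{\bm\eta}-\Pi_\xi\bm\eta)+(\Pi_\xi\bm\eta-\bm\eta)$, where $\Pi_\xi$ denotes projection onto the cone $\mathcal{M}_n$ of vectors monotone in the $\xi$-order.

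The first piece is the error of isotonic regression with mean vector $\bm\eta$ under a fixed design order, conditional on the $\xi_i$. Conditional independence of $\bm y$ from $\bm\epsilon$ given $\bm\eta$ makes this legitimate. The standard risk bound for isotonic LSE with bounded responses, via the Chatterjee-type or Zhang bound plus a concentration step, yields $n^{-1/3}+\sqrt{\log(1/\delta)/n}$ in RMS, because the projection is nonexpansive around $\Pi_\xi\bm\eta$.

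The second piece measures how far $\bm\eta$ is from being monotone in $\xi$. We would bound it by the distance to the explicitly monotone vector $g(\xi_i)$ with $g=f^{-1}$ extended as a $1/\gamma$-Lipschitz function:
\[
\|\Pi_\xi\bm\eta-\bm\eta\|\le\|g(\bm\xi)-\bm\eta\|\le\tfrac1\gamma\|\bm\epsilon\|.
\]
Then Markov or Chebyshev on $\frac1n\sum\epsilon_i^2$, whose expectation is at most $\sigma^2$, gives $\sigma/\gamma$ up to a $\delta$-dependent constant. We expect the delicate parts to be matching the stated $\log(5/\delta)$ dependence, possibly by absorbing a $1/\delta$ factor into the $\lesssim$ or assuming bounded noise, and checking that the fixed-design isotonic bound applies with the $\xi$-order as the design. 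The five events (clean U-statistic, prior, isotonic fit, noise, plus one spare) are combined by a union bound with $\delta/5$ each, which produces the $\log(5/\delta)$.
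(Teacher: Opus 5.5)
Your architecture matches the paper's. Everything except the root-mean-square bound on $\widehat{\bm{\eta}}-\bm{\eta}$ is inherited from the proof of \cref{thm:corrupted:noiseless}, and the noise enters only through the comparator $f^{-1}(\xi_i)$, which is monotone in the $\xi$-order and within $\lvert\epsilon_i\rvert/\gamma$ of $\eta_i$.

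The step that fails is the concentration of this comparator error. Markov's inequality on $\frac{1}{n}\sum_i\epsilon_i^2$ only gives $\frac{1}{n}\sum_i\epsilon_i^2\le\sigma^2/\delta'$ with probability $1-\delta'$, i.e.\ a term $\sigma/(\gamma\sqrt{\delta'})$. Chebyshev needs a fourth-moment bound you have not established, and even then its dependence on $1/\delta$ is polynomial. The constant hidden in $\lesssim$ may not depend on $\delta$, so this cannot be absorbed, and as written you prove a strictly weaker statement.

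The missing observation is that boundedness is already implied by \cref{ass:corrupted:noisy}; no extra assumption is needed. Since $f(0)=0$, $f(1)=1$ and $f'\ge\gamma>0$, $f$ is a bijection of $[0,1]$. Hence $f^{-1}(\xi_i)$ and $\eta_i$ both lie in $[0,1]$, and $D_i:=(f^{-1}(\xi_i)-\eta_i)^2\in[0,1]$. Conditionally on $\eta_1,\dots,\eta_n$, the $D_i$ are independent with $\mathbb{E}[D_i]\le\sigma^2/\gamma^2$ and $\mathrm{Var}(D_i)\le\mathbb{E}[D_i^2]\le\mathbb{E}[D_i]\le\sigma^2/\gamma^2$. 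Bernstein's inequality then gives, with probability at least $1-\delta/2$,
\[
\frac{1}{n}\sum_{i=1}^{n}D_i
\le
\frac{\sigma^2}{\gamma^2}
+\sqrt{\frac{2\sigma^2\log(2/\delta)}{\gamma^2 n}}
+\frac{2\log(2/\delta)}{3n}
\le
\Bigl(\frac{\sigma}{\gamma}+\sqrt{\frac{\log(2/\delta)}{n}}\Bigr)^2 ,
\]
which has exactly the required form. Applying Bernstein to $D_i$ itself, rather than to $\epsilon_i^2$ and then multiplying by $1/\gamma^2$, also keeps the deviation term free of a $1/\gamma$ factor. This is \cref{lem:isotonic-root-mean-square:noisy} in the paper.

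A smaller point concerns your first piece. Non-expansiveness of $\Pi_\xi$ only gives $\lVert\widehat{\bm{\eta}}-\Pi_\xi\bm{\eta}\rVert\le\lVert\bm{y}-\bm{\eta}\rVert$, which is of order one in root-mean-square and yields no rate. The $n^{-1/3}$ rate comes from an oracle inequality for isotonic regression with a misspecified mean, namely \cref{lem:isotonic-regression-bound}. It is applied conditionally on $\bm{\eta}$ and $\bm{\epsilon}$, so that the $\xi$-order is a fixed design and the $y_i$ are independent Bernoulli$(\eta_i)$. With that lemma you do not need to pass through $\Pi_\xi\bm{\eta}$ at all. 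Insert $u_i=f^{-1}(\xi_{(i)})\in\mathcal{M}_n(0,1)$ directly into its minimum, and combine the two bounds via $\sqrt{a^2+b^2}\le a+b$; this is how the paper argues.
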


A simple example of this model is when
the corrupted soft labels are obtained by averaging multiple hard labels
drawn from a corrupted Bernoulli distribution.
That is, for each $i = 1, \dots, n$,
we independently draw $m$ hard labels
$y_i^{(1)}, \dots, y_i^{(m)}$
from a Bernoulli distribution whose mean is $f(\eta_i)$
rather than $\eta_i$.
We then obtain the noisy corrupted soft label
$
  \xi_i
  =
  \frac{1}{m}
  \sum_{j = 1}^{m} y_i^{(j)}
$
by averaging these hard labels.
In this case,
$\xi_i = f(\eta_i) + \epsilon_i$,
where $\epsilon_i$ has mean zero and standard deviation at most
$\sigma = 1 / (2 \sqrt{m})$.
Therefore, the preceding bounds give estimation errors of order
$m^{-1/2} + n^{-1/3}$.
The additional $m^{-1/2}$ term diminishes as
the number $m$ of hard labels increases.

\section{Real-world evaluation of optimal performance estimators}
\label{sec:evaluation}

So far, we have discussed
estimators of the optimal BER and AUC values
and their theoretical guarantees.
However, how can we empirically evaluate 
the performance of these estimators
\emph{on real-world datasets}?
This is a non-trivial question because 
the true values of $\bestber$ and $\bestauc$
are unobservable for real-world data distributions.
In this section, 
we first briefly review \emph{FeeBee} \citep{renggli2021evaluating},
an existing evaluation framework
for estimators of 
the Bayes error
$\besterr$.
Then, we draw inspiration from FeeBee to propose
a method to evaluate estimators of the optimal BER and AUC.

\paragraph{Review of FeeBee}

The simplest way to evaluate
Bayes error estimators on a real-world dataset
would be to 
compare the estimates with the error rate $u$
of the best available classifier on the dataset.
Since we have a trivial bound
$0 \leq \besterr \leq u$
on the Bayes error,
we can at least rule out estimators 
returning values greater than $u$.
However, such a naive method is not very helpful
because
we cannot tell which of two estimators
is better 
if both of them return estimates
within the range $[0, u]$.
Moreover, 
this approach
cannot rule out
an obviously useless estimator
that always returns some constant
between $0$ and $u$.

The key idea of FeeBee is
to inject label noise of a controlled level
$\nu \in [0, 1)$
into the dataset.
The injected noise increases 
the Bayes error $\besterr_{\nu}$
for the noise-injected distribution,
which gives us another set of bounds
$L(\nu) \leq \besterr_{\nu} \leq U(\nu)$.
We can then 
estimate 
$\besterr_{\nu}$
for the noise-injected dataset
and test whether the estimate
$\widehat{\besterr_{\nu}}$
satisfies the bounds.
Repeating this process for many different 
noise levels $\nu$
and aggregating the results,
we can obtain a single score for the estimator.

\paragraph{Bounds on the optimal BER and AUC for noise-injected distributions}

In our proposed evaluation framework for the optimal BER and AUC,
we inject label noise into the original distribution
as follows:
for a fixed noise level $\nu \in [0, 1)$,
with probability $\nu$,
each original label $y$ is replaced
with an independent random label $w \in \set{0, 1}$ 
with a fixed mean $\beta \in (0, 1)$.
The resulting noisy label is given by
$y_{\nu} \coloneqq (1 - z) \cdot y + z \cdot w$,
where $z \in \set{0, 1}$ is an independent random variable with mean $\nu$.
The class posterior for the noisy distribution is then
related to the original posterior $\eta(x)$ by
$\mathbb{P}(y_{\nu} = 1 \mid x) = \lambda_{\nu}(\eta(x))$,
where
$\lambda_{\nu}(t) \coloneqq (1 - \nu) t + \nu \beta$.
Our noise model coincides with 
that of the original FeeBee
when $\beta = \frac{1}{2}$,
but we allow $\beta$ to be chosen arbitrarily in $(0, 1)$.

\begin{theorem}[store=thm:corrupted:noiseless]
  \label{thm:corrupted:noiseless}
  Under
  \cref{ass:unknown-class-prior:epsilon},
  for any
  $\delta \in (0, 1)$,
  each of the following holds with probability at least $1 - \delta$:
  \IfRestatingTF{
    \begin{equation}
      \nonumber
      \abs{
        \widehat{\bestber}(
          \widehat{\bm{\eta}},
          \thetatilde
        )
        -
        \bestber
      }
      \lesssim
      \frac{1}{\min \set{\theta, 1 - \theta}}
      \paren{
        \frac{1}{n^{1 / 3}}
        +
        \sqrt{
          \frac{\log (5 / \delta)}{n}
        }
      }
    \end{equation}
    and
    \begin{equation}
      \nonumber
      \abs{
        \widetilde{\bestauc}(
          \widehat{\bm{\eta}},
          \thetatilde
        )
        -
        \bestauc
      }
      \lesssim
      \frac{1}{\theta (1 - \theta)}
      \paren{
        \frac{1}{n^{1 / 3}}
        +
        \sqrt{
          \frac{\log (5 / \delta)}{n}
        }
      }
      \text{.}
    \end{equation}
  }{
    $
      \abs{
        \widehat{\bestber}(
          \widehat{\bm{\eta}},
          \thetatilde
        )
        -
        \bestber
      }
      \lesssim
      \frac{1}{\min \set{\theta, 1 - \theta}}
      \paren{
        \frac{1}{n^{1 / 3}}
        +
        \sqrt{
          \frac{\log (5 / \delta)}{n}
        }
      }
    $
    and
    $
      \abs{
        \widetilde{\bestauc}(
          \widehat{\bm{\eta}},
          \thetatilde
        )
        -
        \bestauc
      }
      \lesssim
      \frac{1}{\theta (1 - \theta)}
      \paren{
        \frac{1}{n^{1 / 3}}
        +
        \sqrt{
          \frac{\log (5 / \delta)}{n}
        }
      }
    $.
  }
\end{theorem}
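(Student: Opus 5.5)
Throughout, I write $\bestber$, $\bestauc$ for the population optima and $\widehat{\bestber}(\bm{\eta},\theta)$ for the oracle estimator built from clean soft labels and the true prior. The plan is a three-term decomposition:
\begin{equation}
  \abs{\widehat{\bestber}(\widehat{\bm{\eta}},\thetatilde)-\bestber}
  \leq
  \abs{\widehat{\bestber}(\widehat{\bm{\eta}},\thetatilde)-\widehat{\bestber}(\bm{\eta},\thetatilde)}
  +\abs{\widehat{\bestber}(\bm{\eta},\thetatilde)-\widehat{\bestber}(\bm{\eta},\theta)}
  +\abs{\widehat{\bestber}(\bm{\eta},\theta)-\bestber}.
\end{equation}
The third term is handled by the clean-label bounds of \cref{sec:proof-clean:bound}. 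The second is the prior-plug-in error. It is controlled by \cref{lem:unknown-class-prior:ber} together with a Bernstein bound on $\thetatilde$, namely \cref{lem:corrupted:class-prior:bernstein}, whose proof should mirror \cref{lem:unknown-class-prior:bernstein} with $\bar y$ in place of $\bar\eta$, since $\E{y}=\theta$ and $\Var{y}=\theta(1-\theta)$. These two terms contribute $O(\sqrt{\log(1/\delta)/(\theta(1-\theta)n)})$, which is dominated by the claimed rate. The union bound over the five events gives the $\log(5/\delta)$.

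The new ingredient is the first term, the soft-label recovery error. For fixed $\theta'$, the map $z\mapsto\phimin_\ber(z)$ is Lipschitz with constant $\frac{1}{2\min\{\theta',1-\theta'\}}$. So this term is at most $\frac{1}{2\min\{\thetatilde,1-\thetatilde\}}\cdot\frac1n\sum_i|\widehat\eta_i-\eta_i|$. On the event of the prior bound, $\min\{\thetatilde,1-\thetatilde\}\gtrsim\min\{\theta,1-\theta\}$ once $n\gtrsim\log(1/\delta)/(\theta(1-\theta))$, arguing as in \cref{lem:unknown-class-prior:auc2}. For smaller $n$ the claim is trivial because the left side is at most $1/2$.

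For AUC, $(z_1,z_2)\mapsto\min\{z_1(1-z_2),z_2(1-z_1)\}$ is $1$-Lipschitz in $\ell_1$. Hence the U-statistic moves by at most $\frac{1}{2\thetatilde(1-\thetatilde)}\cdot\frac{2}{n}\sum_i|\widehat\eta_i-\eta_i|$, giving the $1/(\theta(1-\theta))$ factor. Clipping only helps.

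It remains to bound $\frac1n\sum_i|\widehat\eta_i-\eta_i|\le(\frac1n\sum_i(\widehat\eta_i-\eta_i)^2)^{1/2}$ in high probability. Since $f$ is increasing, sorting by $\xi_i$ equals sorting by $\eta_i$ up to ties. The PAVA output is therefore exactly the isotonic least-squares fit of $y_i$ on the covariate $\eta_i$, with regression function $\E{y\mid\eta}=\eta$, which is nondecreasing and bounded in $[0,1]$. Standard risk bounds for bounded isotonic regression give $\E{\frac1n\sum_i(\widehat\eta_i-\eta_i)^2}\lesssim n^{-2/3}$. The $\log$-free form comes from, e.g., Zhang (2002) or Chatterjee's bound with total variation at most $1$.

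I expect the main obstacle to be converting this expectation bound into a high-probability bound of order $n^{-1/3}+\sqrt{\log(1/\delta)/n}$, rather than through Markov's inequality. I would do this with a concentration step. The map $\bm y\mapsto\|\widehat{\bm\eta}-\bm\eta\|_2$ is $1$-Lipschitz, because projection onto the convex cone $\mathcal M_n$ is nonexpansive. Conditionally on $\bm\eta$, the $y_i$ are independent and bounded, so Talagrand's convex-distance inequality applies, since the function is convex and Lipschitz. It gives $\|\widehat{\bm\eta}-\bm\eta\|_2\le\E{\|\widehat{\bm\eta}-\bm\eta\|_2}+O(\sqrt{\log(1/\delta)})$. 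Dividing by $\sqrt n$ yields $\frac{1}{\sqrt n}\|\widehat{\bm\eta}-\bm\eta\|_2\lesssim n^{-1/3}+\sqrt{\log(1/\delta)/n}$. Combining this with the Lipschitz prefactors gives both displayed bounds. Should ties among the $\eta_i$ make the ordering step delicate, I would note that PAVA averages tied blocks, which still lies in the same cone, so the argument goes through.
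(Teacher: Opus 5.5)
The overall structure of your proposal matches the paper, but there is a genuine gap in the step you yourself flagged as the main obstacle: turning the expectation bound for isotonic regression into a high-probability bound.

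\textbf{Where you agree with the paper.} You use the same ingredients: a three-term split, Bernstein control of $\thetatilde$ via \cref{lem:corrupted:class-prior:bernstein}, the Lipschitz bounds of \cref{lem:corrupted:lipschitz}, the clean-label bounds, and a union bound over five events. For AUC your decomposition is essentially the paper's, which also uses $1/(\thetatilde(1-\thetatilde))\le 3/(\theta(1-\theta))$. For BER the paper orders the terms the other way. It takes the prior error at $\widehat{\bm\eta}$, which \cref{lem:unknown-class-prior:ber} allows for arbitrary soft labels, and the recovery error at the true $\theta$. This avoids needing $\min\{\thetatilde,1-\thetatilde\}\gtrsim\min\{\theta,1-\theta\}$. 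Your order needs that extra step, but you handle it correctly.

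\textbf{The gap.} The map $g(\bm y)=\|\Pi_{\mathcal M_n}(\bm y)-\bm\eta\|_2$ is $1$-Lipschitz, but it is not convex, not even in a single coordinate. So Talagrand's convex-distance inequality does not apply to it. Take $n=2$, $\bm\eta=(0,1)$ and $y_2=1/2$. At $y_1=0.3,\,0.5,\,0.7$ the projections are $(0.3,0.5)$, $(0.5,0.5)$ and $(0.6,0.6)$. Hence $g\approx 0.583,\ 0.707,\ 0.721$, and the midpoint value exceeds the average of the endpoints. The concave kink sits exactly where pooling starts. One can also check that sublevel sets of $g$ fail to be convex, so quasi-convexity fails as well. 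Under Bernoulli rather than Gaussian noise, Lipschitzness alone gives no dimension-free concentration. The bounded-differences inequality only yields fluctuations of order $\sqrt{n\log(1/\delta)}$, which do not vanish after dividing by $\sqrt n$. So the bound $n^{-1/3}+\sqrt{\log(1/\delta)/n}$ does not follow from your argument as written.

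\textbf{How to fix it.} Concentrate a convex functional instead of $g$. Let $\bm\epsilon=\bm y-\bm\eta$ and
$Z_t(\bm\epsilon)=\sup\{\langle\bm\epsilon,\bm u-\bm\eta\rangle:\bm u\in\mathcal M_n(0,1),\ \|\bm u-\bm\eta\|_2\le t\}$.
This is convex and $t$-Lipschitz in $\bm\epsilon$. Use the basic inequality $\|\widehat{\bm\eta}-\bm\eta\|_2^2\le 2\langle\bm\epsilon,\widehat{\bm\eta}-\bm\eta\rangle$. Both $\widehat{\bm\eta}$ (block averages of binary labels) and the sorted $\bm\eta$ lie in the convex set $\mathcal M_n(0,1)$. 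Together these give $\{\|\widehat{\bm\eta}-\bm\eta\|_2\ge t\}\subseteq\{Z_t\ge t^2/2\}$. Talagrand's inequality does apply to $Z_t$. Dudley's bound with the entropy of bounded monotone sequences gives $\E{Z_t}\lesssim n^{1/4}t^{1/2}$, so taking $t\asymp n^{1/6}+\sqrt{\log(1/\delta)}$ works.

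Alternatively, do what the paper does and skip the expectation bound entirely. Invoke the high-probability oracle inequality \cref{lem:isotonic-regression-bound} of \citet{ushio2025practical} with $\bm\mu=(\eta_{(1)},\dots,\eta_{(n)})$. Its approximation term vanishes because this vector lies in $\mathcal M_n(0,1)$, as in \cref{lem:isotonic-root-mean-square:noiseless}.
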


\paragraph{Noisy corruption model}

\cref{thm:corrupted:noiseless}
can be extended to the case where
the corruption involves additive noise, i.e.,
$\xi_i = f(\eta_i) + \epsilon_i$.
Formally, we consider the following corruption model.

\begin{assumption}[Noisy corruption model]
  \label{ass:corrupted:noisy}
  Corrupted soft labels $\xi_{i} \in [0, 1]$
  are generated as
  $\xi_{i} = f(\eta_{i}) + \epsilon_{i}$.
  Here, the skew function $f: [0, 1] \to [0, 1]$ is differentiable with
  $f' \geq \gamma$ for some constant $\gamma > 0$
  and satisfies $f(0) = 0$ and $f(1) = 1$.
  Also, given $\eta_{1}, \dots, \eta_{n}$,
  $\epsilon_{i}$ is a zero-mean random variable
  with variance at most $\sigma^2$.
  We further assume that
  $\epsilon_{1}, \dots, \epsilon_{n}, y_{1}, \dots, y_{n}$ are
  conditionally independent given $\eta_{1}, \dots, \eta_{n}$.
\end{assumption}

The following counterpart of
\cref{thm:corrupted:noiseless}
holds in this noisy setting.
This result is similar
to Theorem~3 of \citet{ushio2025practical}.

\begin{theorem}
  \label{thm:corrupted:noisy}
  Under
  \cref{ass:unknown-class-prior:epsilon}
  and
  \cref{ass:corrupted:noisy},
  for any
  $\delta \in (0, 1)$,
  each of the following holds
  with probability at least $1 - \delta$:
  \begin{equation}
    \begin{aligned}
      \abs{
        \widehat{\bestber}(
          \widehat{\bm{\eta}},
          \thetatilde
        )
        -
        \bestber
      }
      &\lesssim
      \frac{1}{\min \set{\theta, 1 - \theta}}
      \paren{
        \frac{\sigma}{\gamma}
        +
        \frac{1}{n^{1 / 3}}
        +
        \sqrt{
          \frac{\log (5 / \delta)}{n}
        }
      }
      \text{,}
      \\
      \abs{
        \widetilde{\bestauc}(
          \widehat{\bm{\eta}},
          \thetatilde
        )
        -
        \bestauc
      }
      &\lesssim
      \frac{1}{\theta (1 - \theta)}
      \paren{
        \frac{\sigma}{\gamma}
        +
        \frac{1}{n^{1 / 3}}
        +
        \sqrt{
          \frac{\log (5 / \delta)}{n}
        }
      }
      \text{.}
    \end{aligned}
  \end{equation}

\end{theorem}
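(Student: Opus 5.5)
Under \cref{ass:corrupted:noisy}, the plan is to decompose the error into three pieces by the triangle inequality: (a) the error of the clean-label, known-prior estimator, $|\widehat{\bestber}(\bm\eta,\theta)-\bestber|$; (b) the effect of the class prior, $|\widehat{\bestber}(\widehat{\bm\eta},\thetatilde)-\widehat{\bestber}(\widehat{\bm\eta},\theta)|$; and (c) the soft-label recovery error, $|\widehat{\bestber}(\widehat{\bm\eta},\theta)-\widehat{\bestber}(\bm\eta,\theta)|$. The AUC case uses the same split. Each piece gets failure probability $\delta/5$ (or a union of such events), which is where the $\log(5/\delta)$ comes from.

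\textbf{Terms (a) and (b).} Term (a) is handled by \cref{thm:ber-auc:bound} and \cref{thm:ber-auc:bound:auc:bernstein}.

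For term (b), I would reuse \cref{lem:unknown-class-prior:ber} and \cref{lem:unknown-class-prior:auc}. Their proofs are pointwise in the soft-label values, so they apply verbatim to the calibrated labels $\widehat{\eta}_i\in[0,1]$ (PAVA outputs are averages of $y$'s, hence lie in $[0,1]$). Combined with \cref{lem:corrupted:class-prior:bernstein}, or \cref{lem:unknown-class-prior:auc2} for the $\thetahat(1-\thetahat)$ denominator, this gives a contribution of order $\sqrt{\log(5/\delta)/(\theta(1-\theta)n)}$. That is dominated by the stated prefactors, since $1/\sqrt{\theta(1-\theta)}\le 1/\theta(1-\theta)$ and, for BER, $1/\sqrt{\theta(1-\theta)}\lesssim 1/\min\{\theta,1-\theta\}$.

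\textbf{Term (c).} I would bound it by the Lipschitz constants of the kernels in the soft-label argument. The map $z\mapsto\frac12\min\{z/\theta,(1-z)/(1-\theta)\}$ is $\frac{1}{2\min\{\theta,1-\theta\}}$-Lipschitz. The AUC kernel $\phimin_\auc$ is $\frac{1}{2\theta(1-\theta)}$-Lipschitz in each argument with respect to the $1$-norm. Hence term (c) is at most a constant times the prefactor multiplied by $\frac1n\sum_i|\widehat{\eta}_i-\eta_i|\le(\frac1n\sum_i(\widehat{\eta}_i-\eta_i)^2)^{1/2}$. Clipping the AUC estimate only helps.

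\textbf{The main obstacle} is the $\ell_2$ risk bound for isotonic regression in the noisy model: showing that with probability $1-\delta/5$,
\[
\Big(\tfrac1n\textstyle\sum_i(\widehat{\eta}_i-\eta_i)^2\Big)^{1/2}\lesssim \tfrac{\sigma}{\gamma}+n^{-1/3}+\sqrt{\log(5/\delta)/n}.
\]
The difficulty is that $\eta_i$ is not monotone in the observed order of $\xi_i$. I would follow the route of \citet{ushio2025practical}, Theorem~3. First, let $\bm\eta^\circ$ denote the vector obtained by sorting the $\eta_i$ along the $\xi$-order, which is the isotonic projection target. Since PAVA is a projection onto the closed convex cone $\mathcal M_n$ of monotone vectors (nonexpansive), and $y_i$ has conditional mean $\eta_i$, the standard isotonic analysis gives an $n^{-1/3}$ rate plus an $\sqrt{\log(1/\delta)/n}$ deviation, using that $\eta$ has total variation at most $1$. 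Second, the misspecification $\|\bm\eta-\bm\eta^\circ\|/\sqrt n$ is controlled as follows: $f^{-1}$ is $1/\gamma$-Lipschitz, so a misordering of $\eta_i$ relative to $\xi_i$ requires $|\eta_i-\eta_j|\le(|\epsilon_i|+|\epsilon_j|)/\gamma$. A rearrangement inequality then bounds the sorted-versus-unsorted discrepancy by $\frac{1}{\gamma}(\frac1n\sum\epsilon_i^2)^{1/2}$. A Markov or Chebyshev bound on $\frac1n\sum\epsilon_i^2$, whose mean is at most $\sigma^2$, gives $\sigma/\gamma$ up to constants. I expect this second step, making the rearrangement bound rigorous, to require the most care.

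Setting $\sigma=0$ recovers \cref{thm:corrupted:noiseless}.
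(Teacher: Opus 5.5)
Your decomposition is essentially the paper's. The paper proves this theorem by rerunning the proof of \cref{thm:corrupted:noiseless} (for BER, literally your three terms, with \cref{lem:corrupted:lipschitz} for term (c)) using a noisy version of the isotonic RMS bound.

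The genuine gap is the probabilistic step at the end of term (c). Markov or Chebyshev applied to $\frac1n\sum_i\epsilon_i^2$ cannot give ``$\sigma/\gamma$ up to constants''. Markov at level $\delta/5$ only yields $\frac1n\sum_i\epsilon_i^2\le 5\sigma^2/\delta$, i.e.\ a term $\sqrt{5/\delta}\,\sigma/\gamma$. Chebyshev (using $\lvert\epsilon_i\rvert\le 1$ and conditional independence) adds a deviation of order $\gamma^{-1}\sigma^{1/2}(n\delta)^{-1/4}$. Both grow polynomially in $1/\delta$, so for small $\delta$ they exceed $C(\sigma/\gamma+n^{-1/3}+\sqrt{\log(5/\delta)/n})$ for every $\delta$-free constant $C$.

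You need exponential concentration, and it has to be applied to the right variables. Bernstein on $\epsilon_i^2/\gamma^2$, whose range is $[0,\gamma^{-2}]$, still leaves a $\gamma^{-1}\sqrt{\log(1/\delta)/n}$ term. The paper instead uses $D_i\coloneqq(f^{-1}(\xi_i)-\eta_i)^2$. These lie in $[0,1]$ and are conditionally independent given $\bm{\eta}$. By the mean value theorem with $f'\ge\gamma$, they satisfy $\mathrm{Var}[D_i\mid\bm{\eta}]\le\mathbb{E}[D_i\mid\bm{\eta}]\le\sigma^2/\gamma^2$. Bernstein then gives $\frac1n\sum_i D_i\le(\sigma/\gamma+\sqrt{\log(2/\delta)/n})^2$ with probability $1-\delta/2$, with no $1/\gamma$ on the deviation term.

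The same variables also remove the step you flagged as delicate. Your rearrangement route does work: the comonotone vector $\bm{\eta}^\circ$ is the permutation of $\bm{\eta}$ closest to $f^{-1}(\bm{\xi})$ in $\ell_2$, so $\lVert\bm{\eta}-\bm{\eta}^\circ\rVert\le 2\lVert\bm{\eta}-f^{-1}(\bm{\xi})\rVert$. But it is unnecessary. \cref{lem:isotonic-regression-bound} is an oracle inequality that already allows a non-monotone mean. The vector $(f^{-1}(\xi_{(1)}),\dots,f^{-1}(\xi_{(n)}))$ is itself a feasible comparator in $\mathcal{M}_n(0,1)$, because $f$ is an increasing bijection of $[0,1]$. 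So the misspecification term is at most $\frac1n\sum_i D_i$ directly. This needs no sorting argument and no separate nonexpansiveness step; in your version that step would also require a high-probability isotonic bound for the non-binary responses $\bm{y}-\bm{\eta}+\bm{\eta}^\circ$.

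A smaller point concerns the AUC part of term (b). Applying \cref{lem:unknown-class-prior:auc} to $\widehat{\bm{\eta}}$ produces the factor $1/2+\lvert\widehat{\bestauc}(\widehat{\bm{\eta}},\theta)-\bestauc\rvert$, which contains term (c) and is not $O(1)$ a priori. It is harmless only once you restrict to $n\ge 18\log(5/\delta)/(\theta(1-\theta))$, where $\lvert\thetatilde-\theta\rvert/(\thetatilde(1-\thetatilde))=O(1)$, and dispose of small $n$ trivially. The paper sidesteps this by splitting $A(\thetatilde)B(\widehat{\bm{\eta}})-A(\theta)B(\bm{\eta})$ so that the prior-sensitivity term involves only the clean $B(\bm{\eta})$.
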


A simple example of this model is when
the corrupted soft labels are obtained by averaging multiple hard labels
drawn from a corrupted Bernoulli distribution.
That is, for each $i = 1, \dots, n$,
we independently draw $m$ hard labels
$y_i^{(1)}, \dots, y_i^{(m)}$
from a Bernoulli distribution whose mean is $f(\eta_i)$
rather than $\eta_i$.
We then obtain the noisy corrupted soft label
$
  \xi_i
  =
  \frac{1}{m}
  \sum_{j = 1}^{m} y_i^{(j)}
$
by averaging these hard labels.
In this case,
$\xi_i = f(\eta_i) + \epsilon_i$,
where $\epsilon_i$ has mean zero and standard deviation at most
$\sigma = 1 / (2 \sqrt{m})$.
Therefore, the preceding bounds give estimation errors of order
$m^{-1/2} + n^{-1/3}$.
The additional $m^{-1/2}$ term diminishes as
the number $m$ of hard labels increases.

\cref{thm:feebee}
expresses
the optimal BER value $\bestber_{\nu}$ for the noisy distribution 
as an increasing function of $\bestber$,
and thus induces bounds on $\bestber_{\nu}$ as follows.
Let $u_\ber$ be any upper bound on 
the optimal BER value $\bestber$ for the original distribution
(e.g., the lowest BER achieved by existing classifiers).
Then, for each noise level $\nu \in [0, 1)$,
$\bestber_{\nu}$ must fall between
$L_{\ber}(\nu) \coloneqq F_{\nu}(0)$
and
$U_{\ber}(\nu) \coloneqq F_{\nu}(u_\ber)$.
Similarly, 
the optimal AUC value $\bestauc_{\nu}$ for the noisy distribution
must satisfy
$L_{\auc}(\nu) \coloneqq 1 - F_{\nu}(1 - l_{\auc})
\leq
\bestauc_{\nu}
\leq
U_{\auc}(\nu) \coloneqq 1 - F_{\nu}(0)$,
where 
$l_{\auc}$ is any lower bound on 
the optimal AUC value $\bestauc$ for the original distribution.

\paragraph{Computing scores for BER and AUC}

Based on these bounds, 
we propose to evaluate estimators of the optimal BER
on real-world datasets as follows.
We first generate $N$ noise-injected datasets
with different noise levels 
$\nu_i \coloneqq \frac{i - 1}{N} \ (i = 1, \dots, N)$
and evaluate a given estimator on each of them
to obtain estimates $\widehat{\bestber_{\nu_i}}$.
Then, 
the estimator's score $s_{\beta}$ 
for noise mean $\beta$
is computed as
$
  s_{\beta} \coloneqq
  \frac{1}{N} \sum_{i=1}^{N}
  \left[
  (
  \widehat{\bestber_{\nu_i}} - U_{\ber}(\nu_i)
  )_+
  +
  (
  L_{\ber}(\nu_i) - \widehat{\bestber_{\nu_i}}
  )_+
  \right]
$,
where $(x)_+ \coloneqq \max\{x, 0\}$.
The idea is that
if the estimates fall outside their corresponding bounds
$[L_{\ber}(\nu_i), U_{\ber}(\nu_i)]$,
we penalize the estimator
by the amount of violation;
then the score is obtained by aggregating 
the penalties over all noise levels.
The lower the score is, the better the estimator is.
Note that the score $s_{\beta}$ depends on the choice of $\beta$.
Thus, how to choose $\beta$ is an interesting question;
possible choices include $\beta = \frac{1}{2}$ (the same as the original FeeBee) 
and $\beta = \theta$, in which case $F_{\nu}$ 
reduces simply to
$F_{\nu}(t) = (1 - \nu) t + \frac{\nu}{2}$.
Another option is averaging $s_{\beta}$ over many different $\beta$ values.
In the experiments in \cref{sec:experiments:max-min},
we will see that the last option
aligns best with variance estimates,
which suggests that it is a reasonable choice.\footnote{
  This might be analogous to AUC,
  which aggregates performance over different classification thresholds.
}

The score for AUC can be computed in the same way by replacing $L_{\ber}$ and $U_{\ber}$ with $L_{\auc}$ and $U_{\auc}$, respectively.
This provides a practical way to evaluate estimators
of the optimal BER and AUC
on real-world datasets without requiring knowledge of the true optima.
Note that the class prior $\theta$
has to be estimated from the data in order to compute the bounds.
In many scenarios where the dataset contains
hard labels $y_{i} \in \set{0, 1}$,
this can be done by simply using their sample mean.

\subsection{Bias sensitivity}
\label{sec:evaluation:bias}

Here we show that our score can reflect certain types of estimator bias,
unlike bootstrap variance.

\paragraph{Theoretical analysis}

The following proposition roughly states that
any bias larger than the interval width
$U_{\ber}(\nu) - L_{\ber}(\nu)$ produces a positive expected
penalty.
This is in contrast to the bootstrap variance,
which does not change if a constant is added to the estimator.
Here we discuss the BER case, but the same argument also applies to AUC
by replacing $u_{\ber}$ with $1 - l_{\auc}$.

\begin{proposition}
  \label{prop:evaluation:bias}
  Let
  $\mathrm{bias}_{\nu} =
    \E{\widehat{\bestber_{\nu}}} - \bestber_{\nu}$
  denote the bias of the estimator
  $\widehat{\bestber_{\nu}}$ at noise level $\nu$.
  Then
  \begin{equation}
    \E{
      \paren{\widehat{\bestber_{\nu}} - U_{\ber}(\nu)}_+
      +
      \paren{L_{\ber}(\nu) - \widehat{\bestber_{\nu}}}_+
    }
    \geq
    \left[
      \abs{\mathrm{bias}_{\nu}}
      -
      \frac{u_{\ber}}{K} (1 - \nu)
    \right]_+
    \text{,}
  \end{equation}
  where
  $
    K
    \coloneqq
    \min \set{
    1,
    \frac{\beta (1 - \beta)}{\theta (1 - \theta)}
    }
    \in (0, 1]
  $.
\end{proposition}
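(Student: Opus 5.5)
The plan is to combine a convexity (Jensen) argument with a Lipschitz bound on the map $F_{\nu}$ from \cref{thm:feebee}.

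\textbf{Step 1: the penalty is a distance.} Write $X \coloneqq \widehat{\bestber_{\nu}}$ and $g(x) \coloneqq (x - U_{\ber}(\nu))_+ + (L_{\ber}(\nu) - x)_+$. Since $L_{\ber}(\nu) \leq U_{\ber}(\nu)$, $g(x)$ is exactly the distance from $x$ to the interval $[L_{\ber}(\nu), U_{\ber}(\nu)]$, and hence $g$ is convex. Jensen's inequality then gives $\E{g(X)} \geq g(\E{X}) = g(\bestber_{\nu} + \mathrm{bias}_{\nu})$.

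\textbf{Step 2: reduce to the interval width.} By \cref{thm:feebee} and the monotonicity of $F_{\nu}$, the true value satisfies $\bestber_{\nu} \in [L_{\ber}(\nu), U_{\ber}(\nu)]$. For any point $p$ of an interval of width $w$, the distance from $p + b$ to the interval is at least $(|b| - w)_+$. Applying this with $p = \bestber_{\nu}$ and $b = \mathrm{bias}_{\nu}$ gives
\begin{equation}
  \E{g(X)} \geq \left[ \abs{\mathrm{bias}_{\nu}} - \paren{F_{\nu}(u_{\ber}) - F_{\nu}(0)} \right]_+ \text{.}
\end{equation}
It therefore remains to show $F_{\nu}(u_{\ber}) - F_{\nu}(0) \leq \frac{u_{\ber}}{K}(1 - \nu)$.

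\textbf{Step 3: bound the slope of $F_{\nu}$ (the main step).} I expect \cref{thm:feebee} to give $F_{\nu}$ as an affine map in $\bestber$. Its slope should be $\frac{(1-\nu)\theta(1-\theta)}{\theta_{\nu}(1-\theta_{\nu})}$, where $\theta_{\nu} \coloneqq (1-\nu)\theta + \nu\beta$ is the noisy class prior. This form is consistent with the reduction to $(1-\nu)t + \nu/2$ when $\beta = \theta$. If $F_{\nu}$ turns out not to be exactly affine, I would instead bound its derivative by the same quantity.

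Concavity of $t \mapsto t(1-t)$, applied to the convex combination $\theta_{\nu}$ of $\theta$ and $\beta$, gives
\begin{equation}
  \theta_{\nu}(1-\theta_{\nu}) \geq (1-\nu)\theta(1-\theta) + \nu\beta(1-\beta) \geq \min\set{\theta(1-\theta), \beta(1-\beta)} = K\theta(1-\theta) \text{.}
\end{equation}
Hence the slope is at most $(1-\nu)/K$, and so $F_{\nu}(u_{\ber}) - F_{\nu}(0) \leq \frac{u_{\ber}}{K}(1-\nu)$, which completes the proof.

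The main obstacle is recalling the precise form of $F_{\nu}$ and confirming that its slope is exactly this ratio of prior variances. Once that is in hand, the concavity bound above is short.
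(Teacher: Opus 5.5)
Your proposal is correct and follows essentially the same route as the paper. Both arguments apply Jensen to the convex distance-to-interval penalty, then use $\bestber_{\nu}\in[L_{\ber}(\nu),U_{\ber}(\nu)]$ so that $|\mathrm{bias}_{\nu}|$ exceeds that distance by at most the interval width, and finally bound the width by $F_{\nu}(u_{\ber})-F_{\nu}(0)=\frac{(1-\nu)\theta(1-\theta)u_{\ber}}{\lambda_{\nu}(\theta)(1-\lambda_{\nu}(\theta))}\le\frac{u_{\ber}}{K}(1-\nu)$. Your guessed slope of $F_{\nu}$ is exactly right, and your concavity argument for $\lambda_{\nu}(\theta)(1-\lambda_{\nu}(\theta))\ge\min\{\theta(1-\theta),\beta(1-\beta)\}$ justifies a step the paper's proof states without comment.
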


\begin{proof}
  Let $d(z_1, z_2) = |z_1 - z_2|$
  and $I_{\nu} = [L_{\ber}(\nu), U_{\ber}(\nu)]$.
  Then, the pointwise penalty
  $
    (
    \widehat{\bestber_{\nu}} - U_{\ber}(\nu)
    )_+
    +
    (
    L_{\ber}(\nu) - \widehat{\bestber_{\nu}}
    )_+
  $
  at noise level $\nu$ is precisely
  $
    d(\widehat{\bestber_{\nu}}, I_{\nu}) \coloneqq \inf_{a \in I_{\nu}} d(\widehat{\bestber_{\nu}}, a)
  $.
  By the convexity of $d(\cdot, I_{\nu})$ and Jensen's inequality,
  the expected pointwise penalty is bounded as
  \begin{equation}
    \label{eq:evaluation:bias:jensen}
    \E{d \paren{\widehat{\bestber_\nu}, I_{\nu}}}
    \geq
    d \paren{\E{\widehat{\bestber_\nu}}, I_{\nu}}
    \text{.}
  \end{equation}
  On the other hand,
  since $\bestber_{\nu} \in I_{\nu}$,
  the bias of the estimator $\widehat{\bestber_{\nu}}$ is bounded as follows:\footnote{
    This follows from the following fact:
    let $(\mathcal{Z},d)$ be a metric space
    and let $A \subset \mathcal{Z}$.
    For any $z \in \mathcal{Z}$ and $a \in A$, we have
    $d(z, a) \leq d(z, A) + \mathrm{diam}(A)$, where
    $d(z, A) = \inf_{a' \in A} d(z, a')$ and
    $\mathrm{diam}(A) = \sup_{a_1, a_2 \in A} d(a_1, a_2)$.
  }
  \begin{equation}
    \label{eq:evaluation:bias:bias-bound}
    \abs{\mathrm{bias}_{\nu}}
    =
    d\paren{\E{\widehat{\bestber_\nu}}, \bestber_{\nu}}
    \leq
    d\paren{\E{\widehat{\bestber_\nu}}, I_{\nu}}
    +
    U_{\ber}(\nu) - L_{\ber}(\nu)
    \text{.}
  \end{equation}
  The interval width $U_{\ber}(\nu) - L_{\ber}(\nu)$ is bounded as
  \begin{equation}
    \begin{aligned}[b]
      U_{\ber}(\nu) - L_{\ber}(\nu)
      &=
      F_{\nu}(u_{\ber}) - F_{\nu}(0)
      \\&=
      \frac{
        (1 - \nu)
        \theta (1 - \theta) u_{\ber}
      }{
        \lambda_{\nu}(\theta) (1 - \lambda_{\nu}(\theta))
      }
      \\&\leq
      \frac{
        (1 - \nu)
        \theta (1 - \theta) u_{\ber}
      }{
        \min \set{
          \theta (1 - \theta),
          \beta (1 - \beta)
        }
      }
      \\&=
      \label{eq:evaluation:bias:interval-width-bound}
      \frac{ u_{\ber} }{ K }
      (1 - \nu)
      \text{.}
    \end{aligned}
  \end{equation}
  Combining
  \eqref{eq:evaluation:bias:jensen},
  \eqref{eq:evaluation:bias:bias-bound}, and
  \eqref{eq:evaluation:bias:interval-width-bound},
  we conclude that
  \begin{equation}
    \E{d \paren{\widehat{\bestber_{\nu}}, I_{\nu}}}
    \geq
    \left[
      \abs{\mathrm{bias}_{\nu}}
      -
      \frac{u_{\ber}}{K} (1 - \nu)
    \right]_+
    \text{.}
  \end{equation}
\end{proof}

Thus, for each noise level $\nu$, 
a bias larger than $\frac{u_{\ber}}{K} (1 - \nu)$
produces a positive expected penalty.
Moreover, since $\frac{u_{\ber}}{K} (1 - \nu) \to 0$ as $\nu \to 1$,
any fixed nonzero additive bias that persists across noise levels
is expected to be detected at a sufficiently large noise level.
It is worth noting that this is not a universal guarantee:
a bias that shrinks sufficiently fast as $\nu \to 1$
may remain undetected.

\cref{prop:evaluation:bias} quantifies 
the \emph{pointwise} penalty for each noise level,
and we can use it to discuss the overall score aggregated over different noise levels.
As a simple example, consider an idealized version of $s_{\beta}$,
obtained by replacing the finite average over $N$ noise levels
with an integral (informally, $N \to \infty$):
\begin{equation}
  s_{\beta}
  =
  \int_0^1 d \paren{\widehat{\bestber_{\nu}}, I_{\nu}} \, d\nu
  =
  \int_{0}^{1}
  \left[
    \paren{\widehat{\bestber_{\nu}} - U_{\ber}(\nu)}_+
    +
    \paren{L_{\ber}(\nu) - \widehat{\bestber_{\nu}}}_+
  \right]
  \, d\nu
  \text{.}
\end{equation}
The result above implies that the expected score is bounded as
\begin{equation}
  \E{s_{\beta}}
  \geq
  \int_0^1
  \left[
    \abs{\mathrm{bias}_{\nu}}
    -
    \frac{u_{\ber}}{K} (1 - \nu)
  \right]_+
  \, d\nu
  \text{.}
\end{equation}
For example, 
if $\abs{\mathrm{bias}_{\nu}} \geq B \geq 0$ for all $\nu \in [0, 1)$,
the expected score is lower-bounded
by an increasing function of $B$ as
\begin{equation}
  \label{eq:evaluation:bias:score-lower-bound}
  \E{s_{\beta}}
  \geq
  \begin{cases}
    \frac{K}{2 u_{\ber}} B^2
    & \text{if } B \leq \frac{u_{\ber}}{K} \text{,}
    \\
    B - \frac{u_{\ber}}{2K}
    & \text{otherwise.}
  \end{cases}
\end{equation}

\paragraph{Empirical analysis}

We also present an experiment using synthetic data with
controlled bias.
We drew $n=10{,}000$ points from a one-dimensional
two-component Gaussian mixture with means $-0.5$ and $0.5$,
unit variances, and class prior $\theta=0.5$.
For each point, we computed the clean soft label $\eta_i$
using the closed form obtained from the Gaussian PDF.
By \cref{prop:gaussian}, the true optimal values are
$\bestber = \Phi(-1/2) \approx 0.308538$ and
$\bestauc = \Phi(1/\sqrt{2}) \approx 0.760250$.

We repeated the experiment 100 times.
In each trial, we computed the min formula estimators
at $N=100$ noise levels.
To change the bias without changing the variance,
we defined shifted estimators at each noise level $\nu$ as
$\widehat{\bestber_{\nu}}^{(b)}
\coloneqq \widehat{\bestber_{\nu}}^{(0)} - b$
and
$\widehat{\bestauc_{\nu}}^{(b)}
\coloneqq \widehat{\bestauc_{\nu}}^{(0)} + b$.
Because the same constant is added to or subtracted
from the estimate in every trial,
these changes do not change the variance of either estimator.
We used
$b \in \{0, 0.01, 0.02, 0.05, 0.10, 0.15\}$
and the same samples for every value of $b$.
The parameters $u_{\ber}$ and $l_{\auc}$ were both set to $0.5$.
We computed the scores $s_\beta$
for $\beta = 0.5$
and the average scores over 
$\beta \in \{0.1, \dots, 0.9\}$.

\cref{fig:evaluation:bias} reports averages over the 100 trials
(blue and orange solid curves)
along with the theoretical lower bounds in
\eqref{eq:evaluation:bias:score-lower-bound}
(blue and orange dotted curves).
All standard errors of the scores were below $2.8\times10^{-5}$.
The standard deviations (SDs) of the BER estimates and
AUC estimates were 0.001241 and 0.001370, respectively,
for every value of $b$
(as expected, since the variance was kept constant).
In contrast, the score averaged over $\beta$ increased monotonically with $b$,
from 0 to 0.029953 for BER and from 0 to 0.040368 for AUC.
These results show that, unlike the standard deviations,
our evaluation score successfully reflects
the increase in bias even when
the estimator variance is unchanged.

\begin{figure}[t]
  \centering
  \begin{subfigure}[t]{0.49\textwidth}
    \centering
    \includegraphics[width=\linewidth]{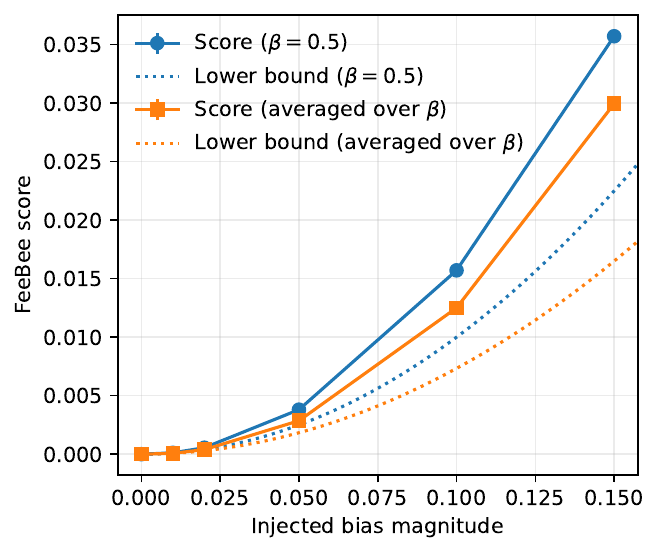}
    \caption{BER}
    \label{fig:evaluation:bias:ber}
  \end{subfigure}
  \hfill
  \begin{subfigure}[t]{0.49\textwidth}
    \centering
    \includegraphics[width=\linewidth]{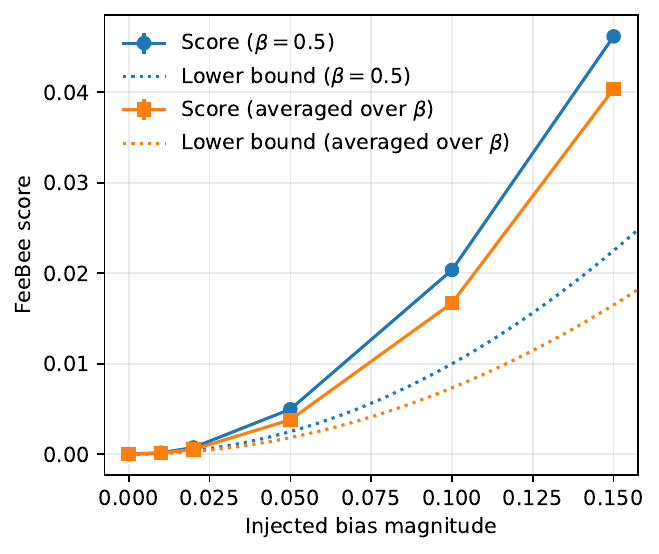}
    \caption{AUC}
    \label{fig:evaluation:bias:auc}
  \end{subfigure}
  \caption{
    Our evaluation scores versus injected bias magnitude
    for BER and AUC.
    Points show averages over 100 trials.
    Their 95\% Monte Carlo confidence intervals
    are smaller than the markers.
    The blue curves use $\beta=0.5$,
    whereas the orange curves average the scores
    over $\beta\in\{0.1,\dots,0.9\}$.
    The blue and orange dotted curves show the theoretical lower bounds in
    \eqref{eq:evaluation:bias:score-lower-bound} 
    for the corresponding $\beta$ settings,
    evaluated at $B = b$.
    Both scores increase monotonically with the injected bias
    (although the estimator standard deviations remain unchanged).
  }
  \label{fig:evaluation:bias}
\end{figure}

\section{Experiments}
\label{sec:experiments}

Here, we present experimental results
to support our theoretical results.
Further experimental details and additional results
can be found in \cref{sec:app:experiments}.

\subsection{Estimation on synthetic data}
\label{sec:app:experiments:estimation}

This appendix provides the full setup
for the synthetic-data experiment of \cref{sec:experiments:estimation}.

\paragraph{Data generation}
We follow the synthetic-data protocol of \citet{ushio2025practical}.
Inputs $x_{i} \in \mathbb{R}^{2}$ are drawn from a two-component Gaussian mixture
with means $(0, 0)$ and $(2, 2)$, identity covariances,
and mixture weights $(0.8, 0.2)$, 
so that the class prior is $\theta = 0.2$.
For each $x_{i}$, the clean soft label
$\eta_{i} = \P{y = 1 \mid x = x_{i}}$
is computed in closed form from the mixture density,
and a binary label $y_{i}$ 
is sampled from $\mathrm{Bernoulli}(\eta_{i})$.
We use $n = 10{,}000$ samples.
In this model, the optimal values can be computed analytically
using \cref{prop:gaussian}.
The Mahalanobis distance $J$ is
$\sqrt{(2,2)^{\top} I_{2}^{-1} (2,2)} = 2 \sqrt{2}$,
where $I_{2}$ is the $2 \times 2$ identity matrix.
Thus,
$\bestber = \Phi(-\sqrt{2}) \approx 0.07865$
and
$\bestauc = \Phi(2) \approx 0.97725$.
The dashed lines in \cref{fig:estimation} mark these values as references.

The corruption map $f \colon [0, 1] \to [0, 1]$ 
used here is the inverse beta-calibration
\citep{kull2017beta} map 
also used in the experiments by \citet{ushio2025practical},
\begin{equation}
  \label{eq:inverse-beta}
  f(\eta; a, b)
  = \frac{1}{1 + \left( \tfrac{1 - \eta}{\eta} \right)^{1/a} \left( \tfrac{1 - b}{b} \right)},
\end{equation}
parameterised by $(a, b)$. 
The three regimes used in \cref{fig:estimation}
correspond to $(a, b) = (1, 0.5), (1.5, 0.5)$,
and $(0.5, 0.5)$, respectively.
Setting $a > 1$
produces under-confident soft labels
(pulled toward $0.5$), 
and $a < 1$ produces over-confident soft labels
(pushed toward $0$ or $1$).
The choice $b = 0.5$ keeps the corruption symmetric about $0.5$.

\paragraph{Compared estimators}

For each metric, we compare the following estimates.
For the clean setting,
we simply
apply the estimator
$\bestberhat$ or $\widetilde{\bestauc}$ 
directly to the soft labels $\eta_{i}$
as discussed in \cref{sec:clean} (\textbf{Clean}).
For the corrupted settings (under- or over-confidence), 
we evaluate two estimators:
\begin{itemize}
  \item \textbf{Corrupted, calibrated} (\cref{sec:corrupted}):
    fit isotonic regression
    \citep{ayer1955empirical}
    of $y_{i}$ on $\xi_{i}$ to obtain
    $\widehat{\eta}_{i}$,
    then apply $\bestberhat$ or $\bestauchat$ 
    to $\widehat{\eta}_{i}$.
  \item \textbf{Corrupted, raw} (naive baseline):
    apply $\bestberhat$ or $\bestauchat$
    directly to the corrupted soft labels $\xi_{i}$
    without recalibration.
\end{itemize}
The class prior $\theta$
is treated as unknown and
estimated from the given data,
namely,
by the sample mean of 
the soft labels $\eta_{i}$ in the clean setting
and that of the hard labels $y_{i}$ in the corrupted settings.
We use the rule of \cref{sec:min-vs-max}
to pick between the min and max formulae
on each resample.

\paragraph{Bootstrap confidence intervals}
We compute 95\% confidence intervals
using the bias-corrected accelerated (BCa) bootstrap
with $1{,}000$ resamples,
as implemented by \texttt{scipy.stats.bootstrap}
in SciPy 1.15.3 \citep{2020SciPy-NMeth}.

\subsection{Real-data experiment: Discriminants and evaluation scores}
\label{sec:app:experiments:max-min}

Here we describe the detailed setup
for the experiment on real-world datasets
presented in \cref{sec:experiments:max-min}.

\subsubsection{Datasets}
\label{sec:datasets}

In our experiments, we used the following datasets
from various domains, 
including computer vision, 
natural language processing and academic peer-review.

\paragraph{CIFAR-10}
We used 
the test set of CIFAR-10 \citep{cifar10} with soft labels
from CIFAR-10H \citep{cifar10h}.\footnote{
  We were unable to find an explicit license stated
  for CIFAR-10, 
  but it is widely used in the research community and 
  is generally considered to be in the public domain.
  CIFAR-10H is licensed under BY-NC-SA 4.0.
}
Following \citet{ishida2023is,ushio2025practical},
the original 10 classes
were reorganized into two classes
(animal vs. non-animal)
by treating 
\emph{bird}, \emph{cat}, \emph{deer}, \emph{dog}, \emph{frog} and \emph{horse}
as positive 
and the rest as negative.

\paragraph{Fashion-MNIST}
We also used 
the Fashion-MNIST dataset \citep{xiao2017online} 
and its soft-labeled counterpart, Fashion-MNIST-H \citep{ishida2023is}.\footnote{
  Fashion-MNIST is MIT-licensed.
  Fashion-MNIST-H is distributed as a part of their
  GPL-3.0 repository \url{https://github.com/ishida-lab/irreducible},
  but no separate data-specific license is stated.
}
As in CIFAR-10,
we followed \citet{ishida2023is,ushio2025practical}
and relabeled the original 10 classes into two classes:
positive 
(\emph{T-shirt/top}, \emph{pullover}, \emph{dress}, \emph{coat} and \emph{shirt})
and negative (the rest).

\paragraph{ChaosNLI}
We used a natural language processing dataset
called
ChaosNLI \citep{ynie2020chaosnli, xzhou2022distnli}.\footnote{
  Available at \url{https://github.com/easonnie/ChaosNLI}
  and distributed under the CC BY-NC 4.0 license.
}
It contains 100 hard labels per data point,
which can be used to construct soft labels
by taking their average.
It consists of three sub-datasets:
SNLI ($n = 1,514$), MNLI ($n = 1,599$) and AbductiveNLI ($n = 1,532$).

\paragraph{ICLR 2017-2026 peer-review datasets}
\citet{ushio2025practical} 
collected 
$n = 32,829$ instances 
of reviewer scores and confidences 
from the 2017-2025 ICLR conferences
using the OpenReview API.
Then they used them to
construct a dataset consisting of
pairs of a soft and hard label.
Specifically, for each submitted paper $x_i$,
they first calculated the averaged score 
weighted by the confidences,
and then obtained a corrupted soft label $\tilde{\eta}_i$ 
by normalizing the averaged score $s_i$ into $[0, 1]$.
The hard label is simply the final decision
for the paper, i.e., accept ($y_i = 1$) or reject ($y_i = 0$).
In this paper, we 
followed the same procedure to
collect the 2017-2026 ICLR peer-review data
(note that the 2026 data was not available at the time of \citet{ushio2025practical}).

\subsubsection{Experiment setup}
\label{sec:app:experiments:max-min:setup}

For each metric (BER, AUC) and each dataset of \cref{sec:datasets},
we evaluate three quantities:
the min formula
($\widehat{\bestber_{1}}$ or $\widehat{\bestauc_{1}}$),
the max formula
($\widehat{\bestber_{2}}$ or $\widehat{\bestauc_{2}}$),
and the empirical discriminant
($\widehat{\Delta}_{\ber}$ or $\widehat{\Delta}_{\auc}$).

\paragraph{Bootstrap point estimate, CI, and SE.}
For each of the three estimators we report
the point estimate,
the bootstrap standard error,
and a 95\% bias-corrected accelerated (BCa)
bootstrap confidence interval,
all computed with $1{,}000$ resamples
using \texttt{scipy.stats.bootstrap}
in SciPy 1.15.3 \citep{2020SciPy-NMeth}.

\paragraph{Evaluation score $s_{\beta}$.}
For the min and max formula estimators,
we additionally compute the score $s_{\beta}$ of \cref{sec:evaluation}.
$s_{\beta}$ is computed with $N = 100$ different noise levels
$\nu_{i} = (i - 1) / N \in [0, 1)$ for $i = 1, \dots, N$.
The upper bound $u_{\ber}$ on $\bestber$
(resp.\ the lower bound $l_{\auc}$ on $\bestauc$)
is set to its theoretical limit $0.5$ on every dataset.
We compare three choices for the noise mean $\beta$:
(i) $\beta = 0.5$, recovering the original FeeBee setting
of \citet{renggli2021evaluating};
(ii) $\beta = \theta$, the estimated class prior; and
(iii) the average of $s_{\beta}$ over the grid
$\beta \in \{0.1, \ldots, 0.9\}$.

\section{Conclusion}
\label{sec:conclusion}

We studied the estimation of the Bayes-optimal BER and AUC,
motivated by their practical importance.
We proposed soft-label-based estimators
which can be computed efficiently,
and analyzed their theoretical properties.
Our estimators handle not only clean soft labels but also corrupted soft labels,
which makes them more practical.
We also proposed a practical evaluation procedure
for assessing the quality of estimators of the optimal BER and AUC
without knowledge of the true optima.

Here, we mention some limitations of our work:
our corruption model (\cref{sec:corrupted}) 
is quite simple and may not capture the full complexity of
all possible data generation processes in practice.
For example,
it does not cover arbitrary instance-dependent corruption,
which would require additional structure
to recover the individual clean posterior values.
Future work includes extending our results to 
other metrics popular under class imbalance or label noise,
such as the F-measure
\citep{gu2009evaluation,parambath2014optimizing,natarajan2015optimal,narasimhan2015optimizing}.
Most of such metrics are non-decomposable,
i.e., their dataset-level values cannot be written as averages over individual data points,
making their Bayes-optimal values harder to analyze and estimate.
Another natural direction is multi-class BER:
especially,
the corrupted case requires an appropriate multi-class corruption model,
which is not trivial to design.

\bibliography{ref}
\bibliographystyle{icml2026}

\newpage
\appendix
\onecolumn

\crefalias{section}{appsec}
\crefalias{subsection}{appsec}
\crefalias{subsubsection}{appsec}

\section{Supplementary for \cref{sec:clean}}
\label{sec:proof-clean}

\subsection{Deriving the Optimal BER and AUC}
\label{sec:proof-clean:bestber-bestauc}

Here, we prove 
\cref{lem:bestber,lem:bestauc}
that gives the expressions for the optimal
BER and AUC.

\getkeytheorem{lem:bestber}

\begin{proof}
  For any classifier $h: \mathcal{X} \to \set{0, 1}$, 
  its false negative rate and false positive rate can be expressed as 
  $
  \E{\frac{\eta(x)}{\theta} \I{h(x) = 0}}
  $ and $
  \E{\frac{1 - \eta(x)}{1 - \theta} \I{h(x) = 1}}
  $, respectively, by Bayes' theorem. 
  Therefore, we have 
  \begin{equation}
  \ber(h)
  = 
  \frac{1}{2} 
  \E{\frac{\eta(x)}{\theta} \I{h(x) = 0}
  + \frac{1 - \eta(x)}{1 - \theta} \I{h(x) = 1}}
  \text{,}
  \end{equation}
  which is minimized by taking $h(x) = \I{\eta(x) \geq \theta}$.
  This gives the optimal BER as
  \begin{equation}
      \bestber
      = 
      \frac{1}{2} 
      \E{\min \set{\frac{\eta(x)}{\theta},  \frac{1 - \eta(x)}{1 - \theta}}}
      =
      \E{ \phimin_{\ber}(\eta(x)) }
    \text{.}
  \end{equation}
  Next, since $\E{\eta(x)} = \theta$, we have
  \begin{equation}
    \E{
      \min \set{  \frac{\eta(x)}{\theta},  \frac{1 - \eta(x)}{1 - \theta}}
      +
      \max \set{  \frac{\eta(x)}{\theta},  \frac{1 - \eta(x)}{1 - \theta}}
    }
    = \E{
      \frac{\eta(x)}{\theta} + \frac{1 - \eta(x)}{1 - \theta}
    } 
    = 2
    \text{.}
  \end{equation}
  This implies $\bestber = \E{ \phimax_{\ber}(\eta(x)) }$.
\end{proof}

\getkeytheorem{lem:bestauc}

\begin{proof}
  Fix any scoring function $f: \mathcal{X} \to \R$.
  One can replace the expectation with respect to 
  the class-conditional distributions with the expectation 
  with respect to the marginal distribution by Bayes' theorem:
  \begin{align}
  \auc(f)
  & = 
  \E[\substack{x_+ \sim \mathbb{P}_1 \\ x_- \sim \mathbb{P}_0}]{
  \I{f(x_+) > f(x_-)} + \frac{1}{2} \I{f(x_+) = f(x_-)}
  }
  \nonumber
  \\ & =
  \frac{1}{\theta (1 - \theta)}
  \E[x, x' \sim \mathbb{P}_\mathcal{X}]{
  \eta(x) (1 - \eta(x'))
  \paren{\I{f(x) > f(x')} + \frac{1}{2} \I{f(x) = f(x')}}
  }
  \text{.}
  \label{eq:auc-1}
  \end{align}
  Noting $x$ and $x'$ are independent and identically distributed, we may exchange them to get
  \begin{equation}
  \label{eq:auc-2}
  \auc(f)
  =
  \frac{1}{\theta (1 - \theta)}
  \E{
  \eta(x') (1 - \eta(x))
  \paren{\I{f(x) < f(x')} + \frac{1}{2} \I{f(x) = f(x')}}
  }
  \text{.}
  \end{equation}
  Adding \eqref{eq:auc-1} and \eqref{eq:auc-2}, we obtain
  \begin{align*}
  \auc(f)
  &=
  \frac{1}{2 \theta (1 - \theta)}
  \expectation \bigg[
  \eta(x) (1 - \eta(x')) \I{f(x) > f(x')} 
  \\ & \quad\quad\quad\quad\quad\quad\quad\quad +
  \frac{\eta(x) (1 - \eta(x')) + \eta(x') (1 - \eta(x))}{2} \I{f(x) = f(x')}
  \\ & \quad\quad\quad\quad\quad\quad\quad\quad\quad\quad\quad\quad\quad\quad\quad\quad +
  \eta(x') (1 - \eta(x))
  \I{f(x) < f(x')}
  \bigg]
  \text{.}
  \end{align*}
  Since $\eta(x) (1 - \eta(x')) < \eta(x') (1 - \eta(x)) \iff \eta(x) < \eta(x')$
  (and similarly for the other direction), 
  the expression inside the expectation is maximized by taking $f = \eta$
  (or any strictly increasing transformation of $\eta$).
  and thus the maximum AUC is given by
  \begin{equation}
  \bestauc
  = \frac{1}{2 \theta (1 - \theta)}
  \E{\max\set{\eta(x) (1 - \eta(x')), \eta(x') (1 - \eta(x))}}
  = \E{\phimax_{\auc}(\eta(x), \eta(x'))}
  \text{.}
  \end{equation}
  To prove $\bestauc = \E{\phimin_{\auc}(\eta(x), \eta(x'))}$, observe that
  \begin{equation}
    \label{eq:auc-min-max-sum}
    \begin{aligned}
    & \E{
      \min \set{ \eta(x) (1 - \eta(x')), \eta(x') (1 - \eta(x)) }
      + \max \set{ \eta(x) (1 - \eta(x')), \eta(x') (1 - \eta(x)) }
    }
    \\ ={} &
    \E{ \eta(x) (1 - \eta(x')) + \eta(x') (1 - \eta(x)) }
    \\ ={} &
    2 \theta (1 - \theta)
    \text{.}
    \end{aligned}
  \end{equation}
  Here, we used $\E{\eta(x)} = \theta$ and the independence of $x$ and $x'$.
\end{proof}

\section{Supplementary for \cref{sec:corrupted}}
\label{sec:corrupted:appendix}

\begin{lemma}[store=lem:corrupted:class-prior:bernstein]
  \label{lem:corrupted:class-prior:bernstein}
  Under \cref{ass:unknown-class-prior:epsilon},
  for any 
  $\delta \in (0, 1)$
  and
  $
    n \geq \frac
      {\paren{2 \log (2 / \delta) + 3c}^2}
      {18 \theta (1 - \theta) \log (2 / \delta)}
  $,
  with probability at least $1 - \delta$,
  we have
  $
    \lvert 
      \thetatilde - \theta 
    \rvert
    \leq
    \sqrt{\frac{8 \theta (1 - \theta) \log (2 / \delta)}{n}}
  $.
\end{lemma}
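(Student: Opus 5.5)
The plan is to reduce this to exactly the argument behind \cref{lem:unknown-class-prior:bernstein}. The only change is that the average of clean soft labels $\bar{\eta}$ is replaced by the average of hard labels $\bar{y} \coloneqq \frac{1}{n}\sum_{i} y_{i}$. As shown in the paragraph on class-prior estimation, the PAVA solution satisfies $\frac{1}{n}\sum_i \widehat{\eta}_i = \bar{y}$, so $\thetatilde = \clip_{\tau_n}^{1-\tau_n}(\bar{y})$. The estimator therefore never involves the isotonic fit or the corruption $f$, and we only need concentration of $\bar{y}$ around $\theta$.

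First, I note that $y_1,\dots,y_n$ are i.i.d.\ Bernoulli$(\theta)$, since $\P{y=1}=\theta$. Hence $\Var{y}=\theta(1-\theta)$ and $|y-\theta|\le\max\{\theta,1-\theta\}\le 1$. Bernstein's inequality (the $d=1$ case of \eqref{eq:bernstein-u-statistic:tail}) gives, with probability at least $1-\delta$,
\begin{equation}
  |\bar{y}-\theta| \le \sqrt{\frac{2\theta(1-\theta)\log(2/\delta)}{n}} + \frac{2\log(2/\delta)}{3n}.
\end{equation}

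Next, I split $|\thetatilde-\theta| \le |\clip_{\tau_n}^{1-\tau_n}(\bar{y})-\bar{y}| + |\bar{y}-\theta|$. Because $\bar y\in[0,1]$, clipping moves it by at most $\tau_n = c/n$ under \cref{ass:unknown-class-prior:epsilon}. This yields
\begin{equation}
  |\thetatilde-\theta| \le \sqrt{\frac{2\theta(1-\theta)\log(2/\delta)}{n}} + \frac{2\log(2/\delta)+3c}{3n}.
\end{equation}

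Finally, under the stated lower bound on $n$, the second term is at most the first. The check is that $\frac{2\log(2/\delta)+3c}{3n}\le\sqrt{\frac{2\theta(1-\theta)\log(2/\delta)}{n}}$ is equivalent, after squaring and rearranging, to $n\ge \frac{(2\log(2/\delta)+3c)^2}{18\theta(1-\theta)\log(2/\delta)}$. With that, the total is at most $2\sqrt{2\theta(1-\theta)\log(2/\delta)/n}=\sqrt{8\theta(1-\theta)\log(2/\delta)/n}$.

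The only step needing care is the identification $\thetatilde=\clip(\bar y)$ and the fact that the $y_i$ are marginally i.i.d.\ Bernoulli$(\theta)$ regardless of the corruption. Both are immediate from the setup, so I expect no real obstacle.
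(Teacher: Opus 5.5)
Your proposal is correct and follows essentially the same route as the paper, whose proof simply defers to that of \cref{lem:unknown-class-prior:bernstein}. That route is Bernstein's inequality for the sample mean, plus an additive $c/n$ clipping error, with the lower-order term absorbed via the stated condition on $n$. In your case the sample mean is that of the hard labels $y_i$, which are i.i.d.\ Bernoulli$(\theta)$, so the variance is exactly $\theta(1-\theta)$. Your explicit use of $\frac{1}{n}\sum_i \widehat{\eta}_i = \frac{1}{n}\sum_i y_i$, to see that neither the isotonic fit nor the corruption enters, is precisely the point the paper leaves implicit.
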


\begin{proof}
  The proof is almost identical to that of 
  \cref{lem:unknown-class-prior:bernstein}.
\end{proof}

\begin{lemma}
  \label{lem:corrupted:lipschitz}
  For any
  $
    \bm{z} = (z_{1}, \dots, z_{n})
    \in [0, 1]^{n}
  $
  and
  $
    \bm{w} = (w_{1}, \dots, w_{n}) 
    \in [0, 1]^{n}
  $,
  we have
  \begin{align}
    \label{eq:corrupted:lipschitz:ber}
    \abs{
      \bestberhat(\bm{w}) 
      - 
      \bestberhat(\bm{z})
    }
    &\leq
    \frac{1}{2 \min \set{\theta, 1 - \theta}} 
    \sqrt{
      \frac{1}{n} \sum_{i=1}^n 
      (w_i - z_i)^{2}
    }
    \text{,}
    \\
    \label{eq:corrupted:lipschitz:auc}
    \abs{
      \bestauchat(\bm{w}) 
      - 
      \bestauchat(\bm{z})
    }
    &\leq
    \frac{1}{\theta (1 - \theta)}
    \sqrt{
      \frac{1}{n} \sum_{i=1}^n
      (w_i - z_i)^{2}
    }
    \text{.}
  \end{align}
\end{lemma}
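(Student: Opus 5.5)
The plan is to show that each estimator is a Lipschitz function of its input vector: an average (or pairwise average) of Lipschitz kernels, with the empirical $\ell^{1}$-type average finally bounded by the empirical $\ell^{2}$ average via Cauchy--Schwarz. Throughout, $\theta$ is held fixed, and $\bestberhat(\bm z)$, $\bestauchat(\bm z)$ denote the min-formula estimators evaluated at the soft-label vector $\bm z$.

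\emph{BER bound.} We have $\bestberhat(\bm z)=\frac1n\sum_i \phimin_\ber(z_i)$ with $\phimin_\ber(z)=\frac12\min\{z/\theta,(1-z)/(1-\theta)\}$. A minimum of two affine functions with slopes $1/\theta$ and $-1/(1-\theta)$ is Lipschitz with constant $\max\{1/\theta,1/(1-\theta)\}=1/\min\{\theta,1-\theta\}$. Hence $\phimin_\ber$ is $\frac{1}{2\min\{\theta,1-\theta\}}$-Lipschitz, and
\[
\abs{\bestberhat(\bm w)-\bestberhat(\bm z)}\le \frac{1}{2\min\{\theta,1-\theta\}}\cdot\frac1n\sum_i|w_i-z_i|.
\]
Jensen's inequality (or Cauchy--Schwarz), $\frac1n\sum_i|w_i-z_i|\le\sqrt{\frac1n\sum_i(w_i-z_i)^2}$, then gives \eqref{eq:corrupted:lipschitz:ber}. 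The max formula behaves identically if it is the one intended.

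\emph{AUC bound.} Write $\bestauchat(\bm z)=1-\frac{1}{2\theta(1-\theta)}\binom n2^{-1}\sum_{i<j} m(z_i,z_j)$ with $m(a,b)=\min\{a(1-b),b(1-a)\}=\min\{a,b\}(1-\max\{a,b\})$. Its partial derivatives (where they exist) are bounded by $1$ in absolute value on $[0,1]^2$: for $a\le b$ one has $\partial_a m=1-b$ and $\partial_b m=-a$. Moreover $m$ is continuous and piecewise smooth, so $|m(a,b)-m(a',b')|\le|a-a'|+|b-b'|$. Summing over pairs,
\[
\binom n2^{-1}\sum_{i<j}\bigl(|w_i-z_i|+|w_j-z_j|\bigr)=\frac{2}{n}\sum_i|w_i-z_i|,
\]
since each index appears in $n-1$ pairs. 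This yields
\[
\abs{\bestauchat(\bm w)-\bestauchat(\bm z)}\le\frac{1}{2\theta(1-\theta)}\cdot\frac2n\sum_i|w_i-z_i|,
\]
and the same Cauchy--Schwarz step gives \eqref{eq:corrupted:lipschitz:auc}.

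If the lemma is meant for the clipped $\widetilde{\bestauc}$, the bound carries over because clipping is $1$-Lipschitz.

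The main point needing care is the Lipschitz constant of $m$ across the diagonal $a=b$, where the active branch switches. I would handle it by noting that $m$ is a minimum of the two smooth functions $a(1-b)$ and $b(1-a)$, each of which is $1$-Lipschitz in the $\ell^1$ norm on $[0,1]^2$ (gradient entries bounded by $1$ in absolute value). A minimum of $L$-Lipschitz functions is $L$-Lipschitz, which settles the issue without any case analysis.
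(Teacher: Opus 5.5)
Your proposal is correct and follows essentially the same route as the paper. Both arguments use the $\frac{1}{\min\{\theta,1-\theta\}}$-Lipschitz BER kernel, treat the AUC kernel as a minimum of the two $1$-Lipschitz (in the $\ell^1$ norm) functions $a(1-b)$ and $b(1-a)$, apply the pair-counting identity $\sum_{i<j}(\Delta_i+\Delta_j)=(n-1)\sum_i\Delta_i$, and finish with Jensen to pass from the empirical $\ell^1$ average to the root-mean-square.
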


\begin{proof}
  To prove \eqref{eq:corrupted:lipschitz:ber},
  observe that the function
  $
    z \in [0, 1] 
    \mapsto 
    \min \set{
      \frac{z}{\theta}, 
      \frac{1-z}{1-\theta}
    }
  $
  is 
  $(\min \set{\theta, 1 - \theta})^{-1}$-Lipschitz.
  Hence, we have
  \begin{equation}
    \begin{aligned}
      \abs{
        \widehat{\bestber}(\bm{w}) 
        - 
        \widehat{\bestber}(\bm{z})
      }
      &\leq
      \frac{1}{2 n} \sum_{i=1}^{n} 
      \abs{
        \min \set{
          \frac{w_i}{\theta}, 
          \frac{1 - w_i}{1 - \theta}
        } 
        - 
        \min \set{
          \frac{z_i}{\theta}, 
          \frac{1 - z_i}{1 - \theta}
        }
      }
      \\ &\leq
      \frac{1}{2 \min \set{\theta, 1 - \theta}} 
      \frac{1}{n} \sum_{i=1}^{n} 
      \abs{w_i - z_i}
      \\ &\leq
      \frac{1}{2 \min \set{\theta, 1 - \theta}} 
      \sqrt{
        \frac{1}{n} \sum_{i=1}^n 
        (w_i - z_i)^{2}
      }
      \text{,}
    \end{aligned}
  \end{equation}
  where the last inequality follows 
  from Jensen's inequality.

  Next, we prove \eqref{eq:corrupted:lipschitz:auc}.
  Observe that 
  $
    (z_1, z_2) \in [0, 1]^2 
    \mapsto 
    z_1 (1 - z_2)
  $
  is $1$-Lipschitz with respect to $1$-norm.
  From this, we see that 
  $
    (z_1, z_2) \in [0, 1]^2 
    \mapsto 
    \min \set{z_1 (1 - z_2), z_2 (1 - z_1)}
  $ 
  is also $1$-Lipschitz. 
  Therefore,
  \begin{equation}
    \begin{aligned}
      & 
      \abs{
        \bestauchat(\bm{w}) 
        - 
        \bestauchat(\bm{z})
      }
      \\ \leq{} &
      \frac{1}{\theta(1 - \theta) n (n - 1)} 
      \sum_{i < j}
      \abs{
        \min \set{
          w_i (1 - w_j), 
          w_j (1 - w_i)
        }
        -
        \min \set{
          z_i (1 - z_j), 
          z_j (1 - z_i)
        }
      }
      \\ \leq{} & 
      \frac{1}{\theta(1-\theta)n(n - 1)} 
      \sum_{i < j} 
      \paren{
        \abs{w_i - z_i} 
        + 
        \abs{w_j - z_j}
      }
      \text{.}
    \end{aligned}
  \end{equation}
  For ease of notation,
  let 
  $\Delta_i = \abs{w_i - z_i}$ 
  for each $i = 1, \dots, n$.
  Then, we have
  \begin{equation}
    \begin{aligned}
      \sum_{i<j}
      \paren{
        \abs{w_i - z_i} 
        + 
        \abs{w_j - z_j}
      }
      &= 
      \frac{1}{2} 
      \paren{
        \sum_{i=1}^n \sum_{j=1}^n (\Delta_i + \Delta_j)
        - \sum_{i=1}^n (2 \Delta_i)
      }
      \\ &= 
      \frac{1}{2} \cdot 2 n \sum_{i=1}^n \Delta_i 
      - \sum_{i=1}^n \Delta_i
      \\ &=
      (n - 1) \sum_{i=1}^n \Delta_i
      \text{.}
    \end{aligned}
  \end{equation}
  Therefore, we get
  \begin{equation}
  \abs{
    \bestauchat(\bm{w}) 
    - 
    \bestauchat(\bm{z})
  }
  \leq 
  \frac{1}{\theta (1 - \theta) n} 
  \sum_{i=1}^n 
  \Delta_{i}
  \leq
  \frac{1}{\theta (1 - \theta)} 
  \sqrt{
    \frac{1}{n}
    \sum_{i=1}^n 
    \Delta_{i}^{2}
  }
  \text{.}
  \end{equation}
  The second inequality is again due to Jensen's inequality.
  Now the proof is complete. 
\end{proof}

\begin{lemma}[\citet{ushio2025practical}]
  \label{lem:isotonic-regression-bound}
  Let
  $
    \mathcal{M}_n 
    = 
    \set{
      \bm{u} \in \R^n 
      \mid 
      u_1 \leq \dots \leq u_n
    }
  $
  be
  the set of non-decreasing sequences of length $n$,
  and for constants $a < b$,
  let
  $
    \mathcal{M}_n(a, b)
    = 
    \set{
      \bm{u} \in \R^n 
      \mid 
      a \leq u_1 \leq \dots \leq u_n \leq b
    }
  $.

  Let $\bm{\mu} \in [0, 1]^n$
  be an unknown mean vector,
  which might not belong to $\mathcal{M}_{n}(0, 1)$.
  Given a random vector
  $\bm{y} = (y_1, \dots, y_n) \in \set{0, 1}^{n}$
  of $n$ independent binary observations
  with mean $\E{\bm{y}} = \bm{\mu}$,
  let $\widehat{\bm{\mu}}$ be
  the isotonic regression estimator
  of $\bm{\mu}$, i.e.,
  \begin{equation}
    \widehat{\bm{\mu}}
    \coloneqq
    \argmin_{\bm{u} \in \mathcal{M}_n}
    \enorm{\bm{y} - \bm{u}}^2
    \text{.}
  \end{equation}
  Then, for any
  $\delta \in (0, 1)$,
  with probability at least $1 - \delta$, 
  we have
  \begin{equation}
    \frac{1}{n} 
    \enorm{\widehat{\bm{\mu}} - \bm{\mu}}^2 
    \leq 
    \min_{\bm{u} \in \mathcal{M}_n(0, 1)} 
    \frac{1}{n} 
    \enorm{\bm{u} - \bm{\mu}}^2
    + 
    \paren{
      \frac{C}{n^{1 / 3}}
      + 
      \sqrt{\frac{2 \log (1 / \delta)}{n}}
    }^2
    \text{,}
  \end{equation}
  where $C$ is an absolute constant.
\end{lemma}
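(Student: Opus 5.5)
The plan is the standard oracle-inequality argument for least-squares projection onto a closed convex set, followed by a localized Gaussian-width–type bound for monotone sequences. Let $\bm{\varepsilon} \coloneqq \bm{y} - \bm{\mu}$. Its coordinates are independent, have mean zero, and take values in $[-1, 1]$. Fix any $\bm{u} \in \mathcal{M}_n(0, 1)$, to be taken as the minimizer at the end.

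\textbf{Step 1 (deterministic inequality).} Since $\widehat{\bm{\mu}}$ is the projection of $\bm{y}$ onto the closed convex cone $\mathcal{M}_n$ and $\bm{u} \in \mathcal{M}_n$, the variational inequality $\langle \bm{y} - \widehat{\bm{\mu}}, \bm{u} - \widehat{\bm{\mu}} \rangle \leq 0$ holds. Expanding with $\bm{y} = \bm{\mu} + \bm{\varepsilon}$ gives
\begin{equation}
  \enorm{\widehat{\bm{\mu}} - \bm{\mu}}^2
  \leq
  \enorm{\bm{u} - \bm{\mu}}^2
  + 2 \langle \bm{\varepsilon}, \widehat{\bm{\mu}} - \bm{u} \rangle
  - \enorm{\widehat{\bm{\mu}} - \bm{u}}^2 .
\end{equation}
With $Z_t \coloneqq \sup \set{ \langle \bm{\varepsilon}, \bm{v} - \bm{u} \rangle : \bm{v} \in \mathcal{M}_n,\ \enorm{\bm{v} - \bm{u}} \leq t }$, the last two terms are at most $\sup_{t \geq 0} (2 Z_t - t^2)$. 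Note that $\bm{\mu}$ appears only in the approximation term, so misspecification ($\bm{\mu} \notin \mathcal{M}_n(0, 1)$) costs nothing beyond $\min_{\bm{u}} \frac{1}{n} \enorm{\bm{u} - \bm{\mu}}^2$.

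\textbf{Step 2 (reduction to a single radius).} The set $\mathcal{M}_n - \bm{u}$ is convex and contains $0$, hence star-shaped about $0$. Therefore $t \mapsto Z_t / t$ is nonincreasing. For any fixed $t_0 > 0$ with $Z_{t_0} \leq t_0^2 / 2$, we get $2 Z_t - t^2 \leq t\, t_0 - t^2 \leq t_0^2 / 4$ for $t \geq t_0$, and $2 Z_t - t^2 \leq 2 Z_{t_0} \leq t_0^2$ for $t < t_0$. So it suffices to find $t_0$ with $Z_{t_0} \leq t_0^2 / 2$ with probability at least $1 - \delta$.

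\textbf{Step 3 (expectation and concentration at the fixed radius).} Here I expect the main work.

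\emph{Expectation.} I would bound $\E{Z_t}$ by the known local entropy bound for monotone sequences: the class $\set{\bm{v} - \bm{u}}$ has metric entropy of order $\epsilon^{-1}$ times a total-variation factor. The variation of $\bm{u}$ is at most $1$ because $\bm{u} \in [0,1]^n$; the unbounded range of $\bm{v} \in \mathcal{M}_n$ is handled by localization, since $\enorm{\bm{v} - \bm{u}} \leq t$. After symmetrization (to pass from bounded to Rademacher/Gaussian noise) and Dudley's chaining, this should give roughly $\E{Z_t} \lesssim t^{1/2} n^{1/4} + t \sqrt{\log n}$. Ideally one uses the sharper Chatterjee--Guntuboyina--Sen bound, which avoids the log factor.

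\emph{Concentration.} $Z_t$ is a convex, $t$-Lipschitz function of $\bm{\varepsilon} \in [-1,1]^n$. Talagrand's inequality for convex Lipschitz functions of bounded independent variables then gives $Z_t \leq \E{Z_t} + t \sqrt{2 \log(1/\delta)}$ with probability at least $1 - \delta$, possibly with an absolute constant absorbed into $C$.

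\emph{Choice of radius.} Take $t_0 = C n^{1/6} + \sqrt{2 \log(1/\delta)}$ with $C$ large. Then $\E{Z_{t_0}} \leq t_0^2 / 4$ and $t_0 \sqrt{2 \log(1/\delta)} \leq t_0^2 / 4$, so $Z_{t_0} \leq t_0^2 / 2$ on the good event.

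\textbf{Conclusion.} Combining Steps 1--3 gives $\frac{1}{n} \enorm{\widehat{\bm{\mu}} - \bm{\mu}}^2 \leq \frac{1}{n} \enorm{\bm{u} - \bm{\mu}}^2 + \frac{t_0^2}{n}$. Since $t_0^2 / n = \paren{\frac{C}{n^{1/3}} + \sqrt{\frac{2 \log(1/\delta)}{n}}}^2$, minimizing over $\bm{u} \in \mathcal{M}_n(0,1)$ yields the claim.

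The step I expect to be the obstacle is the expected-supremum bound in Step 3. The rate $t^{1/2} n^{1/4}$ without a $\log n$ factor requires the refined entropy/chaining argument for bounded monotone classes. I would first try to import it directly from Chatterjee--Guntuboyina--Sen, or from Bellec's sharp oracle inequality for isotonic regression, rather than rederive it.
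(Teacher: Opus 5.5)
\textbf{There is a genuine gap in Step~3, in the constants.} Your architecture is the standard one: the basic inequality from the projection, localization via star-shapedness, a width bound for monotone sequences of variation at most $1$, and concentration of the convex Lipschitz supremum $Z_t$. The paper gives no proof of its own; it imports the lemma from \citet{ushio2025practical}. The form of the stated bound, $(t_* + \sqrt{2\log(1/\delta)})^2/n$ with a $\delta$-free critical radius $t_* \asymp n^{1/6}$, is what this localization argument produces. Steps~1 and~2 are correct as stated.

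The gap matters because the statement fixes the coefficient of $\sqrt{2\log(1/\delta)/n}$ at exactly $1$. With $t_0 = Cn^{1/6} + \sqrt{2\log(1/\delta)}$, your claim $t_0\sqrt{2\log(1/\delta)} \le t_0^2/4$ is equivalent to $3\sqrt{2\log(1/\delta)} \le Cn^{1/6}$. This fails for small $\delta$ at fixed $n$. In that regime $t_0 \approx \sqrt{2\log(1/\delta)}$, so the deviation term alone is about $t_0^2$, and $Z_{t_0} \le t_0^2/2$ cannot be certified. You can rescue the Step~2 criterion only by enlarging the radius, e.g.\ $t_0 = Cn^{1/6} + 4\sqrt{2\log(1/\delta)}$. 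That gives $(Cn^{-1/3} + 4\sqrt{2\log(1/\delta)/n})^2$, not the stated bound. For the same reason, ``an absolute constant absorbed into $C$'' is not available: $C$ multiplies only $n^{-1/3}$, so any constant in front of the $\log(1/\delta)$ term changes the statement.

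\textbf{The fix} is to concentrate at a $\delta$-\emph{independent} radius and bypass the Step~2 criterion.
\begin{itemize}
\item Take $t_* = Cn^{1/6}$ with $\mathbb{E}\, Z_{t_*} \le t_*^2/2$. The $t\sqrt{\log(en)}$ term of the width bound is harmless because $\sqrt{\log(en)} \lesssim n^{1/6}$, so you do not need a log-free width bound.
\item Work on the event $Z_{t_*} \le \mathbb{E}\, Z_{t_*} + t_* s$, with $s = \sqrt{2\log(1/\delta)}$.
\item For $t \ge t_*$, star-shapedness gives $Z_t \le (t/t_*) Z_{t_*} \le t(t_*/2 + s)$, hence $2Z_t - t^2 \le (t_*/2 + s)^2$.
\item For $t < t_*$, monotonicity gives $2Z_t - t^2 \le 2Z_{t_*} \le t_*^2 + 2t_* s$.
\item Both bounds are at most $(t_* + s)^2$. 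After dividing by $n$ this is exactly $(Cn^{-1/3} + \sqrt{2\log(1/\delta)/n})^2$.
\end{itemize}

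\textbf{The concentration step also needs the right inequality} to get the deviation term with constant exactly $1$. Treat $Z_t$ as a convex, $t$-Lipschitz function of $\bm{y} \in [0,1]^n$: the range has length $1$, not $[-1,1]^n$. Then apply the one-sided bound $\mathbb{P}(Z > \mathbb{E} Z + r) \le e^{-r^2/2}$ for separately convex $1$-Lipschitz functions of independent $[0,1]$-valued variables (Boucheron--Lugosi--Massart, Theorem~6.10). The median form of Talagrand's inequality, or the $[-1,1]$ scaling, would again lose constants that cannot be hidden in $C$.

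Finally, since the good event depends on $\bm{u}$, fix $\bm{u}$ as the deterministic minimizer over $\mathcal{M}_n(0,1)$ before invoking the probability bound.
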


\begin{lemma}
  \label{lem:isotonic-root-mean-square:noiseless}
  Assume the skew function $f$ is increasing.
  Then, 
  for any $\delta \in (0, 1)$,
  with probability at least $1 - \delta$,
  we have
  \begin{equation}
    \sqrt{\frac{1}{n} \sum_{i=1}^n 
    \paren{
      \widehat{\eta}_{i} - \eta_{i}
    }^2}
    \leq
    \frac{C}{n^{1/3}}
    + 
    \sqrt{\frac{2 \log(1/\delta)}{n}}
    \text{.}
  \end{equation}
\end{lemma}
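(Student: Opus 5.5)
We apply \cref{lem:isotonic-regression-bound} conditionally on the clean soft labels, after reducing to the sorted order of the corrupted labels. First condition on $\eta_{1}, \dots, \eta_{n}$ (equivalently on $x_{1}, \dots, x_{n}$). Under this conditioning, the hard labels $y_{i}$ are independent Bernoulli variables with means $\eta_{i}$. Since $\xi_{i} = f(\eta_{i})$ with $f$ increasing, sorting by $\xi$ is the same as sorting by $\eta$. Ties in $\xi$ occur exactly when the corresponding $\eta$ values tie, so any tie-breaking rule still yields $\eta_{(1)} \leq \dots \leq \eta_{(n)}$. The index permutation $(\cdot)$ is a function of the conditioning variables only, so the permuted labels $y_{(1)}, \dots, y_{(n)}$ are still conditionally independent with means $\mu_{i} \coloneqq \eta_{(i)}$.

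The isotonic fit $\widehat{\eta}_{(1)} \leq \dots \leq \widehat{\eta}_{(n)}$ is exactly the estimator $\widehat{\bm{\mu}}$ of \cref{lem:isotonic-regression-bound} applied to $\bm{y} = (y_{(1)}, \dots, y_{(n)})$. That lemma gives, with conditional probability at least $1 - \delta$,
\begin{equation}
  \frac{1}{n} \enorm{\widehat{\bm{\mu}} - \bm{\mu}}^{2}
  \leq
  \min_{\bm{u} \in \mathcal{M}_{n}(0,1)} \frac{1}{n} \enorm{\bm{u} - \bm{\mu}}^{2}
  +
  \paren{\frac{C}{n^{1/3}} + \sqrt{\frac{2 \log(1/\delta)}{n}}}^{2}
  \text{.}
\end{equation}
The key observation is that $\bm{\mu} = (\eta_{(1)}, \dots, \eta_{(n)})$ is itself non-decreasing with entries in $[0,1]$, so $\bm{\mu} \in \mathcal{M}_{n}(0,1)$. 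The approximation term is therefore zero.

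To finish, take square roots and note that $\frac{1}{n}\enorm{\widehat{\bm{\mu}} - \bm{\mu}}^{2} = \frac{1}{n}\sum_{i}(\widehat{\eta}_{i} - \eta_{i})^{2}$, because this is the same sum reindexed by the permutation. Since the conditional bound holds with probability at least $1-\delta$ for every realization of $\eta_{1}, \dots, \eta_{n}$, integrating over them (tower property) gives the same unconditional probability.

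The only delicate step is justifying the conditioning. We need that the permutation is measurable with respect to the $\eta$'s and that conditional independence of the $y_{i}$ survives the permutation. Both follow because the $\xi_{i}$ carry no randomness beyond the $\eta_{i}$ in the noiseless model. Everything else is a direct substitution.
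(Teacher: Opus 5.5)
Your proposal is correct and follows the paper's proof essentially step for step. Both apply \cref{lem:isotonic-regression-bound} conditionally on $\eta_{1},\dots,\eta_{n}$ with $\bm{\mu} = (\eta_{(1)},\dots,\eta_{(n)})$, note that monotonicity of $f$ puts $\bm{\mu}$ in $\mathcal{M}_{n}(0,1)$ so the approximation term vanishes, and take square roots. Your extra care about ties and about the permutation being determined by the conditioning variables is something the paper leaves implicit, and it relies on $f$ being strictly increasing, which the paper also tacitly assumes.
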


\begin{proof}
  By applying \cref{lem:isotonic-regression-bound} 
  for 
  $\bm{\mu} = (\eta_{(1)}, \dots, \eta_{(n)})$,
  with conditional probability at least $1 - \delta$
  given $(\eta_{1}, \dots, \eta_{n})$, 
  we have
  \begin{equation}
    \frac{1}{n} \sum_{i=1}^n 
    \paren{\widehat{\eta}_{i} - \eta_i}^2
    \leq 
    \min_{\bm{u} \in \mathcal{M}_n(0, 1)} 
    \frac{1}{n} \sum_{i=1}^n 
    \paren{u_i - \eta_{(i)}}^2
    + 
    \paren{
      \frac{C}{n^{1/3}}
      + 
      \sqrt{\frac{2 \log(1/\delta)}{n}}
    }^2
    \text{.}
  \end{equation}
  Since $f$ is increasing and 
  $\xi_{(1)} = f(\eta_{(1)}) \leq \dots \leq \xi_{(n)} = f(\eta_{(n)})$, 
  we have
  $\eta_{(1)} \leq \dots \leq \eta_{(n)}$, i.e., $(\eta_{(1)}, \dots, \eta_{(n)}) \in \mathcal{M}_n$.
  Therefore, the first term on the right-hand side is equal to zero.
  This completes the proof.
\end{proof}

\getkeytheorem{thm:corrupted:noiseless}

\begin{proof}
\textbf{BER case.} \quad
  By the same argument as in the proof of
  \cref{lem:unknown-class-prior:ber}
  and
  \cref{thm:unknown-class-prior:ber},
  with probability at least $1 - \frac{4}{5} \delta$,
  we have
  \begin{equation}
    \begin{aligned}
      \abs{
        \widehat{\bestber}(
          \widehat{\bm{\eta}}, 
          \thetatilde
        )
        -
        \widehat{\bestber}(
          \widehat{\bm{\eta}}, 
          \theta
        )
      }
      +
      \abs{
        \widehat{\bestber}(
          \bm{\eta},
          \theta
        )
        -
        \bestber
      }
      &\leq
      \frac{17}{8}
      \sqrt{
        \frac{2 \log (5/\delta)}{\theta (1 - \theta) n}
      }
      \\ &\leq
      \frac{17}{8 \min \set{\theta, 1 - \theta}}
      \sqrt{
        \frac{2 \log (5/\delta)}{n}
      }
    \text{.}
    \end{aligned}
  \end{equation}
  Also,
  by 
  \cref{lem:corrupted:lipschitz,lem:isotonic-root-mean-square:noiseless},
  with probability at least $1 - \frac{1}{5} \delta$,
  \begin{equation}
    \abs{
      \widehat{\bestber}(
        \widehat{\bm{\eta}}, 
        \theta
      )
      -
      \widehat{\bestber}(
        \bm{\eta},
        \theta
      )
    }
    \leq
    \frac{1}{2 \min \set{\theta, 1 - \theta}}
    \paren{
      \frac{C}{n^{1 / 3}}
      +
      \sqrt{
        \frac{2 \log (5 / \delta)}{n}
      }
    }
  \end{equation}
  Therefore, we have the following with
  probability at least $1 - \delta$:
  \begin{equation}
    \begin{aligned}
      &
      \abs{
        \widehat{\bestber}(
          \widehat{\bm{\eta}}, 
          \thetatilde
        )
        -
        \bestber
      }
      \\ \leq{} &
      \abs{
        \widehat{\bestber}(
          \widehat{\bm{\eta}}, 
          \thetatilde
        )
        -
        \widehat{\bestber}(
          \widehat{\bm{\eta}}, 
          \theta
        )
      }
      +
      \abs{
        \widehat{\bestber}(
          \widehat{\bm{\eta}}, 
          \theta
        )
        -
        \widehat{\bestber}(
          \bm{\eta},
          \theta
        )
      }
      +
      \abs{
        \widehat{\bestber}(
          \bm{\eta},
          \theta
        )
        -
        \bestber
      }
      \\ \leq{} &
      \frac{17}{8 \min \set{\theta, 1 - \theta}}
      \sqrt{
        \frac{2 \log (5/\delta)}{n}
      }
      +
      \frac{1}{2 \min \set{\theta, 1 - \theta}}
      \paren{
        \frac{C}{n^{1 / 3}}
        +
        \sqrt{
          \frac{2 \log (5 / \delta)}{n}
        }
      }
      \\ ={} &
      \frac{1}{2 \min \set{\theta, 1 - \theta}}
      \paren{
        \frac{C}{n^{1 / 3}}
        +
        \frac{21}{4}
        \sqrt{
          \frac{2 \log (5 / \delta)}{n}
        }
      }
      \text{.}
    \end{aligned}
  \end{equation}

\textbf{AUC case.} \quad
  First, suppose that
  $
    2
    \leq
    n
    <
    \frac{
      18 \log(5 / \delta)
    }{
      \theta(1 - \theta)
    }
  $.
  Since both
  $
    \widetilde{\bestauc}(
      \widehat{\bm{\eta}},
      \thetatilde
    )
  $
  and $\bestauc$ belong to $[\frac{1}{2}, 1]$, we have
  \begin{equation}
    \abs{
      \widetilde{\bestauc}(
        \widehat{\bm{\eta}},
        \thetatilde
      )
      -
      \bestauc
    }
    \leq
    \frac{1}{2}
    \text{.}
  \end{equation}
  On the other hand, 
  since $n < \frac{18 \log(5 / \delta)}{\theta(1 - \theta)}$
  and
  $\theta(1-\theta) \leq \frac{1}{4}$,
  we have
  \begin{equation}
    \begin{aligned}
      \frac{1}{\theta(1 - \theta)}
      \paren{
        \frac{1}{n^{1/3}}
        +
        \sqrt{
          \frac{\log(5 / \delta)}{n}
        }
      }
      &>
      \frac{1}{\theta(1 - \theta)}
      \sqrt{
        \frac{\theta(1 - \theta)}{18}
      }
      \\&=
      \frac{1}{
        \sqrt{18\theta(1 - \theta)}
      }
      \\&\geq
      \frac{\sqrt{2}}{3}
      \text{.}
    \end{aligned}
  \end{equation}
  Therefore, the claimed bound holds in this case
  with the hidden constant at most $\frac{3}{2 \sqrt{2}}$.

  It remains to consider the large-$n$ case:
  $
    n
    \geq
    \frac{
      18 \log(5 / \delta)
    }{
      \theta(1 - \theta)
    }
  $.
  Let
  \begin{equation}
    A(\theta)
    \coloneqq
    \frac{1}{\theta (1 - \theta)}
    \text{,}
    \quad
    B(\bm{\eta})
    \coloneqq
    \frac{1}{n (n - 1)}
    \sum_{i < j}
    \min \set{
      \eta_{i} (1 - \eta_{j}), 
      \eta_{j}(1 - \eta_{i})
    }
    \text{.}
  \end{equation}
  Then,
  since $n \geq \frac{18 \log (5 / \delta)}{\theta (1 - \theta)}$,
  by the same argument as the proof of 
  \cref{lem:unknown-class-prior:auc2},
  with probability at least $1 - \frac{2}{5} \delta$,
  both
  \eqref{eq:corrupted:auc:noiseless:a-diff-bound}
  and
  \eqref{eq:corrupted:auc:noiseless:a-bound}
  hold:
  \begin{align}
    \frac{
      \abs{
        A(\thetatilde) - A(\theta)
      }
    }{
      A(\theta)
    }
    &=
    \frac{
      \abs{
        \thetatilde (1 - \thetatilde)
        -
        \theta (1 - \theta)
      }
    }{
      \thetatilde (1 - \thetatilde)
    }
    \nonumber
    \\ &\leq
    \frac{
      \abs{
        \thetatilde
        -
        \theta
      }
    }{
      \thetatilde (1 - \thetatilde)
    }
    \nonumber
    \\ &\leq
    \label{eq:corrupted:auc:noiseless:a-diff-bound}
    6 \sqrt{\frac{2 \log (5 / \delta)}{\theta (1 - \theta) n}}
    \text{,}
    \\
    \label{eq:corrupted:auc:noiseless:a-bound}
    A(\thetatilde)
    &=
    \frac{1}{\thetatilde (1 - \thetatilde)}
    \leq
    \frac{3}{\theta (1 - \theta)}
    \text{.}
  \end{align}

  Also, by
  \cref{lem:corrupted:lipschitz,lem:isotonic-root-mean-square:noiseless},
  with probability at least $1 - \frac{1}{5} \delta$,
  \begin{equation}
    \abs{
      B(\widehat{\bm{\eta}})
      -
      B(\bm{\eta})
    }
    \leq
    \sqrt{
      \frac{1}{n}
      \sum_{i=1}^{n}
      \paren{\widehat{\eta}_{i} - \eta_{i}}^{2}
    }
    \leq
    \frac{C}{n^{1/3}}
    + 
    \sqrt{\frac{2 \log(5/\delta)}{n}}
    \text{.}
  \end{equation}
  Therefore,
  we have
  \begin{equation}
    \begin{aligned}
      &
      \abs{
        \widehat{\bestauc}(
          \widehat{\bm{\eta}},
          \thetatilde
        )
        -
        \widehat{\bestauc}(
          \bm{\eta},
          \theta
        )
      }
      \\ ={} &
      \abs{
        A(\thetatilde) B(\widehat{\bm{\eta}})
        -
        A(\theta) B(\bm{\eta})
      }
      \\ \leq{} &
      A(\thetatilde)
      \abs{B(\widehat{\bm{\eta}}) - B(\bm{\eta})}
      + 
      A(\theta) B(\bm{\eta})
      \cdot
      \frac{\abs{A(\thetatilde) - A(\theta)}}{A(\theta)}
      \\ \leq{} &
      \frac{3}{\theta (1 - \theta)}
      \paren{
        \frac{C}{n^{1 / 3}}
        +
        \sqrt{
          \frac{2 \log (5 / \delta)}{n}
        }
      }
      +
      (
        1
        -
        \widehat{\bestauc}(
          \bm{\eta},
          \theta
        )
      )
      \cdot
      6 \sqrt{
        \frac{2 \log (5 / \delta)}{\theta (1 - \theta) n}
      }
      \text{.}
    \end{aligned}
  \end{equation}
  Also, by the assumption that
  $n \geq \frac{18 \log (5 / \delta)}{\theta (1 - \theta)} > \frac{\log (5 / \delta)}{9 \theta (1 - \theta)}$,
  we can apply \cref{lem:ber-auc:bound:auc:bernstein}
  to obtain that
  with probability at least $1 - \frac{2}{5} \delta$,
  \begin{equation}
    \abs{
      \widehat{\bestauc}(\bm{\eta}, \theta)
      -
      \bestauc
    }
    \leq
    \sqrt{
      \frac{\log (5/\delta)}{\theta (1 - \theta) n}
    }
    \text{.}
  \end{equation}
  Combining these results,
  with probability at least $1 - \delta$,
  we have
  \begin{equation}
    \begin{aligned}
      &
      \abs{
        \widetilde{\bestauc}(
          \widehat{\bm{\eta}}, 
          \thetatilde
        )
        -
        \bestauc
      }
      \\ \leq{} &
      \abs{
        \widehat{\bestauc}(
          \widehat{\bm{\eta}}, 
          \thetatilde
        )
        -
        \bestauc
      }
      \\ \leq{} &
      \abs{
        \widehat{\bestauc}(
          \widehat{\bm{\eta}},
          \thetatilde
        )
        -
        \widehat{\bestauc}(
          \bm{\eta},
          \theta
        )
      }
      +
      \abs{
        \widehat{\bestauc}(\bm{\eta}, \theta)
        -
        \bestauc
      }
      \\ \leq{} &
      \frac{3}{\theta (1 - \theta)}
      \paren{
        \frac{C}{n^{1 / 3}}
        +
        \sqrt{
          \frac{2 \log (5 / \delta)}{n}
        }
      }
      +
      \paren{
        1 - \bestauc
        +
        \sqrt{
          \frac{\log (5 / \delta)}{\theta (1 - \theta) n}
        }
      }
      \cdot
      6 \sqrt{
        \frac{2 \log (5 / \delta)}{\theta (1 - \theta) n}
      }
      +
      \sqrt{
        \frac{\log (5/\delta)}{\theta (1 - \theta) n}
      }
      \\ \leq{} &
      \frac{3}{\theta (1 - \theta)}
      \paren{
        \frac{C}{n^{1 / 3}}
        +
        \sqrt{
          \frac{2 \log (5 / \delta)}{n}
        }
      }
      +
      \frac{
        3 \sqrt{2} 
        +
        3
      }{2 \theta (1 - \theta)}
      \sqrt{
        \frac{\log (5 / \delta)}{n}
      }
      \text{,}
    \end{aligned}
  \end{equation}
  where the last inequality follows from
  $\bestauc \geq \frac{1}{2}$,
  $
    \frac{\log (5 / \delta)}{\theta (1 - \theta) n}
    \leq
    \frac{1}{3 \sqrt{2}}
    \sqrt{\frac{\log (5 / \delta)}{\theta (1 - \theta) n}}
  $
  (by
  $n \geq \frac{18 \log (5 / \delta)}{\theta (1 - \theta)}$),
  and
  $\sqrt{\theta (1 - \theta)} \geq 2 \theta (1 - \theta)$.
  Now the claim has been proved.
\end{proof}

\paragraph{Noisy corruption model}

\cref{thm:corrupted:noisy} is proved by
using the following lemma instead of
\cref{lem:isotonic-root-mean-square:noiseless}
in the proof of
\cref{thm:corrupted:noiseless}.
This lemma is the counterpart of
\cref{lem:isotonic-root-mean-square:noiseless}
in the noisy setting.

\begin{lemma}
  \label{lem:isotonic-root-mean-square:noisy}
  Under \cref{ass:corrupted:noisy},
  for any $\delta \in (0, 1)$, with probability 
  at least $1 - \delta$, we have 
  \begin{equation}
    \sqrt{\frac{1}{n} \sum_{i=1}^n 
    \paren{
      \widehat{\eta}_{i} - \eta_{i}
    }^2}
    \leq
      \frac{\sigma}{\gamma}
      +
      \frac{C}{n^{1/3}} 
      +
      (1 + \sqrt{2})
      \sqrt{\frac{\log(2/\delta)}{n}}
    \text{.}
  \end{equation}
\end{lemma}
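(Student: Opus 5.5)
The plan is to reduce to \cref{lem:isotonic-regression-bound}, exactly as in the proof of \cref{lem:isotonic-root-mean-square:noiseless}. The difference is that the sorting permutation is now induced by the noisy values $\xi_i = f(\eta_i)+\epsilon_i$. Hence $\bm{\mu} \coloneqq (\eta_{(1)},\dots,\eta_{(n)})$, with indices ordered by $\xi$, need not lie in $\mathcal{M}_n(0,1)$, and the approximation term no longer vanishes.

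\textbf{Step 1 (conditioning).} Conditionally on $\eta_1,\dots,\eta_n$ and $\epsilon_1,\dots,\epsilon_n$, the sorting permutation is fixed. By the conditional independence in \cref{ass:corrupted:noisy}, the labels $y_{(i)}$ are still independent Bernoulli variables with means $\eta_{(i)}$. Applying \cref{lem:isotonic-regression-bound} with confidence $\delta/2$ and then $\sqrt{a+b}\le\sqrt a+\sqrt b$ gives
\[
\sqrt{\tfrac1n\textstyle\sum_i(\widehat\eta_i-\eta_i)^2}\le \sqrt{\min_{\bm u\in\mathcal M_n(0,1)}\tfrac1n\enorm{\bm u-\bm\mu}^2}+\tfrac{C}{n^{1/3}}+\sqrt{\tfrac{2\log(2/\delta)}{n}} .
\]
This produces the $\sqrt2$ part of the constant $(1+\sqrt2)$.

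\textbf{Step 2 (approximation term).} I will take the candidate $u_{(i)} \coloneqq \clip_0^1\bigl(f^{-1}(\xi_{(i)})\bigr)$, extending $f^{-1}$ monotonically outside $[0,1]$ if needed. Since $\xi_{(i)}$ is increasing in $i$ and $f^{-1}$ is increasing, $\bm u\in\mathcal M_n(0,1)$. Because $f'\ge\gamma$, the inverse $f^{-1}$ is $1/\gamma$-Lipschitz, and clipping is $1$-Lipschitz. Since $\eta_{(i)}=f^{-1}(f(\eta_{(i)}))\in[0,1]$,
\[
|u_{(i)}-\eta_{(i)}|\le \tfrac1\gamma|\xi_{(i)}-f(\eta_{(i)})|=\tfrac{|\epsilon_{(i)}|}{\gamma}.
\]
Therefore the approximation term is at most $\frac1\gamma\sqrt{\frac1n\sum_i\epsilon_i^2}$.

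\textbf{Step 3 (noise concentration), the main obstacle.} I need, with probability $1-\delta/2$, a bound of the form $\sqrt{\frac1n\sum\epsilon_i^2}\le\sigma+\sqrt{\log(2/\delta)/n}$, up to the factor $1/\gamma$. The assumption only gives variance at most $\sigma^2$, so $\E{\frac1n\sum\epsilon_i^2}\le\sigma^2$ holds but gives no sub-Gaussian tail. I will first use boundedness: $\xi_i$ and $f(\eta_i)$ both lie in $[0,1]$, so $|\epsilon_i|\le1$ and each $\epsilon_i^2\in[0,1]$. Conditionally on the $\eta$'s these are independent, so Hoeffding's inequality gives $\frac1n\sum\epsilon_i^2\le\sigma^2+\sqrt{\log(2/\delta)/(2n)}$. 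Taking square roots yields $\sigma+(\log(2/\delta)/(2n))^{1/4}$, which has the wrong rate.

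To get the $n^{-1/2}$ rate, I will instead concentrate the root-mean-square directly. The map $\bm\epsilon\mapsto\enorm{\bm\epsilon}/\sqrt n$ is convex and $1/\sqrt n$-Lipschitz on $[-1,1]^n$, and its mean is at most $\sqrt{\E{\frac1n\sum\epsilon_i^2}}\le\sigma$ by Jensen. Talagrand's convex-Lipschitz concentration, or a Bernstein/self-bounding argument on $\sum\epsilon_i^2$, then gives deviation $O(\sqrt{\log(2/\delta)/n})$. I expect the paper absorbs this into the stated constant, possibly with the $1/\gamma$ factor hidden because $\gamma\le1$ (forced by $f(0)=0$, $f(1)=1$) or by a slightly looser accounting.

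\textbf{Step 4.} A union bound over the two $\delta/2$ events, combining Steps 1–3, gives the stated inequality.
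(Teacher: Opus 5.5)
Your Steps 1, 2 and 4 match the paper's proof: you condition on the $\eta_i$ and $\epsilon_i$, apply \cref{lem:isotonic-regression-bound} at level $\delta/2$, compare with the monotone sequence $(f^{-1}(\xi_{(i)}))_i$, and take a union bound. The clipping is harmless but unnecessary, since $\xi_i\in[0,1]$ already forces $f^{-1}(\xi_i)\in[0,1]$.

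The gap is Step 3. You pass to $\epsilon_i$ via the Lipschitz bound \emph{before} concentrating. So whatever fluctuation you get for $\sqrt{\frac1n\sum_i\epsilon_i^2}$, say $K\sqrt{\log(2/\delta)/n}$, is then multiplied by $1/\gamma$. (Talagrand gives some unspecified $K$; a self-bounding Bernstein argument on $\epsilon_i^2\in[0,1]$ gives $K=1$.) You end up with $\frac{\sigma}{\gamma}+\frac{C}{n^{1/3}}+\bigl(\frac{K}{\gamma}+\sqrt2\bigr)\sqrt{\log(2/\delta)/n}$, which is not the stated inequality. Your suggested way out runs in the wrong direction. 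The conditions $f(0)=0$, $f(1)=1$ and $f'\ge\gamma$ give $\gamma\le 1$, so $1/\gamma\ge 1$. This factor can only enlarge the term and cannot be absorbed into the $\gamma$-free coefficient $1+\sqrt2$.

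The missing idea is to concentrate $D_i\coloneqq(f^{-1}(\xi_i)-\eta_i)^2$ directly instead of $\epsilon_i^2$.
\begin{itemize}
\item Since $f^{-1}(\xi_i)$ and $\eta_i$ both lie in $[0,1]$, we have $D_i\in[0,1]$ regardless of $\gamma$.
\item The mean value theorem still gives $D_i\le\epsilon_i^2/\gamma^2$.
\item Conditionally on $\eta_1,\dots,\eta_n$, the $D_i$ are therefore independent with conditional mean at most $v\coloneqq\sigma^2/\gamma^2$.
\item Since $D_i^2\le D_i$, their conditional variance is at most their conditional mean, hence at most $v$.
\end{itemize}
One-sided Bernstein at level $\delta/2$ with $L\coloneqq\log(2/\delta)$ then gives $\frac1n\sum_i D_i\le v+\sqrt{2vL/n}+\frac{2L}{3n}\le\bigl(\sqrt v+\sqrt{L/n}\bigr)^2$. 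Its square root is $\frac{\sigma}{\gamma}+\sqrt{L/n}$, with no $1/\gamma$ on the fluctuation term. Adding the $\sqrt{2L/n}$ from Step 1 yields exactly $(1+\sqrt2)\sqrt{\log(2/\delta)/n}$. This is the self-bounding Bernstein argument you mention in passing, but it must be applied to the $[0,1]$-valued $D_i$ rather than to $\epsilon_i^2$ followed by the Lipschitz bound.
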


\begin{proof}
  By applying \cref{lem:isotonic-regression-bound} 
  for 
  $\bm{\mu} = (\eta_{(1)}, \dots, \eta_{(n)})$,
  with conditional probability at least $1 - \delta/2$
  given $\eta_{1}, \dots, \eta_{n}, \epsilon_{1}, \dots, \epsilon_{n}$, 
  we have
  \begin{equation}
    \label{eq:isotonic-root-mean-square:noisy:1}
    \frac{1}{n} \sum_{i=1}^n 
    \paren{\widehat{\eta}_{i} - \eta_i}^2
    \leq 
    \min_{\bm{u} \in \mathcal{M}_n(0, 1)} 
    \frac{1}{n} \sum_{i=1}^n 
    \paren{u_i - \eta_{(i)}}^2
    + 
    \paren{
      \frac{C}{n^{1/3}}
      + 
      \sqrt{\frac{2 \log(2/\delta)}{n}}
    }^2
    \text{.}
  \end{equation}

  We then proceed to bound the first term on the right-hand side of 
  \eqref{eq:isotonic-root-mean-square:noisy:1}.
  Under the assumption, 
  $f$ is invertible, and 
  $f^{-1}$ is also differentiable and increasing. 
  Since $f^{-1}$ is increasing and 
  $\xi_{(1)} \leq \dots \leq \xi_{(n)}$,
  we have 
  $\paren{f^{-1}(\xi_{(1)}), \dots, f^{-1}(\xi_{(n)})} \in \mathcal{M}_{n}(0, 1)$.
  Therefore, we have
  \begin{equation}
    \min_{\bm{u} \in \mathcal{M}_{n}(0, 1)} 
    \frac{1}{n} \sum_{i=1}^{n} (u_{i} - \eta_{(i)})^{2}
    \leq
    \frac{1}{n} \sum_{i=1}^{n} 
    (f^{-1}(\xi_{(i)}) - \eta_{(i)})^{2}
    =
    \frac{1}{n} \sum_{i=1}^{n} 
    (f^{-1}(\xi_{i}) - \eta_{i})^{2}
    \text{.}
  \end{equation}
  Now, let
  $D_{i} \coloneqq (f^{-1}(\xi_{i}) - \eta_{i})^{2}$.
  The mean value theorem and the assumption that $f' \geq \gamma$ imply that
  \begin{equation}
    D_{i}
    =
    \paren{
      f^{-1}(f(\eta_{i}) + \epsilon_{i}) - f^{-1}(f(\eta_{i}))
    }^{2}
    \leq
    \frac{\epsilon_{i}^{2}}{\gamma^{2}}
  \end{equation}
  and hence 
  $\E{D_{i} \mid \eta_1,\dots,\eta_n} \leq \frac{\sigma^{2}}{\gamma^{2}}$.
  Also, since $D_{i} \in [0, 1]$,
  $\Var{D_{i} \mid \eta_1,\dots,\eta_n} \leq \E{D_{i}^{2} \mid \eta_{1}, \dots, \eta_{n}} \leq \E{D_{i} \mid \eta_{1}, \dots, \eta_{n}} \leq \frac{\sigma^{2}}{\gamma^{2}}$.
  Moreover, $D_1,\dots,D_n$ are conditionally independent
  given $\eta_1,\dots,\eta_n$.
  Therefore, by Bernstein's inequality, we have
  \begin{equation}
    \frac{1}{n} \sum_{i=1}^{n} D_{i}
    \leq 
    \frac{\sigma^2}{\gamma^{2}}
    + 
    \sqrt{\frac{2 (\sigma^{2} / \gamma^{2}) \log(2/\delta)}{n}}
    +
    \frac{2 \log(2/\delta)}{3 n}
    \leq 
    \paren{
      \frac{\sigma}{\gamma}
      + 
      \sqrt{\frac{\log(2/\delta)}{n}}
    }^{2}
  \end{equation}
  with conditional probability at least $1 - \delta/2$
  given $\eta_{1}, \dots, \eta_{n}$.
  Therefore, with probability at least $1 - \delta$, we have
  \begin{equation}
    \begin{aligned}
      \sqrt{\frac{1}{n} \sum_{i=1}^n 
      \paren{
        \widehat{\eta}_{i} - \eta_{i}
      }^2}
      &\leq
      \sqrt{
        \paren{
          \frac{\sigma}{\gamma}
          + 
          \sqrt{\frac{\log(2/\delta)}{n}}
        }^{2}
        +
        \paren{
          \frac{C}{n^{1/3}}
          + 
          \sqrt{\frac{2 \log(2/\delta)}{n}}
        }^2
      }
      \\ &\leq
        \frac{\sigma}{\gamma}
        + 
        \sqrt{\frac{\log(2/\delta)}{n}}
        +
        \frac{C}{n^{1/3}}
        + 
        \sqrt{\frac{2 \log(2/\delta)}{n}}
      \\ &=
      \frac{\sigma}{\gamma}
      +
      \frac{C}{n^{1/3}} 
      +
      (1 + \sqrt{2})
      \sqrt{\frac{\log(2/\delta)}{n}}
      \text{.}
    \end{aligned}
  \end{equation}
\end{proof}

\section{Supplementary for \cref{sec:evaluation}}
\label{sec:proof-evaluation}

\subsection{Proof of \cref{thm:feebee}}
\label{sec:proof-feebee}

Here we prove \cref{thm:feebee}, which gives the exact expressions of the optimal BER and AUC values for the noise-injected distribution.

\getkeytheorem{thm:feebee}

The BER part is proved in \cref{thm:feebee-ber} and 
the AUC part is proved in \cref{thm:feebee-auc}.

\begin{lemma}
  \label{lem:feebee-ber-rewrite}
  The optimal BER value can be written as
  \begin{equation}
    \bestber = 
    \frac{1}{2 \theta (1 - \theta)}
    \paren{
      \E{
        \min \set{
          \eta(x), \theta
        }
      }
      - \theta^{2}
    }
    \text{.}
  \end{equation}
\end{lemma}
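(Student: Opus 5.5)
We start from the min-formula expression in \cref{lem:bestber}, $\bestber = \frac{1}{2}\E{\min\set{\frac{\eta(x)}{\theta}, \frac{1-\eta(x)}{1-\theta}}}$. Following the proof of \cref{lem:ber-min-vs-max:variance-difference}, we put both terms over the common denominator $\theta(1-\theta)$:
\begin{equation}
  \min\set{\frac{\eta}{\theta}, \frac{1-\eta}{1-\theta}}
  =
  \frac{\min\set{\eta(1-\theta), \theta(1-\eta)}}{\theta(1-\theta)},
\end{equation}
where we write $\eta = \eta(x)$. So it suffices to show
\begin{equation}
  \E{\min\set{\eta(1-\theta), \theta(1-\eta)}} = \E{\min\set{\eta,\theta}} - \theta^2 .
\end{equation}

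The key pointwise identity is $\min\set{\eta(1-\theta), \theta(1-\eta)} = \min\set{\eta - \eta\theta,\ \theta - \eta\theta} = \min\set{\eta,\theta} - \eta\theta$. This holds because both arguments share the common term $-\eta\theta$, which can be pulled out of the minimum.

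Taking expectations and using $\E{\eta(x)} = \theta$ gives $\E{\eta\theta} = \theta^2$. Multiplying by $\frac{1}{2\theta(1-\theta)}$ then yields the claim.

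No step is a real obstacle. The only point needing care is spotting the common $-\eta\theta$ term after expanding the products. Note that the min formula is the right starting point: the max formula would give $1 - \frac{1}{2\theta(1-\theta)}\paren{\E{\max\set{\eta,\theta}} - \theta^2}$ by the same manipulation. That version is equivalent, since $\min + \max = \eta + \theta$ has expectation $2\theta$, but it is not the form stated.
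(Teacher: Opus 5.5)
Your proposal is correct and follows essentially the same route as the paper's proof. Both start from the min formula of \cref{lem:bestber} and rewrite it over the common denominator $\theta(1-\theta)$, then use the pointwise identity $\min\set{\eta(1-\theta), \theta(1-\eta)} = \min\set{\eta,\theta} - \theta\eta$ and finish with $\E{\eta(x)} = \theta$.
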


\begin{proof}
  \begin{equation}
    \begin{aligned}
      \bestber
      &= 
      \frac{1}{2} \E{
        \min \set{
          \frac{\eta(x)}{\theta},
          \frac{1 - \eta(x)}{1 - \theta}
        }
      }
      \quad \text{(\cref{lem:bestber})}
      \\ &=
      \frac{1}{2 \theta (1 - \theta)} \E{
        \min \set{
          \eta(x) (1 - \theta),
          \theta (1 - \eta(x))
        }
      }
      \\ &=
      \frac{1}{2 \theta (1 - \theta)} \E{
        \min \set{
          \eta(x), \theta
        }
        - \theta \eta(x)
      }
      \\ &=
      \frac{1}{2 \theta (1 - \theta)}
      \paren{
        \E{
          \min \set{
            \eta(x), \theta
          }
        }
        - \theta^{2}
      }
      \text{,}
    \end{aligned}
  \end{equation}
  where we used $\E{\eta(x)} = \theta$  in the last step.
\end{proof}

\begin{theorem}[store=thm:feebee-ber]
  \label{thm:feebee-ber}
  For any noise level $\nu \in [0, 1)$, 
  the optimal BER for
  the noise-injected distribution is given by
  \begin{equation}
  \bestber_\nu = F_{\nu}(\bestber)
  \text{.}
  \end{equation}
\end{theorem}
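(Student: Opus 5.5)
The plan is to compute $\bestber_\nu$ directly from the rewritten formula in \cref{lem:feebee-ber-rewrite}, applied to the noise-injected distribution, and then eliminate $\E{\min\set{\eta(x),\theta}}$ by applying the same lemma to the original distribution.

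\textbf{Step 1: identify the noisy quantities.} For the noisy distribution, the class posterior is $\lambda_\nu(\eta(x))$, as already noted in the text. Since $\lambda_\nu$ is affine and $\E{\eta(x)}=\theta$, the noisy class prior is $\theta_\nu \coloneqq \E{\lambda_\nu(\eta(x))} = \lambda_\nu(\theta)$. Because $\nu<1$ and $\beta\in(0,1)$, we have $\theta_\nu\in(0,1)$, so \cref{lem:feebee-ber-rewrite} is applicable to the noisy distribution.

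\textbf{Step 2: the key identity.} Since $\lambda_\nu$ is increasing and affine with slope $1-\nu>0$, the minimum commutes with it:
\begin{equation}
  \min\set{\lambda_\nu(\eta(x)),\lambda_\nu(\theta)} = \lambda_\nu(\min\set{\eta(x),\theta}).
\end{equation}
Taking expectations gives
\begin{equation}
  \E{\min\set{\lambda_\nu(\eta(x)),\theta_\nu}} = (1-\nu)\,\E{\min\set{\eta(x),\theta}} + \nu\beta.
\end{equation}
This is the only conceptual step; everything else is substitution.

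\textbf{Step 3: substitute.} \cref{lem:feebee-ber-rewrite} on the original distribution gives $\E{\min\set{\eta(x),\theta}} = 2\theta(1-\theta)\bestber + \theta^2$. Plugging this into the lemma for the noisy distribution yields
\begin{equation}
  \bestber_\nu = \frac{2(1-\nu)\theta(1-\theta)\,\bestber + (1-\nu)\theta^2 + \nu\beta - \theta_\nu^2}{2\theta_\nu(1-\theta_\nu)}.
\end{equation}
Expanding $\theta_\nu^2$, the constant term in the numerator simplifies to $\nu(1-\nu)(\theta-\beta)^2 + \nu\beta(1-\beta)$. Hence $\bestber_\nu$ is an affine function of $\bestber$ with positive slope $(1-\nu)\theta(1-\theta)/(\lambda_\nu(\theta)(1-\lambda_\nu(\theta)))$.

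\textbf{Step 4: match with $F_\nu$.} It remains to check that this affine map is exactly $F_\nu$ as defined in \cref{thm:feebee}. Two consistency checks support this. First, the slope agrees with the interval-width computation $F_\nu(u_\ber)-F_\nu(0)$ used in the proof of \cref{prop:evaluation:bias}. Second, for $\beta=\theta$ the map reduces to $(1-\nu)t+\nu/2$, since the constant term becomes $\nu\theta(1-\theta)$. The main obstacle is purely this bookkeeping: the constant term must be written in whatever form $F_\nu$ uses in \cref{thm:feebee}.
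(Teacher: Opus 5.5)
Your proposal is correct and follows the paper's proof essentially step for step: apply \cref{lem:feebee-ber-rewrite} to the noisy distribution with prior $\lambda_\nu(\theta)\in(0,1)$, pull the increasing affine map $\lambda_\nu$ out of the minimum and then out of the expectation, and substitute $\E{\min\set{\eta(x),\theta}} = 2\theta(1-\theta)\bestber+\theta^2$ from the clean distribution. The paper stops at the unexpanded form $\bigl(\lambda_\nu(2\theta(1-\theta)\bestber+\theta^2)-\lambda_\nu(\theta)^2\bigr)/\bigl(2\lambda_\nu(\theta)(1-\lambda_\nu(\theta))\bigr)$ and identifies it directly with $F_\nu(\bestber)$, so your expansion in Steps~3--4 serves as a consistency check rather than a needed step.
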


\begin{proof}
  Applying \cref{lem:feebee-ber-rewrite} to the noisy distribution
  (and recalling that we assumed $\lambda_{\nu}(\theta) \in (0, 1)$), we have
  \begin{equation}
    \bestber_{\nu} =
    \frac{
      \E{\min \set{\lambda_{\nu}(\eta(x)), \lambda_{\nu}(\theta)}}
      - \lambda_{\nu}(\theta)^{2}
    }{
      2 \lambda_{\nu}(\theta) (1 - \lambda_{\nu}(\theta))
    }
    \text{.}
  \end{equation}
  Since $\lambda_{\nu}$ is non-decreasing and affine, we can write
  \begin{equation}
    \E{\min \set{\lambda_{\nu}(\eta(x)), \lambda_{\nu}(\theta)}}
    =
    \E{\lambda_{\nu}( \min \set{\eta(x), \theta} )}
    = 
    \lambda_{\nu}(\E{ \min \set{\eta(x), \theta} })
    \text{.}
  \end{equation}
  Now, using \cref{lem:feebee-ber-rewrite} again for the clean distribution,
  we have
  \begin{equation}
    \E{ \min \set{\eta(x), \theta} }
    =
    2 \theta (1 - \theta) \bestber + \theta^{2}
  \end{equation}
  which gives
  \begin{equation}
    \bestber_{\nu}
    =
    \frac{
      \lambda_{\nu} \paren{
        2 \theta (1 - \theta) \bestber + \theta^{2}
      }
      - \lambda_{\nu}(\theta)^{2}
    }{
      2 \lambda_{\nu}(\theta) (1 - \lambda_{\nu}(\theta))
    }
    \text{,}
  \end{equation}
  which proves the theorem.
\end{proof}

\begin{lemma}
  \label{lem:feebee-auc-rewrite}
  The optimal AUC value can be written as
  \begin{equation}
    \bestauc = 
    1 -
    \frac{1}{2 \theta (1 - \theta)}
    \paren{
      \E{
        \min \set{
          \eta(x), \eta(x')
        }
      }
      - \theta^{2}
    }
    \text{,}
  \end{equation}
  where $x'$ is an i.i.d.\ copy of $x$.
\end{lemma}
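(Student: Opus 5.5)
We start from the min-based expression in \cref{lem:bestauc}, namely
\begin{equation}
  \bestauc = 1 - \frac{1}{2 \theta (1 - \theta)} \E{\min \set{\eta(x)(1 - \eta(x')), \eta(x')(1 - \eta(x))}} \text{,}
\end{equation}
and rewrite the integrand pointwise, in direct analogy with the proof of \cref{lem:feebee-ber-rewrite}. We use the identity already noted in the proof of \cref{thm:alg-bestauc}:
\begin{equation}
  \min \set{a(1-b), b(1-a)} = \min\set{a,b}\,(1 - \max\set{a,b}) \text{.}
\end{equation}
Expanding gives $\min\set{a,b} - \min\set{a,b}\max\set{a,b} = \min\set{a,b} - ab$, because the product of the min and the max of two numbers is just their product.

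Applying this with $a = \eta(x)$ and $b = \eta(x')$, the integrand becomes $\min\set{\eta(x),\eta(x')} - \eta(x)\eta(x')$. Taking expectations and using the independence of $x$ and $x'$ together with $\E{\eta(x)} = \theta$, we get $\E{\eta(x)\eta(x')} = \theta^2$. Therefore
\begin{equation}
  \E{\min\set{\eta(x)(1-\eta(x')), \eta(x')(1-\eta(x))}} = \E{\min\set{\eta(x),\eta(x')}} - \theta^2 \text{.}
\end{equation}
Substituting this into the expression from \cref{lem:bestauc} yields the claim directly.

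There is no real obstacle here. The only step that needs care is verifying the algebraic identity in both orderings, $a \le b$ and $a > b$, which is a one-line check in each case.
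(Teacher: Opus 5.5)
Your proof is correct and follows essentially the same route as the paper. The paper starts from the min form of \cref{lem:bestauc}, rewrites the integrand pointwise as $\min\{\eta(x),\eta(x')\} - \eta(x)\eta(x')$, and uses independence with $\E{\eta(x)} = \theta$ to obtain $\theta^2$. Your detour through $\min\{a,b\}(1-\max\{a,b\})$ is valid but unnecessary: $a(1-b) = a - ab$ and $b(1-a) = b - ab$ share the term $-ab$, so the identity holds without any case analysis.
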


\begin{proof}
  \begin{equation}
    \begin{aligned}
      1 - \bestauc
      &= 
      \frac{1}{2 \theta (1 - \theta)} 
      \E{
        \min \set{
          \eta(x) (1 - \eta(x')), \eta(x') (1 - \eta(x))
        }
      }
      \quad \text{(\cref{lem:bestauc})}
      \\ &=
      \frac{1}{2 \theta (1 - \theta)} \E{
        \min \set{
          \eta(x), \eta(x')
        }
        - \eta(x) \eta(x')
      }
      \\ &=
      \frac{1}{2 \theta (1 - \theta)}
      \paren{
        \E{
          \min \set{
            \eta(x), \eta(x')
          }
        }
        - \theta^{2}
      }
      \text{,}
    \end{aligned}
  \end{equation}
  where we used $\E{\eta(x)} = \E{\eta(x')}= \theta$
  and the independence of $x$ and $x'$ in the last step.
\end{proof}

\begin{theorem}[store=thm:feebee-auc]
  \label{thm:feebee-auc}
  For any noise level $\nu \in [0, 1)$, 
  the optimal AUC for
  the noise-injected distribution is given by
  \begin{equation}
  \bestauc_\nu = 1 - F_{\nu}(1 - \bestauc)
  \text{.}
  \end{equation}
\end{theorem}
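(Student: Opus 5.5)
The argument mirrors the proof of \cref{thm:feebee-ber}, with \cref{lem:feebee-auc-rewrite} playing the role of \cref{lem:feebee-ber-rewrite}.

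First, apply \cref{lem:feebee-auc-rewrite} to the noise-injected distribution. Its class posterior is $\lambda_{\nu}(\eta(x))$ and its class prior is $\lambda_{\nu}(\theta) \in (0,1)$, which holds because $\lambda_\nu(\theta)$ is a convex combination of $\theta,\beta\in(0,1)$. This gives
\begin{equation}
  1 - \bestauc_{\nu}
  =
  \frac{
    \E{\min \set{\lambda_{\nu}(\eta(x)), \lambda_{\nu}(\eta(x'))}}
    - \lambda_{\nu}(\theta)^{2}
  }{
    2 \lambda_{\nu}(\theta) (1 - \lambda_{\nu}(\theta))
  }
  \text{.}
\end{equation}
The input marginal is unchanged by the noise, so $x,x'$ are still i.i.d.\ draws from the same marginal.

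Second, $\lambda_{\nu}$ is affine and non-decreasing, since $1-\nu>0$. It therefore commutes with $\min$ and with expectation:
\begin{equation}
  \E{\min \set{\lambda_{\nu}(\eta(x)), \lambda_{\nu}(\eta(x'))}}
  =
  \lambda_{\nu}\paren{\E{\min\set{\eta(x),\eta(x')}}}
  \text{.}
\end{equation}

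Third, apply \cref{lem:feebee-auc-rewrite} to the clean distribution. It gives $\E{\min\set{\eta(x),\eta(x')}} = 2\theta(1-\theta)(1-\bestauc) + \theta^2$. Substituting, we get
\begin{equation}
  1 - \bestauc_{\nu}
  =
  \frac{
    \lambda_{\nu}\paren{2\theta(1-\theta)(1-\bestauc)+\theta^{2}}
    - \lambda_{\nu}(\theta)^{2}
  }{
    2 \lambda_{\nu}(\theta)(1-\lambda_{\nu}(\theta))
  }
  \text{.}
\end{equation}

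In the proof of \cref{thm:feebee-ber}, the final display has exactly this form with $\bestber$ in place of $1-\bestauc$. So the right-hand side above is $F_{\nu}(1-\bestauc)$, where $F_\nu(t)$ denotes that expression with $t$ in place of $\bestber$, and rearranging yields $\bestauc_\nu = 1 - F_\nu(1-\bestauc)$.

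No step is a real obstacle. The only point needing care is that the pair $(x,x')$ and the formula of \cref{lem:bestauc} transfer verbatim to the noisy distribution. This holds because that lemma uses only the marginal of $x$, the posterior, and the prior, and all three are specified for the noisy model.
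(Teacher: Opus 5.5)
Your proposal is correct and follows the paper's proof essentially step for step. Like the paper, you apply \cref{lem:feebee-auc-rewrite} to the noisy distribution, pull the affine non-decreasing map $\lambda_\nu$ through the minimum and the expectation, and substitute the clean-distribution identity $\E{\min\set{\eta(x),\eta(x')}} = 2\theta(1-\theta)(1-\bestauc)+\theta^2$. Your extra checks that $\lambda_\nu(\theta)\in(0,1)$ and that the input marginal is unchanged by the noise are details the paper leaves implicit.
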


\begin{proof}
  The proof is analogous to that of \cref{thm:feebee-ber}.
  Applying \cref{lem:feebee-auc-rewrite} to the noisy distribution, we have
  \begin{equation}
    1 - \bestauc_{\nu} =
    \frac{
      \E{\min \set{\lambda_{\nu}(\eta(x)), \lambda_{\nu}(\eta(x'))}}
      - \lambda_{\nu}(\theta)^{2}
    }{
      2 \lambda_{\nu}(\theta) (1 - \lambda_{\nu}(\theta))
    }
    \text{.}
  \end{equation}
  Since $\lambda_{\nu}$ is non-decreasing and affine, we can write
  \begin{equation}
    \E{\min \set{\lambda_{\nu}(\eta(x)), \lambda_{\nu}(\eta(x'))}}
    =
    \E{\lambda_{\nu}( \min \set{\eta(x), \eta(x')} )}
    = 
    \lambda_{\nu}(\E{ \min \set{\eta(x), \eta(x')} })
    \text{.}
  \end{equation}
  Now, using \cref{lem:feebee-auc-rewrite} again for the clean distribution,
  we have
  \begin{equation}
    \E{ \min \set{\eta(x), \eta(x')} }
    =
    2 \theta (1 - \theta) (1 - \bestauc) + \theta^{2}
  \end{equation}
  which gives
  \begin{equation}
    1 - \bestauc_{\nu}
    =
    \frac{
      \lambda_{\nu} \paren{
        2 \theta (1 - \theta) (1 - \bestauc) + \theta^{2}
      }
      - \lambda_{\nu}(\theta)^{2}
    }{
      2 \lambda_{\nu}(\theta) (1 - \lambda_{\nu}(\theta))
    }
    \text{,}
  \end{equation}
  which proves the theorem.
\end{proof}

\subsection{Bias sensitivity}
\label{sec:evaluation:bias}

Here we show that our score can reflect certain types of estimator bias,
unlike bootstrap variance.

\paragraph{Theoretical analysis}

The following proposition roughly states that
any bias larger than the interval width
$U_{\ber}(\nu) - L_{\ber}(\nu)$ produces a positive expected
penalty.
This is in contrast to the bootstrap variance,
which does not change if a constant is added to the estimator.
Here we discuss the BER case, but the same argument also applies to AUC
by replacing $u_{\ber}$ with $1 - l_{\auc}$.

\begin{proposition}
  \label{prop:evaluation:bias}
  Let
  $\mathrm{bias}_{\nu} =
    \E{\widehat{\bestber_{\nu}}} - \bestber_{\nu}$
  denote the bias of the estimator
  $\widehat{\bestber_{\nu}}$ at noise level $\nu$.
  Then
  \begin{equation}
    \E{
      \paren{\widehat{\bestber_{\nu}} - U_{\ber}(\nu)}_+
      +
      \paren{L_{\ber}(\nu) - \widehat{\bestber_{\nu}}}_+
    }
    \geq
    \left[
      \abs{\mathrm{bias}_{\nu}}
      -
      \frac{u_{\ber}}{K} (1 - \nu)
    \right]_+
    \text{,}
  \end{equation}
  where
  $
    K
    \coloneqq
    \min \set{
    1,
    \frac{\beta (1 - \beta)}{\theta (1 - \theta)}
    }
    \in (0, 1]
  $.
\end{proposition}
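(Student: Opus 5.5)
The plan is to view the pointwise penalty as a distance to an interval and combine three inequalities. Write $I_{\nu} \coloneqq [L_{\ber}(\nu), U_{\ber}(\nu)]$ and $d(z, I_{\nu}) \coloneqq \inf_{a \in I_{\nu}} |z - a|$. For any real $z$, the quantity $(z - U_{\ber}(\nu))_+ + (L_{\ber}(\nu) - z)_+$ equals $d(z, I_{\nu})$, since at most one of the two terms is nonzero.

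\textbf{Step 1 (Jensen).} The map $z \mapsto d(z, I_{\nu})$ is convex because $I_{\nu}$ is convex. Jensen's inequality therefore gives $\E{d(\widehat{\bestber_\nu}, I_\nu)} \geq d(\E{\widehat{\bestber_\nu}}, I_\nu)$.

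\textbf{Step 2 (relating the distance to the bias).} \cref{thm:feebee-ber} gives $\bestber_\nu = F_\nu(\bestber)$. Since $0 \leq \bestber \leq u_{\ber}$ and $F_\nu$ is increasing, this places $\bestber_\nu$ in $I_\nu$. For any point $a$ of an interval and any real $z$, the triangle inequality gives $|z - a| \leq d(z, I_\nu) + \mathrm{diam}(I_\nu)$. Taking $z = \E{\widehat{\bestber_\nu}}$ and $a = \bestber_\nu$ yields
\[
|\mathrm{bias}_\nu| \leq d(\E{\widehat{\bestber_\nu}}, I_\nu) + U_{\ber}(\nu) - L_{\ber}(\nu).
\]

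\textbf{Step 3 (interval width, the main computation).} The proof of \cref{thm:feebee-ber} shows that $F_\nu(t) = \frac{\lambda_\nu(2\theta(1-\theta)t + \theta^2) - \lambda_\nu(\theta)^2}{2\lambda_\nu(\theta)(1-\lambda_\nu(\theta))}$. Since $\lambda_\nu$ is affine with slope $1-\nu$, it follows that
\[
F_\nu(u_{\ber}) - F_\nu(0) = \frac{(1-\nu)\theta(1-\theta)u_{\ber}}{\lambda_\nu(\theta)(1-\lambda_\nu(\theta))}.
\]
The remaining task is to lower-bound the denominator. Here $\lambda_\nu(\theta) = (1-\nu)\theta + \nu\beta$ is a convex combination of $\theta$ and $\beta$. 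The function $g(p) = p(1-p)$ is concave, so $g(\lambda_\nu(\theta)) \geq (1-\nu)g(\theta) + \nu g(\beta) \geq \min\{g(\theta), g(\beta)\}$. Dividing, the width is at most $(1-\nu)u_{\ber}\,\theta(1-\theta)/\min\{\theta(1-\theta),\beta(1-\beta)\} = \frac{u_{\ber}}{K}(1-\nu)$. I expect this step to be the only nontrivial one, and concavity of $g$ resolves it.

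\textbf{Step 4 (combine).} Steps 1–3 give $\E{d(\widehat{\bestber_\nu}, I_\nu)} \geq |\mathrm{bias}_\nu| - \frac{u_{\ber}}{K}(1-\nu)$. The left-hand side is also nonnegative, so it is bounded below by the positive part of the right-hand side, which is the claim. The AUC version follows identically with $u_{\ber}$ replaced by $1 - l_{\auc}$ and \cref{thm:feebee-auc} in place of \cref{thm:feebee-ber}.
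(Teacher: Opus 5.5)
Your proposal is correct and follows the paper's proof step for step. The steps match: the penalty written as the distance to $I_\nu$, Jensen's inequality, the triangle-inequality bound using $\bestber_\nu \in I_\nu$, and the interval-width computation from $F_\nu$. Your concavity argument for $\lambda_\nu(\theta)(1-\lambda_\nu(\theta)) \geq \min\{\theta(1-\theta), \beta(1-\beta)\}$ also spells out a step the paper states without justification.
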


\begin{proof}
  Let $d(z_1, z_2) = |z_1 - z_2|$
  and $I_{\nu} = [L_{\ber}(\nu), U_{\ber}(\nu)]$.
  Then, the pointwise penalty
  $
    (
    \widehat{\bestber_{\nu}} - U_{\ber}(\nu)
    )_+
    +
    (
    L_{\ber}(\nu) - \widehat{\bestber_{\nu}}
    )_+
  $
  at noise level $\nu$ is precisely
  $
    d(\widehat{\bestber_{\nu}}, I_{\nu}) \coloneqq \inf_{a \in I_{\nu}} d(\widehat{\bestber_{\nu}}, a)
  $.
  By the convexity of $d(\cdot, I_{\nu})$ and Jensen's inequality,
  the expected pointwise penalty is bounded as
  \begin{equation}
    \label{eq:evaluation:bias:jensen}
    \E{d \paren{\widehat{\bestber_\nu}, I_{\nu}}}
    \geq
    d \paren{\E{\widehat{\bestber_\nu}}, I_{\nu}}
    \text{.}
  \end{equation}
  On the other hand,
  since $\bestber_{\nu} \in I_{\nu}$,
  the bias of the estimator $\widehat{\bestber_{\nu}}$ is bounded as follows:\footnote{
    This follows from the following fact:
    let $(\mathcal{Z},d)$ be a metric space
    and let $A \subset \mathcal{Z}$.
    For any $z \in \mathcal{Z}$ and $a \in A$, we have
    $d(z, a) \leq d(z, A) + \mathrm{diam}(A)$, where
    $d(z, A) = \inf_{a' \in A} d(z, a')$ and
    $\mathrm{diam}(A) = \sup_{a_1, a_2 \in A} d(a_1, a_2)$.
  }
  \begin{equation}
    \label{eq:evaluation:bias:bias-bound}
    \abs{\mathrm{bias}_{\nu}}
    =
    d\paren{\E{\widehat{\bestber_\nu}}, \bestber_{\nu}}
    \leq
    d\paren{\E{\widehat{\bestber_\nu}}, I_{\nu}}
    +
    U_{\ber}(\nu) - L_{\ber}(\nu)
    \text{.}
  \end{equation}
  The interval width $U_{\ber}(\nu) - L_{\ber}(\nu)$ is bounded as
  \begin{equation}
    \begin{aligned}[b]
      U_{\ber}(\nu) - L_{\ber}(\nu)
      &=
      F_{\nu}(u_{\ber}) - F_{\nu}(0)
      \\&=
      \frac{
        (1 - \nu)
        \theta (1 - \theta) u_{\ber}
      }{
        \lambda_{\nu}(\theta) (1 - \lambda_{\nu}(\theta))
      }
      \\&\leq
      \frac{
        (1 - \nu)
        \theta (1 - \theta) u_{\ber}
      }{
        \min \set{
          \theta (1 - \theta),
          \beta (1 - \beta)
        }
      }
      \\&=
      \label{eq:evaluation:bias:interval-width-bound}
      \frac{ u_{\ber} }{ K }
      (1 - \nu)
      \text{.}
    \end{aligned}
  \end{equation}
  Combining
  \eqref{eq:evaluation:bias:jensen},
  \eqref{eq:evaluation:bias:bias-bound}, and
  \eqref{eq:evaluation:bias:interval-width-bound},
  we conclude that
  \begin{equation}
    \E{d \paren{\widehat{\bestber_{\nu}}, I_{\nu}}}
    \geq
    \left[
      \abs{\mathrm{bias}_{\nu}}
      -
      \frac{u_{\ber}}{K} (1 - \nu)
    \right]_+
    \text{.}
  \end{equation}
\end{proof}

Thus, for each noise level $\nu$, 
a bias larger than $\frac{u_{\ber}}{K} (1 - \nu)$
produces a positive expected penalty.
Moreover, since $\frac{u_{\ber}}{K} (1 - \nu) \to 0$ as $\nu \to 1$,
any fixed nonzero additive bias that persists across noise levels
is expected to be detected at a sufficiently large noise level.
It is worth noting that this is not a universal guarantee:
a bias that shrinks sufficiently fast as $\nu \to 1$
may remain undetected.

\cref{prop:evaluation:bias} quantifies 
the \emph{pointwise} penalty for each noise level,
and we can use it to discuss the overall score aggregated over different noise levels.
As a simple example, consider an idealized version of $s_{\beta}$,
obtained by replacing the finite average over $N$ noise levels
with an integral (informally, $N \to \infty$):
\begin{equation}
  s_{\beta}
  =
  \int_0^1 d \paren{\widehat{\bestber_{\nu}}, I_{\nu}} \, d\nu
  =
  \int_{0}^{1}
  \left[
    \paren{\widehat{\bestber_{\nu}} - U_{\ber}(\nu)}_+
    +
    \paren{L_{\ber}(\nu) - \widehat{\bestber_{\nu}}}_+
  \right]
  \, d\nu
  \text{.}
\end{equation}
The result above implies that the expected score is bounded as
\begin{equation}
  \E{s_{\beta}}
  \geq
  \int_0^1
  \left[
    \abs{\mathrm{bias}_{\nu}}
    -
    \frac{u_{\ber}}{K} (1 - \nu)
  \right]_+
  \, d\nu
  \text{.}
\end{equation}
For example, 
if $\abs{\mathrm{bias}_{\nu}} \geq B \geq 0$ for all $\nu \in [0, 1)$,
the expected score is lower-bounded
by an increasing function of $B$ as
\begin{equation}
  \label{eq:evaluation:bias:score-lower-bound}
  \E{s_{\beta}}
  \geq
  \begin{cases}
    \frac{K}{2 u_{\ber}} B^2
    & \text{if } B \leq \frac{u_{\ber}}{K} \text{,}
    \\
    B - \frac{u_{\ber}}{2K}
    & \text{otherwise.}
  \end{cases}
\end{equation}

\paragraph{Empirical analysis}

We also present an experiment using synthetic data with
controlled bias.
We drew $n=10{,}000$ points from a one-dimensional
two-component Gaussian mixture with means $-0.5$ and $0.5$,
unit variances, and class prior $\theta=0.5$.
For each point, we computed the clean soft label $\eta_i$
using the closed form obtained from the Gaussian PDF.
By \cref{prop:gaussian}, the true optimal values are
$\bestber = \Phi(-1/2) \approx 0.308538$ and
$\bestauc = \Phi(1/\sqrt{2}) \approx 0.760250$.

We repeated the experiment 100 times.
In each trial, we computed the min formula estimators
at $N=100$ noise levels.
To change the bias without changing the variance,
we defined shifted estimators at each noise level $\nu$ as
$\widehat{\bestber_{\nu}}^{(b)}
\coloneqq \widehat{\bestber_{\nu}}^{(0)} - b$
and
$\widehat{\bestauc_{\nu}}^{(b)}
\coloneqq \widehat{\bestauc_{\nu}}^{(0)} + b$.
Because the same constant is added to or subtracted
from the estimate in every trial,
these changes do not change the variance of either estimator.
We used
$b \in \{0, 0.01, 0.02, 0.05, 0.10, 0.15\}$
and the same samples for every value of $b$.
The parameters $u_{\ber}$ and $l_{\auc}$ were both set to $0.5$.
We computed the scores $s_\beta$
for $\beta = 0.5$
and the average scores over 
$\beta \in \{0.1, \dots, 0.9\}$.

\cref{fig:evaluation:bias} reports averages over the 100 trials
(blue and orange solid curves)
along with the theoretical lower bounds in
\eqref{eq:evaluation:bias:score-lower-bound}
(blue and orange dotted curves).
All standard errors of the scores were below $2.8\times10^{-5}$.
The standard deviations (SDs) of the BER estimates and
AUC estimates were 0.001241 and 0.001370, respectively,
for every value of $b$
(as expected, since the variance was kept constant).
In contrast, the score averaged over $\beta$ increased monotonically with $b$,
from 0 to 0.029953 for BER and from 0 to 0.040368 for AUC.
These results show that, unlike the standard deviations,
our evaluation score successfully reflects
the increase in bias even when
the estimator variance is unchanged.

\begin{figure}[t]
  \centering
  \begin{subfigure}[t]{0.49\textwidth}
    \centering
    \includegraphics[width=\linewidth]{figures/feebee_bias/ber.pdf}
    \caption{BER}
    \label{fig:evaluation:bias:ber}
  \end{subfigure}
  \hfill
  \begin{subfigure}[t]{0.49\textwidth}
    \centering
    \includegraphics[width=\linewidth]{figures/feebee_bias/auc.pdf}
    \caption{AUC}
    \label{fig:evaluation:bias:auc}
  \end{subfigure}
  \caption{
    Our evaluation scores versus injected bias magnitude
    for BER and AUC.
    Points show averages over 100 trials.
    Their 95\% Monte Carlo confidence intervals
    are smaller than the markers.
    The blue curves use $\beta=0.5$,
    whereas the orange curves average the scores
    over $\beta\in\{0.1,\dots,0.9\}$.
    The blue and orange dotted curves show the theoretical lower bounds in
    \eqref{eq:evaluation:bias:score-lower-bound} 
    for the corresponding $\beta$ settings,
    evaluated at $B = b$.
    Both scores increase monotonically with the injected bias
    (although the estimator standard deviations remain unchanged).
  }
  \label{fig:evaluation:bias}
\end{figure}

\section{
  Closed-form expressions for the optimal BER and AUC 
  under Gaussian mixture models with a shared covariance matrix
}
\label{sec:gaussian}

Here, we derive the closed-form expressions for
the optimal BER and AUC
used as reference values in the
Gaussian-mixture experiments of \cref{sec:app:experiments}.

Suppose $\mathcal{X} = \R^{d}$ and that
the class-conditional distributions are given by
Gaussian distributions with a shared covariance matrix:
\begin{equation}
  x \mid \{y=0\} \sim \mathcal{N}(\mu_0, \Sigma)
  \text{,}\quad
  x \mid \{y=1\} \sim \mathcal{N}(\mu_1, \Sigma)
  \text{,}
\end{equation}
where $\mu_0, \mu_1 \in \R^d$ and 
$\Sigma \in \R^{d \times d}$ is a positive definite matrix.
For $k \in \{0, 1\}$, let $p_k(x)$ denote the class-conditional 
density function of $x$ given $y=k$:
\begin{equation}
  p_k(x) = \frac{1}{(2 \pi)^{d/2} \det(\Sigma)^{1/2}}
  \exp\paren{
    -\frac{1}{2} (x - \mu_k)^\top \Sigma^{-1} (x - \mu_k)
  }
  \text{.}
\end{equation}

The goal of this section is to prove the following proposition, 
which gives closed-form expressions for the optimal BER and AUC 
under this setting.

\begin{proposition}
  \label{prop:gaussian}
  Under the shared-covariance Gaussian mixture model described above,
  the optimal BER and AUC are given by
  \begin{equation}
    \bestber = \Phi\paren{- \frac{J}{2}}
    \text{,} \quad
    \bestauc = \Phi\paren{\frac{J}{\sqrt{2}}}
    \text{,}
  \end{equation} 
  where
  $\Phi$ is the cumulative distribution function of 
  the standard normal distribution and
  $J \coloneqq \sqrt{(\mu_1 - \mu_0)^\top \Sigma^{-1} (\mu_1 - \mu_0)}$
  is the Mahalanobis distance between $\mu_{0}$ and $\mu_{1}$.
\end{proposition}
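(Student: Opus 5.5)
We prove the two formulas separately, reducing each to a one-dimensional Gaussian computation through the log-likelihood ratio. Let $\Delta\mu \coloneqq \mu_1 - \mu_0$ and define the linear statistic
\begin{equation}
  s(x) \coloneqq \Delta\mu^\top \Sigma^{-1}\paren{x - \tfrac{\mu_0 + \mu_1}{2}}
  \text{.}
\end{equation}
A direct expansion of the quadratic forms shows $\log \frac{p_1(x)}{p_0(x)} = s(x)$. Hence $\eta(x) = \frac{\theta p_1(x)}{\theta p_1(x) + (1-\theta) p_0(x)}$ is a strictly increasing function of $s(x)$. Since $s$ is linear in $x$, we get $s(x) \mid \{y = 1\} \sim \mathcal{N}(J^2/2, J^2)$ and $s(x) \mid \{y = 0\} \sim \mathcal{N}(-J^2/2, J^2)$.

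\textbf{BER.} By the proof of \cref{lem:bestber}, the optimal classifier is $h(x) = \I{\eta(x) \geq \theta}$. The condition $\eta(x) \geq \theta$ is equivalent to $\frac{p_1(x)}{p_0(x)} \geq 1$, i.e.\ $s(x) \geq 0$; notably, the class prior cancels. We therefore compute $\fnr = \P{s(x) < 0 \mid y=1} = \Phi\paren{-\frac{J^2/2}{J}} = \Phi(-J/2)$, and by symmetry $\fpr = \P{s(x) \geq 0 \mid y = 0} = \Phi(-J/2)$. Averaging the two gives $\bestber = \Phi(-J/2)$.

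\textbf{AUC.} By the proof of \cref{lem:bestauc}, the supremum of $\auc(f)$ is attained by any strictly increasing transform of $\eta$, and in particular by $f = s$. Ties occur with probability zero because $s(x)$ has a continuous law (we assume $J > 0$; if $J = 0$ both sides equal $1/2$ trivially). Thus $\bestauc = \P{s(x_+) > s(x_-)}$ with $x_+$ and $x_-$ independent. The difference $s(x_+) - s(x_-)$ is distributed as $\mathcal{N}(J^2, 2J^2)$, so $\bestauc = \Phi\paren{\frac{J^2}{\sqrt{2} J}} = \Phi(J/\sqrt{2})$.

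The only real bookkeeping is verifying the log-likelihood-ratio identity and the conditional mean and variance of $s$, namely $\Delta\mu^\top\Sigma^{-1}(\mu_1 - \tfrac{\mu_0 + \mu_1}{2}) = J^2/2$ and $\Delta\mu^\top \Sigma^{-1}\Sigma\Sigma^{-1}\Delta\mu = J^2$. The other step needing care is justifying that maximizing over strictly increasing transforms of $\eta$ is the same as using $s$ directly. That follows from the invariance of AUC under strictly monotone transformations of the score.
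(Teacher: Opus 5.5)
Your proof is correct and follows essentially the same route as the paper. Both reduce to the log-likelihood ratio $s(x)$ with class-conditional laws $\mathcal{N}(\pm J^2/2, J^2)$, read off $\fpr = \fnr = \Phi(-J/2)$ at the threshold $s \geq 0$, and use $s$ as an optimal score with $s(x_+) - s(x_-) \sim \mathcal{N}(J^2, 2J^2)$. The one small difference is that the paper treats $J = 0$ as a separate case: there $h \equiv 1$, so $\fpr = 1$, $\fnr = 0$, and $\bestber = 1/2 = \Phi(0)$. Your BER paragraph divides by $J$, so your parenthetical about $J = 0$ should also be placed before the BER computation, not only in the AUC part.
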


To prove \cref{prop:gaussian},
define the score function $f^{*}: \mathcal{X} \to \R$ as
\begin{equation}
  \label{eq:gaussian:llr}
  f^{*}(x)
  \coloneqq
  \log \frac{p_1(x)}{p_0(x)} 
  =
  (\mu_1 - \mu_0)^\top \Sigma^{-1} \paren{x - \frac{\mu_0 + \mu_1}{2}}
  \text{.}
\end{equation}

\begin{proof}
  We first consider the nondegenerate case $J>0$.
  We begin by proving the formula for $\bestber$.
  We saw in the proof of \cref{lem:bestber} that the BER-optimal
  classifier is given by $h_{\ber}^{*}(x) = \I{\eta(x) \geq \theta}$.
  Since the posterior probability $\eta(x)$ can be expressed as
  \begin{equation}
    \eta(x)
    =
    \frac{\theta p_1(x)}{\theta p_1(x) + (1 - \theta) p_0(x)}
    \text{,}
  \end{equation}
  the condition $\eta(x) \geq \theta$ is equivalent to
  $p_1(x) \geq p_0(x)$, which is in turn equivalent to
  $f^{*}(x) \geq 0$.
  The class-conditional distributions of the score $f^{*}(x)$ are given by
  \begin{equation}
    \label{eq:gaussian:score-dist}
    f^{*}(x) \mid \{y=0\} \sim \mathcal{N}\paren{-\frac{J^2}{2}, J^2}
    \text{,}\quad
    f^{*}(x) \mid \{y=1\} \sim \mathcal{N}\paren{\frac{J^2}{2}, J^2}
    \text{.}
  \end{equation}
  Therefore, the within-class error rates of $h_{\ber}^{*}$ are given by
  \begin{equation}
    \begin{aligned}
      \fpr(h_{\ber}^{*})
      &=
      \P{f^{*}(x) \geq 0 \mid y=0}
      =
      \Phi\paren{- \frac{J}{2}}
      \text{,}
      \\
      \fnr(h_{\ber}^{*})
      &=
      \P{f^{*}(x) < 0 \mid y=1}
      =
      \Phi\paren{- \frac{J}{2}}
      \text{,}
    \end{aligned}
  \end{equation}
  and hence the optimal BER is given by
  \begin{equation}
    \bestber
    = \frac{1}{2} \paren{\fpr(h_{\ber}^{*}) + \fnr(h_{\ber}^{*})}
    = \Phi\paren{- \frac{J}{2}}
    \text{.}
  \end{equation}

  Next, we prove the AUC case.
  In the proof of \cref{lem:bestauc}, we showed that
  the maximum AUC is attained by choosing a scoring function to be
  any strictly increasing transformation of $\eta(x)$.
  The scoring function $f^{*}$ defined in \eqref{eq:gaussian:llr}
  is an instance of such scores; to see this, note that
  \begin{equation}
    f^*(x) = \log \frac{\theta^{-1} - 1}{\eta(x)^{-1} - 1}
    \text{,}
  \end{equation}
  which is strictly increasing in $\eta(x)$.
  Thus, $f^*$ achieves the maximum AUC.

  Now, let
  $x_+ \sim \P{x \mid y=1}$ and
  $x_- \sim \P{x \mid y=0}$
  be independent.
  Using the class-conditional distributions of $f^*(x)$
  given in \eqref{eq:gaussian:score-dist}, we obtain
  \begin{equation}
    f^*(x_+) - f^*(x_-)
    \sim
    \mathcal{N}\paren{J^2, 2J^2}
  \end{equation}
  and hence the maximum AUC is given by
  \begin{equation}
    \begin{aligned}
      \bestauc
      &=
      \P{f^*(x_+) > f^*(x_-)}
      +
      \frac{1}{2} \P{f^*(x_+) = f^*(x_-)}
      \\ &=
      \P{f^*(x_+) - f^*(x_-) > 0} + 0
      \\ &=
      \Phi\paren{\frac{J}{\sqrt{2}}}
      \text{.}
    \end{aligned}
  \end{equation}

  It remains to consider the degenerate case $J=0$,
  in which $f^*$ remains optimal.
  Since $\Sigma$ is positive definite,
  $J=0$ implies $\mu_0=\mu_1$
  and that the optimal classifier always predicts
  $h^*(x) = \I{f^*(x) \geq 0} = \I{0 \geq 0} = 1$.
  Hence, $\bestber = (1 + 0) / 2 = \Phi(-J/2)$.
  Similarly, we have $\bestauc=1/2=\Phi(J/\sqrt{2})$
  when $J=0$ because $f^*(x_+) - f^*(x_-) = 0 - 0 = 0$ almost surely.
  This completes the proof.
\end{proof}

\section{Experiments}
\label{sec:app:experiments}

Here, we provide
details on the experiments we conducted
in \cref{sec:experiments}.
Our experiment code is based on
that of \citet{ushio2025practical},
available at \url{https://github.com/RyotaUshio/bayes-error-estimation}
and licensed under the MIT License.

\subsection{Approximately monotone corruption}
\label{sec:experiments:order-breakage}

Here, we present an additional experiment with synthetic data
to investigate how
the performance of our isotonic-regression-based estimator
degrades when the corruption is only approximately monotone.

\subsubsection{Experimental setup}

Following the experimental protocol in 
Appendix~D.5 of \citet{ushio2025practical}, 
we isolated order breakage by starting from 
the under-confident inverse-beta-calibration map $f$
of \eqref{eq:inverse-beta}
with parameters $(a,b)=(1.5,0.5)$ 
and adding Gaussian noise on the logit scale:
\begin{equation}
  \xi_i
  =
  \operatorname{sigmoid}
  \left(
  \operatorname{logit}(f(\eta_i)) + z_i
  \right)
  \text{,} \quad
  z_i \sim \mathcal{N}(0, \sigma^2)
\end{equation}
Consequently, 
the corruption used here is nearly monotone but not exactly monotone. 
The parameter $\sigma$ controls how far 
the corruption departs from monotonicity.

We used the same two-dimensional Gaussian mixture 
as in \cref{sec:experiments:estimation}.
We used a grid $\sigma\in\{0,0.05,0.1,0.2,0.4,0.8,1.6\}$.
For each $\sigma$, we used the proposed isotonic-regression-based estimator 
to compute the $\bestber$ and $\bestauc$ estimates. 
We also computed
the Kendall rank correlation $\tau$
and
$q=(1-\tau)/2$.
The quantity $q$ corresponds to
the fraction of reversed pairs.
We repeated each setting on 30 independently sampled datasets. 
By \cref{prop:gaussian},
the ground-truth optimal values are 
$\bestber = \Phi(-\sqrt{2})=0.078650$
and 
$\bestauc = \Phi(2)=0.977250$.

\subsubsection{Results}

\begin{figure}[t]
  \centering
  \begin{subfigure}[t]{0.49\textwidth}
    \centering
    \includegraphics[width=\linewidth]{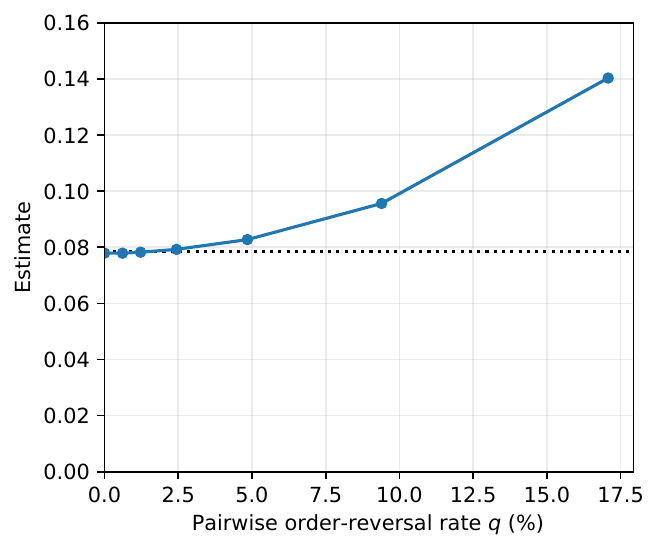}
    \caption{BER}
    \label{fig:order-breakage:ber}
  \end{subfigure}
  \hfill
  \begin{subfigure}[t]{0.49\textwidth}
    \centering
    \includegraphics[width=\linewidth]{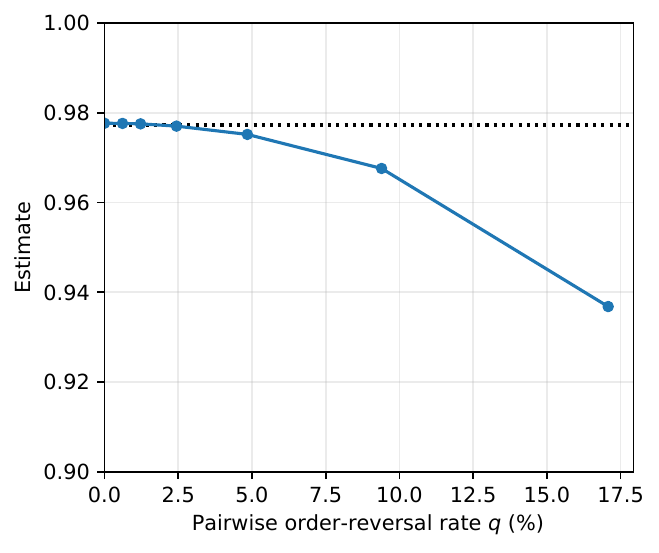}
    \caption{AUC}
    \label{fig:order-breakage:auc}
  \end{subfigure}
  \caption{
    Estimates of $\bestber$ and $\bestauc$ 
    under increasing violations of order preservation.
    The horizontal axis represents the mean pairwise order-reversal rate
    $q=(1-\tau)/2$.
    The points and error bars indicate means and 95\% Monte Carlo CIs over
    30 independent datasets.
    Note that the error bars are smaller than the marker size for most points.
    The dotted lines indicate the ground-truth optimal values.
  }
  \label{fig:order-breakage}
\end{figure}

\Cref{fig:order-breakage} shows the results.
The estimates remain close to the ground-truth values
as the pairwise order-reversal rate increases to $2.44\%$.
Over this range,
the 95\% Monte Carlo confidence intervals (CIs) contain the ground-truth values.
At $q=4.85\%$ or higher,
no CI contains the corresponding population value,
and the deviations become substantially larger as $q$ increases.
Overall,
introducing a small amount of rank violation
translated to a mild increase in estimation error
although
the error increased progressively as the order-reversal rate grew.

\subsection{Compute resources}
\label{sec:compute-resources}

All of the experiments presented in this paper
were conducted on the CPU of a single Apple MacBook Pro 
(M1 chip, 16GB RAM).

\makeatletter
\if@preprint\else
\clearpage
\input{checklist.tex}
\fi
\makeatother

\end{document}